\documentclass[11pt,reqno]{amsart}

\usepackage{verbatim,hyperref}
\usepackage{xurl}

\usepackage{accents,xcolor,soul,bbm,graphicx,MnSymbol,nicefrac,enumitem}
\usepackage[capitalize,nameinlink,noabbrev]{cleveref}

\crefname{enumi}{item}{items}
\crefname{equation}{}{}
\crefname{subsection}{Subsection}{Subsections}
\crefname{figure}{Figure}{Figures}

\usepackage[sort]{cite}

\hypersetup{
    colorlinks,
     linkcolor={blue!80!black},
    citecolor={green},
    urlcolor={blue!80!black}
}

\usepackage{thmtools}

\usepackage[margin=1in]{geometry}
\usepackage[most]{tcolorbox}

\makeatletter
\AtBeginEnvironment{tcolorbox}{%
   \def\@mpfn{footnote}%
}
\makeatother

\newcommand{\red}{\color[rgb]{1,0,0}}

\newcommand{\eps}{\varepsilon}
\newcommand{\dist}{\operatorname{dist}}

\newcommand{\scp}[2]{\langle #1, #2 \rangle}

\newcommand{\cF}{\mathcal F}

\newcommand{\cU}{\mathcal U}
\newcommand{\cV}{\mathcal V}

\newcommand{\grad}{\nabla}

\newcommand{\dd}{\mathrm{d}}

\newcommand{\bas}[1]{\begin{align}\begin{split} #1 \end{split}\end{align}}

\newcommand{\cov}{\mathrm{cov}}

\newcommand{\1}{1\hspace{-0.098cm}\mathrm{l}}

\renewcommand{\P}{{\mathbb P}}

\newcommand{\N}{{\mathbb N}}

\newcommand{\E}{{\mathbb E}}

\newcommand{\R}{{\mathbb R}}

\newcommand{\Hess}{\text{Hess}\,}

\theoremstyle{plain}
\newtheorem{theorem}{Theorem}[section]
\newtheorem{prop}[theorem]{Proposition}
\newtheorem{lemma}[theorem]{Lemma}

\newtheorem{defi}[theorem]{Definition}

\theoremstyle{definition}
\newtheorem{rem}[theorem]{Remark}
\newtheorem{exa}[theorem]{Example}

\makeatletter
\@namedef{subjclassname@2020}{%
	\textup{2020} Mathematics Subject Classification}
\makeatother

\makeatletter
\ExplSyntaxOn
\seq_new:N \g_abbrs
\prop_new:N \g_abbr_counts
\tl_new:N \l_abbr_count_tl

\ExplSyntaxOn

\bool_new:N \g_forexample

\NewDocumentCommand{\eg}{ o }{
	\IfValueT{#1}{
		\str_if_eq:noTF {fe} {#1} {
			\bool_gset_true:N \g_forexample
		} {\bool_gset_false:N \g_forexample}
	}
	\bool_if:nTF { \g_forexample } {
		\bool_gset_false:N \g_forexample
		for~example
	}{
		\bool_gset_true:N \g_forexample
		for~instance
	}
}

\NewDocumentCommand{\abbr}{m m O{#1} m m O{#4} m}{
	\expandafter\newcommand\csname#3\endcsname[1][]{
		\seq_if_in:NnTF \g_abbrs {#1} {
			\prop_get:NnN \g_abbr_counts {#1} \l_abbr_count_tl
			\prop_gput:Nnx \g_abbr_counts {#1} {\int_eval:n {\l_abbr_count_tl + 1}}
			\hyperref[#1]{#7}
		} {
			\seq_gput_left:Nn \g_abbrs {#1}
			\prop_gput:Nnn \g_abbr_counts {#1} {1}
			\expandafter\gdef\csname#1@def\endcsname{#2}
			\phantomsection\label{#1}
			\str_if_eq:nnTF{##1}{}{\emph{#2}}{\emph{##1}}~(\hyperref[#1]{#7})
		}
	}
	\expandafter\newcommand\csname#6\endcsname[1][]{
		\seq_if_in:NnTF \g_abbrs {#1} {
			\prop_get:NnN \g_abbr_counts {#1} \l_abbr_count_tl
			\prop_gput:Nnx \g_abbr_counts {#1} {\int_eval:n {\l_abbr_count_tl + 1}}
			\hyperref[#1]{#4}
		} {
			\expandafter\gdef\csname#1@def\endcsname{#5}
			\seq_gput_left:Nn \g_abbrs {#1}
			\prop_gput:Nnn \g_abbr_counts {#1} {1}
			\phantomsection\label{#1}
			\str_if_eq:nnTF{##1}{}{\emph{#5}}{\emph{##1}}~(\hyperref[#1]{#4})
		}
	}
}

\ExplSyntaxOff

\makeatother
\abbr{ODEs}{ordinary differential equations}{ODE}{ordinary differential equation}{ODEs}
\abbr{SDEs}{stochastic differential equations}{SDE}{stochastic differential equation}{SDEs}
\abbr{GD}{gradient descent}{GDs}{gradient descents}{GD}
\abbr{ANN}{artificial neural network}{ANNs}{artificial neural networks}{ANN}
\abbr{RMSprop}{root mean square propagation \SGD}{RMSprops}{the root mean square propagation \SGD}{RMSprop}
\abbr{PL}{Polyak--\L{}ojasiewicz}{PLs}{Polyak--\L{}ojasiewiczs}{PL}
\abbr{Adam}{adaptive moment estimation \SGD}{Adams}{adaptive moment estimation \SGD}{Adam}
\abbr{AdamW}{\Adam\ with decoupled weight decay}{AdamWs}{\Adam\ with decoupled weight decays}{AdamW}
\abbr{Adagrad}{adaptive gradient \SGD}{Adagrads}{adaptive gradient \SGD}{Adagrad}
\abbr{SOP}{stochastic optimization problem}{SOPs}{stochastic optimization problems}{SOP}
\abbr{OP}{optimization problem}{OPs}{optimization problems}{OP}
\abbr{DNN}{deep artificial neural network}{DNNs}{deep artificial neural networks}{DNN}
\abbr{SGD}{stochastic gradient descent}{SGDs}{stochastic gradient descent}{SGD}
\abbr{iid}{independent and identically distributed}{i.i.d}{independent and identically distributed}{i.i.d.}
\abbr{DL}{deep learning}{DLs}{deep learning}{DL}
\abbr{AI}{artificial intelligence}{AIs}{artificial intelligence}{AI}
\abbr{LLM}{large language model}{LLMs}{large language models}{LLM}
\abbr{PDE}{partial differential equation}{PDEs}{partial differential equations}{PDE}
\abbr{as}{almost surely}{ass}{almost surely}{a.s.}
\abbr{pas}{$\P$-almost surely}{pass}{almost surely}{$\P$-a.s.}
\abbr{ReLUs}{rectified linear unit}{ReLU}{rectified linear unit}
\makeatother

\begin{document}
\renewcommand{\epsilon}{\varepsilon}

\title[RMSprop with full control of hyperparameters]%
{Convergence rates for the RMSprop optimizer\\ with full control of the hyperparameters}

\author[]
{Steffen Dereich}
\address{Steffen Dereich\\
    Institute for Mathematical Stochastics, Faculty of Mathematics and Computer Science,
    University of M\"{u}nster, Germany}
\email{steffen.dereich@uni-muenster.de}

\author[]
{Arnulf Jentzen}
\address{Arnulf Jentzen\\
    School of Data Science and School of Artificial Intelligence,
    The Chinese University of Hong Kong, Shenzhen (CUHK-Shenzhen),
    China \&
    Applied Mathematics: Institute for Analysis and Numerics,
    Faculty of Mathematics and Computer Science,
    University of M\"{u}nster, Germany}
\email{ajentzen@cuhk.edu.cn; ajentzen@uni-muenster.de}

\keywords{RMSprop, adaptive, stochastic gradient descent, SGD, stochastic approximation, stochastic optimization, Kurdyka--\L{}ojasiewicz inequality, hyperparameters}
\subjclass[2020]{Primary 65K10; Secondary 90C15, 90C26, 60G42}

\begin{abstract}
\emph{Stochastic gradient descent} (SGD) optimization methods
are the standard methods to train \emph{artificial intelligence} (AI) systems.
Often the standard SGD method is not the used training algorithm
but instead one considers suitable adaptive variants of SGD
in which the learning rates are for each network parameter separately
adjusted adaptively during the training process.
Popular adaptive SGD methods include the RMSprop, the Adam, and the AdamW optimizers,
where the adaptivity parts in Adam and AdamW basically just coincide with RMSprop.

Such adaptive methods involve several \emph{hyperparameters} including
the regularization parameter $ \varepsilon $
(which ensures that one does not divide by zero during the training process
and is often chosen to be very close to zero such as $ 10^{ - 8 } $ in PyTorch by default)
as well as the second moment decay parameter $ \beta $ (which is often chosen to be very close to $1$
such as $ 0.99 $ (RMSprop) and $ 0.999 $ (Adam and AdamW) in PyTorch by default).
Despite the extensive literature on the theoretical analysis
of such adaptive methods, it remains an open research problem to provide
error estimates for such methods with the error constants
being provably \emph{not exploding but uniformly bounded
with the respect to the hyperparameters} approaching the critical regions, even in the situation of
convex stochastic optimization problems (SOPs).

It is the key contribution of this work to essentially solve this problem for the RMSprop optimizer.
Specifically, our main result provides bounds the expectation of the stopped evaluation
of the objective function at the RMSprop optimization process
from above by the sum of an initialization term that decays exponentially in the training time,
a stochastic approximation remainder of order $ \gamma_n $,
and a memory error of order $ ( 1 - \beta)^2 $
with the error constants being uniformly controlled over all admissible
choices of the step sizes $ ( \gamma_n )_{ n \in \N } $,
the second moment decay parameter $ \beta $
and the regularization parameter $ \varepsilon \in [0,1] $.
In particular, we emphasize that our error analysis also directly
covers the \emph{unregularized regime} in which the regularity parameter vanishes
$ \varepsilon = 0 $.
The non-asymptotic optimization error estimates that we establish
hold not just for all sufficiently large number of gradient steps
but instead hold for every gradient step $ n \in \N = \{ 1, 2, 3, \dots \} $
(starting at the very first gradient step)
with \emph{all error constants being explicitly specified}.

The key innovative new feature in the proof of our error analysis
are suitable \emph{inverse moment bounds} for the second moment process
in the RMSprop method.
We also illustrate the general theory of this work in the case
of several example SOPs.
\end{abstract}

\maketitle
\pagebreak
\tableofcontents
\section{Introduction}

\AI[Artificial Intelligence]\ systems
are trained by means of \SGD\ optimization methods \cite{Ruder2016arXiv}.
The standard \SGD\ optimization method is often not the employed training
scheme for large scale \AI\ systems but instead one often considers
suitable adaptive variants of standard \SGD\ such as the \RMSprop~\cite{TielemanHinton2012},
the \Adam~\cite{KingmaBa2014_Adam}, and the \AdamW~\cite{LoshchilovHutter2017_arXiv} optimizers \cite{JentzenBookDeepLearning2023}.

In the \RMSprop\ method essentially every coordinate of the update is divided by
the square root of a geometric average of the square of all previously seen stochastic gradients
and the adaptivity parts in \Adam\ and \AdamW\ are essentially nothing else but the \RMSprop\ method.
Such adaptive \SGD\ optimization methods are based on several \emph{hyperparameters}
in the optimizers including
\begin{itemize}
\item
the sequence of learning rates $ ( \gamma_n )_{ n \in \N } $ (the sequence of step sizes),
\item
the regularitzation parameter $ \varepsilon $
(which ensures that one does not divide by zero in each update)
as well as
\item
the second moment decay parameter $ \beta $
(which, loosely speaking, models how much weight the recent stochastic gradients
get within the geometric average of the square of all previously seen stochastic gradients).
\end{itemize}
The regularization parameter $ \varepsilon $ is often chosen to be very close to zero
such as $ 10^{ - 8 } $ in PyTorch by default
and the second moment decay parameter $ \beta $ is often chosen to very close to $ 1 $
such as 0.99 (\RMSprop) and 0.999 (\Adam\ and \AdamW) in PyTorch by default.

Despite the already existing extensive literature on the theoretical analysis
of such adaptive \SGD\ optimization methods
and the high importance of such methods,
it remains an open problem of research to prove (or disprove) rigorous error estimates
for such optimizers with the error constants being provably not exploding
but \emph{uniformly bounded with the respect to the hyperparameters} approaching the critical regions,
even in the situation of strongly convex \SOPs.

It is the key contribution of this work to essentially solve this problem for the \RMSprop\ optimizer.
Specifically, in our main result in \cref{thm:combined-early-late-numerical-time} below
we bound the expectation of a stopped evaluation of the objective function
at the \RMSprop\ optimization process (which controls the squared mean square strong optimization error
in the special case of strongly convex \SOPs)
from above by the sum of
\begin{itemize}
 \item
an initialization term that decays exponentially in the training time $ t_n = \sum_{ k = 1 }^n \gamma_k $,
\item
a stochastic approximation remainder of order $ \gamma_n $,
and
\item
a memory error of order $ ( 1 - \beta)^2 $
\end{itemize}
with the error constants being \emph{uniformly controlled} over all admissible
choices of the step sizes $ ( \gamma_n )_{ n \in \N } $,
the second moment decay parameter $ \beta $
and the regularization parameter $ \varepsilon \in [0,1] $.
Particularly, we highlight that our error analysis also
directly covers the \emph{unregularized regime}
in which the regularity parameter completely vanishes
$ \varepsilon = 0 $.

\subsection{The RMSprop method}

In order to formulate \cref{thm:combined-early-late-numerical-time},
we present the considered mathematical setup and we briefly recall the \RMSprop\ method.

Let $ d \in \N = \{ 1, 2, 3, \dots \} $,
let $ ( \mathcal{U}, \mathbb{U} ) $ be a measurable space,
let $ X \colon \R^d \times \mathcal U \to \R^d $ be product measurable,
let $ ( \Omega, \mathcal{F}, \P ) $ be a probability space,
and let $ U $ be an $ \cU $-valued random variable.
We call the pair $ ( X, U ) $ an \emph{innovation}.
The \RMSprop\ algorithm driven by $ (X, U) $ depends on the
following additional parameters:
\begin{enumerate}[label=(\roman*)]
\item $ \beta \in [0,1) $ (\emph{second moment decay parameter}), $ \epsilon \in [0,\infty) $ (\emph{regularization parameter}),
\item a non-increasing $(0,\infty)$-valued sequence $(\gamma_{n})_{n\in\N}$ (\emph{step-sizes}/\emph{learning rates}), and
\item $ \theta_{0}\in\R^{d}$ (\emph{initial state}/\emph{value}).
\end{enumerate}
Let $ U_n $, $ n \in\N $, be independent copies of $ U $
and let $ ( \cF_n )_{ n \in \N_0 } $
be the filtration generated by these copies, that is,
for every $ n \in \N_0 $ let
\bas{
  \cF_n
  =
  \sigma( U_1, U_2, \dots, U_n ) .
}
The associated \RMSprop\ algorithm is the $ ( \cF_n )_{ n \in \N_0 } $-adapted process
$(\theta_n)_{n\in\N_0}$ defined recursively, for every $ n \in \N $, by
\bas{
\label{eq:def_RMSprop}
  \theta_n =
  \theta_{ n - 1 }
  + \gamma_n [ \eps + \sqrt{ v_n } ]^{ - 1 }  X (\theta_{ n - 1 }, U_n ) ,
}
where all operations are applied coordinatewise to vectors and where, for every $ n \in \N $, we have that
\bas{
\label{eq:def_RMSprop_vn}
v_n =\frac{1-\beta}{1-\beta^n}\sum_{k=0}^{n-1} \beta^k X(\theta_{n-1-k}, U_{n-k})^2 .
}
Equivalently, we have for every $ n \in \N $ that
\bas{
\label{eq:v_n_rec}
  v_n=\frac{\beta(1-\beta^{n-1})}{1-\beta^n}v_{n-1}+\frac{1-\beta}{1-\beta^n}X(\theta_{n-1},U_n)^2.
}
where $ \theta_0 $ is the predetermined initial state and where $ v_0 = 0 \in \R^d $.
If $ \eps = 0 $, all coordinatewise normalized innovations are understood with the convention
\bas{
\label{eq:zero-normalization-convention}
  0^{ - 1 } 0 = \frac{0}{0}=0.
}
%
%
We call the mapping $ f_X \colon \R^d \to \R^d $ given, for every $ \theta \in \R^d $, by
$
  f_X(\theta)= \E[X(\theta,U)]
$
the \emph{vector field associated with the innovation $(X,U)$}.
%
%

A key hypothesis in our error analysis is the following \emph{non-degeneracy assumption},
which we formalize in \cref{def:regular} below.
For suitable $ c, q > 0 $ we have for every $ i \in \{ 1, 2, \dots, d \} $ that
the
Laplace transform of $X^{(i)}(\theta,U)^2$ satisfies
\bas{
  \E\bigl[e^{-\lambda X^{(i)}(\theta,U)^2}\bigr]
  \le
  \exp( - \min\{ c \lambda , q \}
  ) .
}
Loosely speaking, this hypothesis ensures for every component of the innovation
that the probability that this component vanishes is not too large.

In our error analysis result for the \RMSprop\ method in \cref{thm:combined-early-late-numerical-time} below
we employ this assumption together with locally $ L^{ \infty } $-bounded innovations and a local
\PL\ inequality (see \cref{eq:combined-main-step-smallness} below).
The additional condition in \eqref{eq:combined-main-step-smallness} prevents abrupt
decreases of the step-size when measured with respect to the training time $ t_n = \sum_{ k = 1 }^n \gamma_k $
for every $ n \in \N_0 $. The condition in \eqref{eq:combined-main-step-smallness}
is satisfied, in particular, in the situation of the usual polynomial learning rate schedules.

\subsection{Main result: Error analysis for RMSprop with full control of the hyperparameters}
\label{ssec:main_result}

The following non-asymptotic error analysis result for the \RMSprop\ method
is a consequence of the general recursive error estimate in \cref{thm:combined-early-late-bounded},
which is stated and proved in \cref{sec:combined-startup-estimate} below.
In \cref{thm:linear-least-squares} in \cref{ssec:example} below we illustrate
the conclusion of \cref{thm:combined-early-late-numerical-time}
in the example of linear least squares (example application of \cref{thm:combined-early-late-numerical-time}).

\begin{tcolorbox}[colback=white!95!gray,
                  colframe=black,
                  boxrule=0.5pt,
                  sharp corners,
                  enhanced,
                  breakable,
                 ]
\begin{theorem}[\textcolor{magenta}{Error analysis for \RMSprop\ with full control of the hyperparameters}]
\label{thm:combined-early-late-numerical-time}
Let $\rho,c,q,C_X,L_f,C_{\mathrm{Loj}},C_{\mathrm{dec}}
\in(0,\infty)$. Set
\bas{\label{eq:combined-main-eta}
\textcolor{magenta}{
  \eta
  =
  [
    2 C_{ \mathrm{Loj} } ( C_X + 1 )
  ]^{ - 1 }
}
  .
}
Then there exist $ C_0, C_1 \in (0,\infty) $ such that the following is true.
Let $ \beta \in [ \nicefrac{ 1 }{ 2 }, 1 ) $,
$ \eps \in [ 0, 1 ] $ satisfy
\bas{
\textcolor{magenta}{
  q+3\ln\beta\ge\rho
}
  .
}
Let $V\subseteq\R^d$ be measurable, let $(X,U)$ be $(c,q)$-regular on
$V$ in the sense of \cref{def:regular} and assume for every
$\theta\in V$, $i\in\{1,\dots,d\}$ that
\bas{\label{eq:combined-main-boundedness}
\textcolor{magenta}{
  \| X^{ (i) }( \theta, U ) \|_{ L^{ \infty } } \le C_X
}
  .
}
Let $ F \colon \R^d \to [0,\infty) $ have a globally $ L_f $-Lipschitz continuous
derivative and assume for every $\theta\in V$ that
\bas{
\textcolor{magenta}{
  f_X(\theta)=-\grad F(\theta)
}
\qquad
\text{and}
\qquad
\textcolor{magenta}{
  F(\theta)\le C_{\mathrm{Loj}}|f_X(\theta)|^2
}
  .
}
Let $ ( \gamma_n )_{ n \in \N } $ be decreasing and $ (0, C_{ \mathrm{Loj} } (C_X + \eps) ] $-valued and
assume for every $ j, n \in \N $ with $ 1 < j \le n $ that
\bas{\label{eq:combined-main-step-smallness}
\textcolor{magenta}{
  \gamma_j
  \le
  \gamma_n
  C_{\mathrm{dec}}
  \exp\bigl(
  \textstyle
    \frac{ \eta }{ 2 } \sum_{ k = j + 1 }^n \gamma_k
  \bigr)
}
  .
}
Let $ \theta_0 \in \R^d $ and let $(\theta_n)_{n\in\N_0}$ be the \RMSprop\
algorithm driven by $(X,U)$ with parameter tuple
$(\beta,\eps,(\gamma_n)_{n\in\N},\theta_0)$ defined above.
Define
$
  \tau = \inf\{ n \in \N_0 \colon \theta_n \notin V \}
$.
Then one has for every $ n \in \N $ that
\bas{
\label{eq:combined-early-late-numerical-time}
\textcolor{magenta}{
  \E\bigl[
    F(\theta_n)
    \1_{ \{ \tau \ge n \} }
  \bigr]
\le
  \exp\bigl(
  \textstyle
    - \eta \sum_{ k = 2 }^n \gamma_k
  \bigr)
  \,
  \bigl[
    F( \theta_0 )^{ 1 / 2 }
    +
    \gamma_1
    ( 2^{ - 1 } L_f d )^{ 1 / 2 }
  \bigr]^2
  + C_0 \gamma_n
  + C_1 ( 1 - \beta)^2
}
  .
}
\end{theorem}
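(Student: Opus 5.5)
The plan is to derive \cref{thm:combined-early-late-numerical-time} from a one-step recursive inequality for the stopped quantity $e_n = \E[F(\theta_n)\1_{\{\tau\ge n\}}]$, and then unroll that recursion using the step-size regularity \eqref{eq:combined-main-step-smallness}.

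\textbf{Step 1: one-step descent.} On $\{\tau \ge n\}$ we have $\theta_{n-1}\in V$. The $L_f$-Lipschitz gradient gives
\begin{align*}
F(\theta_n) \le F(\theta_{n-1}) - \gamma_n \scp{\grad F(\theta_{n-1})}{D_n X(\theta_{n-1},U_n)} + \tfrac{L_f}{2}\gamma_n^2 |D_n X(\theta_{n-1},U_n)|^2,
\end{align*}
where $D_n = \mathrm{diag}([\eps+\sqrt{v_n}]^{-1})$. The quadratic term is at most $\frac{L_f d}{2}\gamma_n^2$, because $v_n \ge \frac{1-\beta}{1-\beta^n}X(\theta_{n-1},U_n)^2$ forces each normalized coordinate to be bounded by $(\frac{1-\beta^n}{1-\beta})^{1/2}$. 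In particular, for $n=1$ each normalized coordinate has modulus at most one. This is exactly what produces the explicit term $\gamma_1(2^{-1}L_f d)^{1/2}$ after taking square roots. For the linear term, I replace $v_n$ by the $\cF_{n-1}$-measurable proxy $\tilde v_n = \frac{\beta(1-\beta^{n-1})}{1-\beta^n} v_{n-1}$ (or by $\beta v_{n-1}$ plus the mean of $X^2$). Writing $\tilde D_n$ for the corresponding diagonal matrix, conditional expectation gives the main drift $-\gamma_n \scp{\grad F}{\tilde D_n f_X} = -\gamma_n \sum_i (\partial_i F)^2/(\eps+\sqrt{\tilde v_n^{(i)}})$. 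Since $\tilde v_n^{(i)} \le C_X^2$, this is at most $-\gamma_n |\grad F|^2/(C_X+\eps) \le -\gamma_n F/(C_{\mathrm{Loj}}(C_X+1))$ by the \PL\ inequality, which yields the rate $2\eta$. Half of it is kept as $\eta$, and the other half absorbs cross terms.

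\textbf{Step 2: the error terms from the proxy.} The difference $D_n - \tilde D_n$ is of size $(1-\beta)X^2$ divided by roughly $(\eps+\sqrt{v})^3$. Bounding its contribution requires negative moments $\E[(v_n^{(i)})^{-p}\mid\cdot]$ that stay uniform in $\eps\in[0,1]$. This is where $(c,q)$-regularity and the condition $q+3\ln\beta\ge\rho$ enter. Writing $v_n$ as a geometric sum of independent-given-the-past squared innovations and using $x^{-p} = \Gamma(p)^{-1}\int_0^\infty \lambda^{p-1}e^{-\lambda x}\,\dd\lambda$, the Laplace bound $\exp(-\min\{c\lambda,q\})$ applied iteratively through the tower property gives inverse moment bounds of order $(1-\beta)^{-p}$-free constants for late times. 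The condition on $q$ and $\beta$ guarantees the geometric products $\beta^{-3k}e^{-qk}$ are summable. Combining these bounds with Young's inequality ($ab\le \delta a^2+b^2/\delta$ against $\gamma_n|\grad F|^2$), the proxy error splits into a piece absorbed into the drift, a piece of order $\gamma_n^2$, and a piece of order $\gamma_n(1-\beta)^2$.

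\textbf{Step 3: unrolling.} Using $\1_{\{\tau\ge n\}}\le \1_{\{\tau\ge n-1\}}$ and $F\ge0$, the previous steps give
\begin{align*}
e_n \le (1-\eta\gamma_n)e_{n-1} + A\gamma_n^2 + B\gamma_n(1-\beta)^2
\end{align*}
for $n\ge2$, together with a separate square-root-type bound for $n=1$. Iterating with $1-x\le e^{-x}$, the initial term decays like $\exp(-\eta\sum_{k=2}^n\gamma_k)$. The $(1-\beta)^2$ sum is bounded by $\sum_j \gamma_j e^{-\eta\sum_{k>j}\gamma_k}\lesssim 1/\eta$. For the $\gamma$ sum, I apply \eqref{eq:combined-main-step-smallness} in the form $\gamma_j^2\le \gamma_n C_{\mathrm{dec}}\gamma_j e^{\frac\eta2\sum_{k>j}\gamma_k}$, leaving $\gamma_n C_{\mathrm{dec}}\sum_j\gamma_j e^{-\frac\eta2\sum_{k>j}\gamma_k}\lesssim \gamma_n$. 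This produces $C_0$ and $C_1$.

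\textbf{Main obstacle.} The hardest part is Step 2, specifically the early-time regime where $v_n$ has few terms and $\frac{1-\beta}{1-\beta^n}$ is not small. There the proxy error is not of order $(1-\beta)$, and I would treat an initial phase $n\lesssim (1-\beta)^{-1}$ separately, presumably using the uniform bound on the normalized coordinates noted in Step 1. I expect this to be the content of the startup estimate in \cref{thm:combined-early-late-bounded}, which I would invoke for the recursion and then specialize as above.
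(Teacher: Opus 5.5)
There is a genuine gap, and it sits where you place the main obstacle; the fix you suggest there does not work. The inverse moment bounds you invoke in Step~2 are not available on the whole probability space at any time. For large $\lambda$, $(c,q)$-regularity only yields $\E[e^{-\lambda v_n^{(i)}}]\le e^{-nq}$, and this does not make $\int^\infty\lambda^{p-1}\E[e^{-\lambda v_n^{(i)}}]\,\dd\lambda$ finite. Indeed, regularity allows $X^{(i)}$ to vanish with positive probability. Then $v_n^{(i)}=0$ can have positive probability for every $n$, and no negative moment of $v_n^{(i)}$ is finite. On $\{\tilde v_n^{(i)}=0\}$ your proxy is $\infty$ if $\eps=0$ and gives an error of order $\eps^{-1}$ if $\eps>0$.

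The idea you are missing is a coordinatewise \emph{random} startup time $\sigma^{(i)}=\inf\{n\in\N\colon v_n^{(i)}\ge c(1-e^{-1/2})/2\}$.
\begin{itemize}
\item After $\sigma^{(i)}$, the level already reached by $v^{(i)}$ supplies the term linear in $\lambda$ that makes the Laplace integral converge. This gives $\beta$-uniform bounds on $\E[\1_{\{\tau\ge n-1,\,\sigma^{(i)}\le n-1\}}(v_{n-1}^{(i)})^{-r}]$ for $r\in\{1,3\}$ (\cref{le:moment_bound}).
\item Before $\sigma^{(i)}$, the paper uses only the crude bound $|(\sqrt{v_n^{(i)}}+\eps)^{-1}X^{(i)}(\theta_{n-1},U_n)|\le n^{1/2}$. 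It multiplies this by the $\beta$-uniform tail $\P(\tau\ge n,\sigma^{(i)}>n)\le\exp(-qn(1-e^{-1/2})/4)$ from \cref{prop:3476}, which is proved by iterating over blocks of length comparable to $(1-\beta)^{-1}$.
\end{itemize}
Your plan to treat the deterministic phase $n\le(1-\beta)^{-1}$ with the self-normalization bound alone cannot give $\beta$-uniform constants: that bound grows like $n^{1/2}$, and nothing small stands in front of it.

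The same defect affects your quadratic term. The claim that it is at most $\frac{L_fd}{2}\gamma_n^2$ is false for $n\ge2$; your own argument gives an extra factor $\frac{1-\beta^n}{1-\beta}$. The paper gets the $\beta$-uniform bound $\kappa_3\gamma_n^2$ only on the post-startup event. It uses the nonpositive conditional covariance from \cref{le:neg_cor} and the negative first moment of $v_{n-1}^{(i)}$ (\cref{prop:prop_ii}).

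Second, the recursion you unroll in Step~3, with error $A\gamma_n^2+B\gamma_n(1-\beta)^2$, is not what these estimates deliver. The memory error is of order $\gamma_n|\grad F(\theta_{n-1})|\frac{1-\beta}{1-\beta^{n-1}}$, and $\frac{1-\beta}{1-\beta^{n-1}}\le2[(n-1)^{-1}\vee(1-\beta)]$ by \cref{prop:prop_iii,le:beta-quotient-regimes}. After Young's inequality this leaves a term of size $\gamma_n[(n-1)^{-2}\vee(1-\beta)^2]$. The part $\gamma_n(n-1)^{-2}$ cannot be absorbed into $A\gamma_n^2+B\gamma_n(1-\beta)^2$ uniformly as $\gamma_n\downarrow0$ and $\beta\uparrow1$.

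This $(n-1)^{-2}$ decay, not a separate argument, is how the paper handles $n\le(1-\beta)^{-1}$. The recursion \eqref{eq:combined-early-late-young-recursion} therefore carries two extra families of terms, $\kappa_2(n-1)^{-2}\gamma_n$ and $R_n^{\mathrm{early}}$. The unrolling controls them with \eqref{eq:combined-main-step-smallness} in the form $\gamma_j\le C_{\mathrm{dec}}\gamma_ne^{\frac\eta2(t_n-t_j)}$. It also uses $\sum_{j\ge2}(j-1)^{-2}<\infty$ and the summability of $(j^{1/2}+j)e^{-q(j-1)(1-e^{-1/2})/4}$.

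So even if you invoke \cref{thm:combined-early-late-bounded} as a black box, your Step~3 must be extended to these two families of terms. It also needs $\eta\le\kappa_1$ and the $\eps$-uniform bound $\kappa_2\le\bar\kappa_2$. With that extension, your unrolling and your treatment of $n=1$ agree with the paper. Your proxy comparison is the same Lipschitz mechanism as in \cref{prop:prop_iii}, once it is restricted to the post-startup event.
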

\end{tcolorbox}

\Cref{thm:combined-early-late-numerical-time} follows from
an application of the general recursive error estimate in \cref{thm:combined-early-late-bounded},
which is stated and proved in \cref{sec:combined-startup-estimate} below.
We refer to the end of \cref{sec:combined-startup-estimate}
for the detailed proof of \cref{thm:combined-early-late-numerical-time}.

In \cref{eq:combined-early-late-numerical-time}
we bound the expectation of the $ \tau $-stopped evaluation of the objective function
at the \RMSprop\ optimization process
from above by the sum
\begin{enumerate}[label=(\roman*)]
\item
\label{item:initial_error}
of the initialization term that converges to zero
as the training time $ t_n = \sum_{ k = 1 }^n \gamma_k $
converges to infinity in the number of \RMSprop\ steps $ n $,
\item
\label{item:usual_stochastic_approximation}
of the term $ C_0 \gamma_n $ (the usual stochastic approximation error) that converges to zero
as the number of \RMSprop\ steps $ n $ goes to infinity,
and
\item
\label{item:memory_term}
of the term $ C_1 ( 1 - \beta )^2 $ that converges to zero as the second moment decay parameter $ \beta $
approaches $ 1 $.
\end{enumerate}
The convergence speed $ C_0 \gamma_n $ according to \cref{item:usual_stochastic_approximation}
is the usual stochastic approximation error term that also appears in the mean square error analysis of the standard
\SGD\ method and is optimal and cannot be improved (cf., \eg, \cite{MR4055054}).
In the case of \Adam\ numerical simulations and analytical considerations
suggest that the power two in the error term $ C_1 ( 1 - \beta )^2 $
can also not be improved (cf., \eg, \cite{Dereichetal2025_Adam_symmetry_theorem_arXiv})

We highlight that, except of $ C_0 $ and $ C_1 $,
all constants in \cref{thm:combined-early-late-numerical-time}
are explicitly specified and, actually, even $ C_0 $ and $ C_1 $ can
be explicitly specified (see \cref{rem:explicit_constants_II} in \cref{sec:combined-startup-estimate} below).
In particular, \cref{rem:explicit_constants_II}
shows that $ C_0 $ and $ C_1 $ in \cref{thm:combined-early-late-numerical-time}
can be chosen as
\bas{
\label{eq:C_0_bound_from_above}
  C_0
  =
  2^{ 13 }
  C_{ \mathrm{dec} } d
  ( L_f + C_X + 1 )^7
  ( \eta^{-1} + 1 )
  \biggl[
    \sum_{ r = 0 }^{ \infty }
      \frac{
        ( c^{ - 3 } + 1 )
        (r+1)^3
      }{
        \exp( \min\{ \rho, 2^{ - 4 } q \} r )
      }
  \biggr]
}
and
\bas{
\label{eq:C_1_bound_from_above}
  C_1
  =
      2^{ 13 }
      d
      L_f C_X^6
      \eta^{ - 2 }
  \biggl[
    \sum_{ r = 0 }^{ \infty }
    \frac{
      c^{ - 3 }
      (r+1)^3
    }{
      \exp( \rho r )
    }
  \biggr]
  .
}
We refer to \cref{eq:C_0_rep,eq:C_1_rep,eq:C_0_bound_from_above_B}
in \cref{rem:explicit_constants_II} for details.
We also emphasize that all of the error constants
in \cref{eq:combined-early-late-numerical-time} are completely
independent of $ \beta \in [ \nicefrac{1}{2}, 1 ) $
and also completely independent of $ \varepsilon \in [0,1] $
and we highlight that
even in the \emph{unregularized regime} $ \varepsilon = 0 $ the conclusion
of \cref{eq:combined-early-late-numerical-time} remains valid
under the zero-over-zero convention in \cref{eq:zero-normalization-convention} above.

In \cref{thm:combined-early-late-numerical-time}
the initial value of the \RMSprop\ optimization process
is a purely deterministic vector $ \theta_0 \in \R^d $
instead of an $ \R^d $-valued random variable.
However,
under the assumption that this $ \R^d $-valued random variable
is independent from the driving random variables $ U_n $, $ n \in \N $,
we have that
\cref{thm:combined-early-late-numerical-time}
directly implies the corresponding statement
but with a stochastic initial state of the \RMSprop\ process
as the error constants and parameters
$ \eta $, $ C_0 $, $ C_1 $, $ L_f $, and $ d $
in \cref{eq:combined-early-late-numerical-time}
are all completely independent of the initial value $ \theta_0 $,

We also point out that in
\cref{thm:combined-early-late-numerical-time}
we do not require that the learning rates $ \gamma_n $, $ n \in \N $,
converge to zero and the inequality in \cref{eq:combined-early-late-numerical-time}
holds true even if the initial learning rates $ ( \gamma_n )_{ n \in \N } $ are bounded from above by $ C_{ \mathrm{Loj} } C_X $
and non-increasing,
even though it is well-known that \RMSprop\ and other \SGD\ optimization methods
fail to converge if the learning rates do not converge to zero
\cite{DereichGraeberJentzen2024arXiv_non_convergence}.

We also note that the error estimate in
\cref{eq:combined-early-late-numerical-time} in
\cref{thm:combined-early-late-numerical-time}
is restricted to the event $ \{ \tau \geq n \} $
and is thus only applicable until
the optimization process $ ( \theta_n )_{ n \in \N_0 } $
has left the measurable set $ V $.
We think of the set $ V $ as a sufficiently large
compact ball
(see, \eg, \cref{eq:linear-least-squares-rate} in \cref{thm:linear-least-squares} below)
and we refer, \eg, to \cite{DereichDoJentzen2026arXiv} and \cite[Theorem~2.10]{DereichGraeberJentzenRiekert2026_arXiv}
for results in the literature that establish a priori bounds for \RMSprop\ and
other related adaptive \SGD\ optimization methods (particularly \Adam)
which ensure that the considered optimization process remains in a sufficiently large compact ball.

\subsection{Example: RMSprop for least squares with full control of the hyperparameters}
\label{ssec:example}
In the following result, \cref{thm:linear-least-squares} below,
we illustrate the conclusion of \cref{thm:combined-early-late-numerical-time} above
in the example of linear least squares.
In \cref{sec:regularity-examples} below we show in detail how \cref{thm:combined-early-late-numerical-time}
can be applied to prove \cref{thm:linear-least-squares}.

\begin{samepage}
\begin{tcolorbox}[colback=white!95!gray,
                  colframe=black,
                  boxrule=0.5pt,
                  sharp corners,
                  enhanced,
                  breakable,
                 ]
\begin{theorem}[\textcolor{magenta}{\RMSprop\ for linear least squares}]
\label{thm:linear-least-squares}
Let $ ( \Omega, \cF, \P ) $ be a probability space, let
$ \kappa \in (0,\infty) $,
$ m, d, M \in \N $,
$ \vartheta \in \R^d $,
let $ A \in \R^{ m \times d } $ have no zero columns,
let $ K \subseteq \R^m $ be compact,
let $ ( Y_{ n, r } )_{ (n,r) \in \N^2 } $ be $ K $-valued i.i.d.\ random variables with a bounded Lebesgue density,
let $ L \colon \R^d \times \R^m \to \R $ satisfy for all $\theta\in\R^d$, $y\in\R^m$ that
\bas{
\label{eq:linear-least-squares-loss}
\textcolor{magenta}{
  L( \theta, y )
  = | A \theta - y |^2
}
}
and
let $ F \colon \R^d \to \R $
satisfy for all $ \theta \in \R^d $ that
$
  F(\theta)
=
  \E[L(\theta,Y_{1,1})]
  - \inf_{\vartheta\in\R^d}\E[L(\vartheta,Y_{1,1})]
$.
Then there exists $ \eta \in (0,\infty) $
such that for every
$ \mathfrak{C} \in (0,\infty) $ there exists
$ c \in \R $ such that
for all $ \beta \in [ \nicefrac{ 1 }{ 2 }, 1 ) $,
$ \eps \in [0,1] $, all $ \R^d $-valued
stochastic processes $ ( w_n )_{ n \in \N_0 } $ and $ ( \theta_n )_{ n \in \N_0 } $,
and every $ ( 0, \mathfrak{C} ] $-valued decreasing $ ( \gamma_n )_{ n \in \N } $
with the property that for every
$ n \in \N $,
$ j \in \N \cap [ 1, n ] $,
$ i \in \{ 1, 2, \dots, d \} $
it holds that
\begin{equation}
\label{eq:linear-least-squares-rmsprop}
\begin{gathered}
\textcolor{magenta}{
  w_0 = 0
}
  ,
  \qquad
\textcolor{magenta}{
  w_n^{(i)}
  = \beta w_{n-1}^{(i)}+(1-\beta)
  \bigl[
  \textstyle
    M^{ - 1 } \sum_{r=1}^M
    ( \nabla_{ \theta_i } L )( \theta_{ n - 1 }, Y_{ n, r } )
  \bigr]^2
}
  ,
\\
\textcolor{magenta}{
  \theta_0 = \vartheta
} ,
\qquad
\textcolor{magenta}{
  \theta_n^{(i)}
  =
  \theta_{n-1}^{(i)}
  -
  \gamma_n
  \bigl[
    \eps + ( 1 - \beta^n )^{ - 1 / 2 } ( w_n^{ (i) } )^{ 1 / 2 }
  \bigr]^{ - 1 }
  \textstyle
    M^{ - 1 }
    \sum_{r=1}^M
    ( \nabla_{ \theta_i } L )( \theta_{ n - 1 }, Y_{ n, r } )
}
\end{gathered}
\end{equation}
and
\textcolor{magenta}{$
  \gamma_j
  \le  \mathfrak{C} \exp( \frac\eta2 \sum_{ k = j+1 }^{ n } \gamma_k )
   \gamma_n
$}
we have for every $ n \in\N $ that
\bas{\label{eq:linear-least-squares-rate}
\textcolor{magenta}{
  \E\bigl[
    F(\theta_n)
    \1_{ \cap_{ j = 0 }^{ n - 1 } \{|\theta_j|\le\kappa\}}
  \bigr]
  \le
  \exp\bigl(
  \textstyle
    - \eta \sum_{ k = 2 }^n \gamma_k
  \bigr)
  \,
    \bigl(
      \gamma_1
      ( d \| A^\mathsf{T} A \| )^{ 1 / 2 }
      +
      | F( \theta_0 ) |^{ 1 / 2 }
    \bigr)^2
  + c \gamma_n
  + c (1-\beta)^2
}
  .
}
\end{theorem}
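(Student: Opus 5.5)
We derive \cref{thm:linear-least-squares} from \cref{thm:combined-early-late-numerical-time} by recasting \cref{eq:linear-least-squares-rmsprop} as the \RMSprop\ recursion \cref{eq:def_RMSprop}.

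\emph{Identification of the innovation.} Take $\cU = (\R^m)^M$ and $U_n = (Y_{n,1},\dots,Y_{n,M})$, and set
\[
  X(\theta,u) = -M^{-1}\sum_{r=1}^M (\nabla_\theta L)(\theta,u_r) = -2M^{-1}\sum_{r=1}^M A^{\mathsf T}(A\theta - u_r).
\]
Unrolling $w_n$ and setting $v_n = (1-\beta^n)^{-1} w_n$ reproduces \cref{eq:def_RMSprop_vn}, so $(\theta_n)$ is exactly the \RMSprop\ process driven by $(X,U)$. Moreover $f_X = -\nabla F$ on all of $\R^d$, and $\nabla F(\theta) = 2A^{\mathsf T}(A\theta - \E Y_{1,1})$ is globally Lipschitz with $L_f = 2\|A^{\mathsf T}A\|$. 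Thus $2^{-1}L_f d = d\|A^{\mathsf T}A\|$, which matches the initial term in \cref{eq:linear-least-squares-rate}.

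\emph{The set $V$ and the hypotheses.} Take $V = \{|\theta| \le \kappa\}$. Then $\{\tau \ge n\} = \cap_{j=0}^{n-1}\{|\theta_j| \le \kappa\}$. Since $K$ is compact, we get $\|X^{(i)}(\theta,U)\|_{L^\infty} \le 2\|A\|(\|A\|\kappa + \sup_{y\in K}|y|) =: C_X$ on $V$.

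For the \PL\ inequality, write $F(\theta) = |A(\theta - \theta^*)|^2$ with $\theta^*$ a minimizer, where $A\theta^*$ is the projection of $\E Y_{1,1}$ onto the range of $A$. Restricting to $\mathrm{range}(A^{\mathsf T})$, we have $|A^{\mathsf T}Az|^2 \ge \lambda_{\min}^+ |Az|^2$, where $\lambda_{\min}^+$ is the smallest positive eigenvalue of $AA^{\mathsf T}$. Hence $F \le C_{\mathrm{Loj}}|f_X|^2$ with $C_{\mathrm{Loj}} = (4\lambda_{\min}^+)^{-1}$. All of these constants are independent of $\beta$, $\eps$, and $(\gamma_n)$. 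The constant $\eta$ is then fixed by \cref{eq:combined-main-eta}, and we choose $C_{\mathrm{dec}} = \mathfrak C$.

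\emph{Regularity (main obstacle).} The key step is verifying $(c,q)$-regularity on $V$ with constants uniform in $\theta \in V$. Fix $i$. Since $A$ has no zero column, $a_i := A e_i \ne 0$, and
\[
  X^{(i)}(\theta,U) = 2M^{-1}\sum_r \langle a_i, Y_{1,r}\rangle - 2\langle a_i, A\theta\rangle .
\]
The bounded density of $Y_{1,1}$ gives the random variable $Z = \langle a_i, Y_{1,1}\rangle$ a density bounded by some $D$, since it is a marginal of a bounded density on a compact set. Convolution preserves this bound, so $X^{(i)}$ has a density bounded by some $D'$ uniformly in $\theta$. Consequently $\P(X^{(i)}(\theta,U)^2 \le s) \le 2D'\sqrt s$, and
\[
  \E\bigl[e^{-\lambda X^2}\bigr] \le \P(X^2 \le s) + e^{-\lambda s}.
\]
Since $X^{(i)}$ is bounded by $C_X$, we also have $\E[e^{-\lambda X^2}] \le 1 - c'\min\{\lambda,1\}$ for some $c' > 0$. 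For large $\lambda$, the small-ball bound gives $\E[e^{-\lambda X^2}] \lesssim \lambda^{-1/2}$. Combining the two regimes yields $\E[e^{-\lambda X^2}] \le \exp(-\min\{c\lambda, q\})$ for suitable $c, q > 0$; I expect to check this against the exact form of \cref{def:regular}.

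\emph{The remaining conditions and the conclusion.} The requirement $q + 3\ln\beta \ge \rho$ may fail for $\beta$ near $\nicefrac12$, and the step sizes may exceed $C_{\mathrm{Loj}}(C_X+\eps)$. I would handle both as follows. First, if $q > 3\ln 2$, take $\rho = q - 3\ln 2$, so the condition holds for all $\beta \ge \nicefrac12$. This is achievable if the regularity bound can be stated with a larger $q$, which follows by taking $M$-fold products or by sharpening the density argument; otherwise I would enlarge $V$ and $C_X$. Second, steps with $\gamma_n > C_{\mathrm{Loj}}C_X$ can be made admissible by enlarging $C_X$ (equivalently shrinking $\lambda_{\min}^+$, i.e.\ enlarging $C_{\mathrm{Loj}}$) so that $C_{\mathrm{Loj}}C_X \ge \mathfrak C$. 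Since $\eta$ must be chosen before $\mathfrak C$, I would fix $\eta$ from the original constants and use monotonicity of the bound, or absorb the remaining steps into $c$.

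With all hypotheses verified, \cref{eq:combined-early-late-numerical-time} gives \cref{eq:linear-least-squares-rate} with $c = \max\{C_0, C_1\}$.
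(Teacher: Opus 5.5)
Your reduction to \cref{thm:combined-early-late-numerical-time} is the paper's: the same innovation $X(\theta,u)=2A^{\mathsf T}(\overline u_M-A\theta)$ with $v_n=w_n/(1-\beta^n)$, $V$ the closed $\kappa$-ball, $L_f=2\|A^{\mathsf T}A\|$, $C_{\mathrm{Loj}}=(4\lambda_*)^{-1}$ with $\lambda_*$ the smallest positive eigenvalue of $A^{\mathsf T}A$, and $C_{\mathrm{dec}}=\mathfrak C$. You treat regularity and $\beta$ near $\nicefrac{1}{2}$ differently.

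The paper obtains regularity from \cref{prop:nonlinear-regression-regularity}, which yields a fixed $q_{\mathrm{lin}}<3\ln 2$. It therefore applies the main theorem only for $\beta\in[\beta_0,1)$. For $\beta\in[\nicefrac{1}{2},\beta_0]$ it uses self-normalization, $|\theta_n-\theta_{n-1}|\le\sqrt d\,\gamma_n(1-\beta_0)^{-1/2}$, to get $F(\theta_n)\le C_F\le C_F(1-\beta_0)^{-2}(1-\beta)^2$ on $\{\tau\ge n\}$.

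Your direct density argument is simpler and, once made precise, removes this case split. You have $\sup_{\theta\in V}\E[e^{-\lambda X^{(i)}(\theta,U)^2}]\le 2D'\sqrt s+e^{-\lambda s}$, which tends to $0$ as $\lambda\to\infty$. So for any target $q$ (say $q=3\ln 2+1$, $\rho=1$), pick $\lambda_q\ge 1$ with this supremum at most $e^{-q}$ and set $c=c'/\lambda_q$. This gives $(c,q)$-regularity, and $q+3\ln\beta\ge\rho$ then holds for all $\beta\ge\nicefrac{1}{2}$. You should commit to this route; the other options you list ($M$-fold products, enlarging $V$ and $C_X$) do not increase $q$. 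Also, the bound $\E[e^{-\lambda X^2}]\le 1-c'\min\{\lambda,1\}$ comes from the small-ball estimate (as in the proof of \cref{le:regularity-small-ball}), not from boundedness of $X$.

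The genuine gap is the step-size range. Since $C_{\mathrm{Loj}}(C_X+\eps)\le C_{\mathrm{Loj}}(C_X+1)=(2\eta)^{-1}$, any application of \cref{thm:combined-early-late-numerical-time} that delivers a rate $\eta'\ge\eta$ admits only steps up to $(2\eta)^{-1}$. So for $\mathfrak C>(2\eta)^{-1}$ the theorem cannot be used as a black box with $\eta$ independent of $\mathfrak C$.

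Neither of your remedies works:
\begin{itemize}
\item A smaller $\eta'$ both weakens the conclusion and strengthens the decay hypothesis on $(\gamma_n)$, so monotonicity points the wrong way.
\item Absorbing the initial block of large steps into $c$ would require restarting the theorem at a later time. The theorem does not permit this: the restarted process has $v_N\neq 0$, the bias factors $1-\beta^n$, and a random initial state.
\end{itemize}

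The missing observation is that the cap is inessential inside the proofs. In the proof of \cref{thm:combined-early-late-bounded}, hypothesis (d) is only used to make $1-\kappa_1\gamma_n\ge 0$. Since $F\ge 0$, one always has $\varphi_n\le(1-\kappa_1\gamma_n)^+\varphi_{n-1}+\kappa_2h_n^2\gamma_n+\kappa_3\gamma_n^2+R_n^{\mathrm{early}}$. Together with $(1-x)^+\le e^{-x}$, this shows \eqref{eq:combined-early-late-iterated} holds for arbitrary positive steps.

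In the proof of \cref{thm:combined-early-late-numerical-time}, $\overline\gamma$ enters only in two places: the factor $e^{a\overline\gamma}$ in the kernel bound, and $n\gamma_n\le n\overline\gamma$ in $R_n^{\mathrm{early}}$. Replacing $\overline\gamma$ by $\mathfrak C$ keeps $\eta=[2C_{\mathrm{Loj}}(C_X+1)]^{-1}$ and only makes $C_0,C_1$ depend on $\mathfrak C$, which the statement allows.

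The paper's own proof does not handle this point either. It enlarges $C_X$ so that $C_X\ge 4\lambda_*\Gamma$, with $\Gamma$ bounding the step sizes, before fixing $\eta$. That is your first suggestion, and it makes $\eta$ depend on $\mathfrak C$.
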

\end{tcolorbox}
\end{samepage}

In \cref{sec:regularity-examples} we rigorously show how \cref{thm:linear-least-squares}
can be deduced from \cref{thm:combined-early-late-numerical-time} in \cref{ssec:main_result} above.
In addition, in \cref{sec:regularity-examples} we also
illustrate the conclusion of \cref{thm:combined-early-late-numerical-time}
in the case of two further classes of example \SOPs, namely,
risk-sensitive exponential tilting with coordinate gradients that are strongly convex in the data variable
and compactly supported nonlinear regression with vector-valued responses.

We point out that
in \cref{thm:linear-least-squares}
we do not assume that $ A^{ \mathsf{T} } A $ is invertible
and we note that the objective function
$ F \colon \R^d \to \R $
in \cref{thm:linear-least-squares}
is convex but not necessarily \emph{strongly convex}.

\subsection{Short literature review and key novelty of this work}
\label{sec:literature_review}

Nowadays there are a large number of research works
that study adaptive gradient based optimization methods theoretically.
In particular,
\begin{itemize}

\item
we refer, \eg, to
\cite[Theorem~3.1]{DeMukherjeeUllah2018_arXiv},
\cite[Subsection~2.4]{GadatGavra2022_MR4577667},
\cite[Subsection~6.4]{JinWang2026_arXiv},
\cite[Theorem~3.6 and Corollary~3.7]{LiuXuZhangMandic2024},
\cite[Subsection~4.2]{ShiLiHongSun2021},
\cite[Theorem~4.1]{XuZhangZhangMandic2021},
\cite[Theorem~1, Corollary~2, and Corollary~3]{ZaheerReddiSachanKaleKumar2018},
\cite[Theorem~1 and Theorem~3]{ZhangZhouZou2025},
\cite[Sections~4 and 5]{ZhouChenCaoYangGu2024},
and
\cite[Section~3]{ZouShen2019}
for error, complexity, and convergence analyses for the \RMSprop\ method
and minor modifications of it when applied to \SOPs,

\item
we refer, \eg, to \cite{DereichJentzenRiekert2025_arXiv,
DimitrieskiHoneckerSchererEbenbauer2026_arXiv,
IbragimovJentzen2026arXiv,
BensaidPoetteTurpault2024_arXiv,
MaWuE2020_arXiv_qualitative_behavior_RMSprop}
for works that study \RMSprop\
in the deterministic/full-batch setting,
and

\item
we refer, \eg, to \cite{MR5104936,MR4949951,DoJentzenRiekert2025_arXiv}
for lower bounds and non-convergence results to global minimizers
for \RMSprop\ in the training of \ANNs.

\end{itemize}
For online learning problems we also refer, \eg, to \cite[Theorem~4.1, Section~4, and Assumptions~A1--A2]{MukkamalaHein2017_arXiv}
for upper bounds for the (averaged) regret of a modified variant of the \RMSprop\ method
in which the second moment parameter $ \beta $ is not constant but converging with appropriate speed of convergence to $ 1 $ during the training procedure
and in which the regularization parameter $ \varepsilon $ is not constant but converging with appropriate speed of convergence to $ 0 $
during the training procedure.
We also refer, \eg, to \cite[Theorem~4.2, Theorem~C.2, and Theorem~C.5]{MalladiLyuPanigrahiArora2022_arXiv}
for approximation results for \RMSprop\ applied to \SOPs\ using solution processes of \SDEs.

Furthermore, we refer, \eg, to \cite{ReddiKale2019,Defossez2022,ChenLiuSunHong2019,BarakatBianchi2021_MR4199255,HongLin2024_Adam,
Dereichetal2025_Adam_symmetry_theorem_arXiv,DereichAdamconvergence2024,LiRakhlinJadbabaie2024_arXiv,Wangetal2023_arXiv,
Zhangetal2022_arXiv,YuChenFeng2026_arXiv_Adam-SHANG}
for minimized/ergodic/non-ergodic error, gradient, and complexity estimates for the \Adam\ optimizer (which is basically nothing
else but the combination of \RMSprop\ and momentum)
and modified variants of it
and we refer, \eg, to \cite{ReddiKale2019,Dereichetal2025_Adam_symmetry_theorem_arXiv,
HeilmanMohanty2026_arXiv_non_convergence_Adam}
for non-convergence results for \Adam\ due to an \emph{internal error} of the optimizer.
\RMSprop\ also admits such an \emph{internal error} and this is the reason
why the additional error term $ C_1 ( 1 - \beta )^2 $ in \cref{eq:combined-early-late-numerical-time}
in \cref{thm:combined-early-late-numerical-time} above can, in general, not be avoided.
We also refer, \eg, to \cite{GodichonBaggioni2023}
and the references therein for error estimates for other related adaptive \SGD\ methods
such as the \Adagrad\ optimizer \cite{MR2825422}.
Further works on the theoretical analysis of \SGD\ optimization methods
can, \eg, also be found in \cite[Section~1.3]{Dereichetal2025_Adam_symmetry_theorem_arXiv}.

The {\bf key innovative new contribution} of this work is
that we establish convergence rates
and error estimates for \RMSprop\ \emph{with full control of the hyperparameters}.
Specifically, to the best of our knowledge,
\cref{thm:combined-early-late-numerical-time}
is the first result in the scientific literature
that establishes convergence of the \RMSprop\ method
for \SOPs\ \emph{with the error constants being uniformly controlled with respect to the
hyperparameters}, particularly, with respect to the
second moment decay parameter $ \beta $
and
the regularization parameter $ \varepsilon $.
This work even covers the completely \emph{unregularized case} $ \varepsilon = 0 $.

This kind of error analysis is very relevant for the application of adaptive optimizers
in practically relevant regimes as if the error constants in error estimates
are not uniformly bounded with respect to the hyperparameters but instead grow,
for example, quadratically in the reciprocal $ \varepsilon^{ - 1 } $ of the regularization parameter
$ \varepsilon $ and if the regularization parameter is chosen to be $ \varepsilon = 10^{ - 8 } $
(default value in PyTorch), then the error constant would be as large as $ 10^{ 16 } $ and
would make the error estimate essentially not useable in practically relevant regimes.

On a technical level, the key innovative contribution of this work -- which allows us to prove convergence rates
with full control of the hyperparameters -- is to establish and employ new upper bounds
for the \emph{inverse moments} of the
$ i $-th component of the second moment process $ ( v_n )_{ n \in \N_0 } $ in \cref{eq:v_n_rec}
in the \RMSprop\ method
at time $ n \in \N $
after the second moment process has become sufficiently large, that is,
restricted to the event
$ \{ \sigma^{ (i) } \leq n \} $
where $ \sigma^{ (i) } = \inf\{ n \in \N \colon v^{ (i) }_n \geq \iota \} $
for an arbitrary given real constant $ \iota \in (0,\infty) $.
Such {\bf inverse moment estimates} are established in \cref{sec:negative-moments}
(see \cref{le:neg_mom_1} and \cref{le:moment_bound} in \cref{sec:negative-moments})
and are then used extensively in the proof of \cref{thm:combined-early-late-bounded} in \cref{sec:combined-startup-estimate}
and, thereby, also in the proof of \cref{thm:combined-early-late-numerical-time}
(see the end of \cref{sec:combined-startup-estimate}).

Such \emph{inverse moment bounds} for $ ( v_n )_{ n \in \N_0 } $
cannot in general hold on the whole probability space:
before a component of $ v_n $ has reached a \emph{non-degenerate level}
(that is, has become greater than or equal to some sufficiently large real threshold $ \iota \in (0,\infty) $),
it may vanish and its inverse moments may explode and fail to exist.
We therefore introduce the random, coordinatewise startup time $ \sigma^{ (i) } $
at which this level is first reached (at which the
$ i $-th component of the second moment process $ ( v^{ (i) }_n )_{ n \in \N_0 } $
has become greater than or equal to a predetermined real threshold $ \iota $).

On the complement of the event
$
  \{ \sigma^{ (i) } \leq n \}
$
we use elementary estimates for $ F( \theta_n ) \mathbbm{1}_{ \{ \tau \geq n \} } $
and establish that the probability
of the event
$
  ( \Omega \backslash \{ \sigma^{ (i) } \leq n \} )
  \cap
  \{ \tau \geq n \}
  =
  \{ \tau \geq n, \sigma^{ (i) } > n \}
$
is exponentially small in $ n $
(see \cref{prop:3476} in \cref{sec:initialisation} below).

Performing this early-late decomposition separately for every coordinate and at every iteration produces a single
recursion that is valid from the first step on. To the best of our knowledge, this combination of
\emph{post-startup inverse moment estimates} with a \emph{different pre-startup argument} is a new \emph{proof mechanism} in the analysis of
stochastic gradient methods.

\subsection{Overview and structure of this work}
\label{sec:structure}

The remainder of the article is organized as follows.
\begin{itemize}
\item
In \cref{prop:F-recursion-md-loj-event} in \cref{sec:abstract-recursion} we derive
under the abstract assumptions in \cref{item:i_Prop21,item:ii_Prop21,item:iii_Prop21}
a one-step recursion for the scalar quantity
$ \psi_n = \E[ F( \theta_n ) \mathbbm{1}_{ \{ \tau \geq n \} } ] $
(expectation of the $ \tau $-stopped evaluation of the objective function $ F \colon \R^d \to [0,\infty) $
at the \RMSprop\ optimization process $ \theta_n $ after $ n $ steps)
for $ n \in \N $.
In \cref{prop:F-recursion-md-loj-event} the events $ G_{n,i} \in \cF_{ n - 1 } $, $ n \in \N $, $i\in\{1,\ldots,d\}$,
are general predictable events but in our later application of
\cref{prop:F-recursion-md-loj-event} the event $ G_{n,i} $ will
coincide with the event $ \{ \sigma^{ (i) } < n  \} $
where $ \sigma^{ (i) } = \inf\{ n \in \N \colon v^{ (i) }_n \geq ( \kappa_0 )^{ - 1 } 2 c \} $
is the stopping time describing the first time where the $ i $-th component of the second moment process $ ( v^{ (i) }_n )_{ n \in \N_0 } $
in the \RMSprop\ method is sufficiently large (greater than or equal to the constant $ 2 c ( \kappa_0 )^{ - 1 } $).
\item
In \cref{sec:analysis-prop-iii} we show that suitable inverse moment bounds
for the $ i $-th component of the second moment process $ ( v^{ (i) }_n )_{ n \in \N_0 } $
in the \RMSprop\ method (see \cref{eq:inverse_moment_assumption}
in \cref{prop:prop_iii} in \cref{sec:analysis-prop-iii})
are sufficient to ensure that
the property in \cref{item:iii_Prop21} in \cref{prop:F-recursion-md-loj-event}
is satisfied
(estimates for the \emph{covariance correction}).
\item
In \cref{sec:negative-moments} we then establish such \emph{inverse moment bounds} restricted to the events $ G_{n,i} $, $ n \in \N $, $i\in\{1,\ldots,d\}$
(on the complement $ \Omega \backslash G_{n,i} $ of the event $ G_{n,i} $ the inverse moment bounds do in general \emph{not} hold).
\item
In \cref{sec:technical-estimates} we provide sufficient conditions
to ensure that the property in \cref{item:ii_Prop21} of
\cref{prop:F-recursion-md-loj-event} is satisfied
(estimates for the \emph{quadratic Taylor remainder}).
\item
In \cref{sec:initialisation} we show that the probability of the event $ ( \Omega \backslash G_{n,i} ) \cap \{ \tau \geq n \} $
is exponentially small in $ n $.
\item
In \cref{sec:combined-startup-estimate} we combine
\cref{prop:F-recursion-md-loj-event} from \cref{sec:abstract-recursion}
with the estimates from \cref{sec:analysis-prop-iii,sec:negative-moments,sec:technical-estimates,sec:initialisation}
to establish the key error estimate in \cref{thm:combined-early-late-bounded}.
In addition, in \cref{sec:combined-startup-estimate}
we also apply \cref{thm:combined-early-late-bounded}
to deduce \cref{thm:combined-early-late-numerical-time}
in this introductory section.
In the proof of \cref{thm:combined-early-late-bounded} we apply
\cref{prop:F-recursion-md-loj-event} with the coordinate events $G_{n,i}$.
The contributions on their complements are controlled by the estimates from
\cref{sec:initialisation}, while the findings of
\cref{sec:analysis-prop-iii,sec:negative-moments,sec:technical-estimates}
ensure that the assumptions in \cref{item:ii_Prop21,item:iii_Prop21} of
\cref{prop:F-recursion-md-loj-event} are satisfied.

%
\item
In \cref{sec:regularity-examples}
we illustrate \cref{thm:combined-early-late-numerical-time}
by means of several examples.
In particular,
in \cref{sec:regularity-examples}
we show how \cref{thm:combined-early-late-numerical-time}
can be used to prove \cref{thm:linear-least-squares}.

\end{itemize}


\subsection{Use of large language models}

The key ideas of the statements and the proofs of the main results of this work are due to the authors.
{\sc GPT 5.5} and {\sc GPT 5.6} have been substantially employed to formulate and work out the statements
and the proofs in {\sc LaTeX} according to the strategies of the authors. {\sc GPT 5.5} and {\sc GPT 5.6}
have also supported us in creating the review in \cref{sec:literature_review} and the explanatory texts in the
remainder of this work. The authors take full responsibility for each of the statements/arguments/sentences made in this work.

\section{The abstract recursion estimate in \cref{prop:F-recursion-md-loj-event}}
\label{sec:abstract-recursion}
This section derives an abstract one-step recursion for the stopped Lyapunov
values $ F( \theta_n ) $. Starting from the smoothness inequality, we separate the
coercive drift, the covariance correction caused by the dependence of $v_n$ on
the current innovation, and the quadratic Taylor remainder. The resulting
scalar recursion reduces the proof of the main result to estimates for these
three contributions.

\begin{prop}
\label{prop:F-recursion-md-loj-event}
Let $ L_f, c_v, c_{\mathrm{Loj}}, C_{\mathrm{Loj}} \in (0,\infty) $, $ n_0 \in \N $,
let $ ( C_X( n ) )_{ n > n_0 } $
and $ ( C_\beta( n ) )_{ n > n_0 } $
be $ (0,\infty) $-valued sequences
and let $ V \subseteq \R^d $ be measurable.
Define
\bas{
  \tau := \inf\{ n \in \N \cap [ n_0, \infty ) \colon \theta_n \notin V \} .
}
For every $ n \in \N \cap ( n_0, \infty ) $, $ i \in \{ 1, \dots, d \} $
let $ G_{ n, i } \in \cF_{ n - 1 } $.
Suppose that $ F \colon \R^d \to [0,\infty) $
has an $ L_f $-Lipschitz continuous derivative and assume for all $ \theta \in V $ that
\bas{
  f_X( \theta ) = - \grad F( \theta )
  \qquad
  \text{and}
  \qquad
  c_{ \mathrm{Loj} } | f_X( \theta ) |^2 \le F( \theta ) \le C_{ \mathrm{Loj} } | f_X( \theta ) |^2 .
}
Assume moreover that all quantities appearing below are well-defined and
integrable to the powers used there on the
corresponding stopped events.
If $ \eps = 0 $, inverse factors and conditional covariances appearing together
with $ \1_{ G_{n,i} } $ are evaluated on
$ \{ \tau \ge n \} \cap G_{n,i} $ and extended by zero to its complement.
Assume that, for every $ n \in \N \cap ( n_0, \infty ) $, the following
conditions\footnote{Note that for every probability space $ ( \Omega, \mathcal{F}, \P ) $,
every sigma-algebra $ \mathcal{G} \subseteq \mathcal{F} $ on $ \Omega $ and
all square integrable random variables $ X \colon \Omega \to \R $
and $ Y \colon \Omega \to \R $ it holds $ \P $-a.s.\ that
$
  \cov( X, Y | \mathcal{G} )
  =
  \E[
    ( X - \E[ X | \mathcal{G} ] )
    ( Y - \E[ Y | \mathcal{G} ] )
    | \mathcal{G}
  ]
$.} hold:
\begin{enumerate}[label=(\roman*)]
\item
\label{item:i_Prop21}
$
  \E[\1_{\{\tau\ge n\}}\sum_{i=1}^d \1_{G_{n,i}}(\sqrt{v_n^{(i)}}+\eps)^{-1}|f_X^{(i)}(\theta_{n-1})|^2]
  \ge c_v\,\E[\1_{\{\tau\ge n\}}\sum_{i=1}^d \1_{G_{n,i}}|f_X^{(i)}(\theta_{n-1})|^2]
$,
\item
\label{item:ii_Prop21}
$
  \E[\1_{\{\tau\ge n\}}\sum_{i=1}^d \1_{G_{n,i}}(\sqrt{v_n^{(i)}}+\eps)^{-2}X^{(i)}(\theta_{n-1},U_n)^2]\le C_X(n)
$,
and
\item
\label{item:iii_Prop21}
$
  \E[\1_{\{\tau\ge n\}}\sum_{i=1}^d \1_{G_{n,i}}\cov((\sqrt{v_n^{(i)}}+\eps)^{-1},X^{(i)}(\theta_{n-1},U_n)|\cF_{n-1})^2]^{1/2}\le C_\beta(n)
$.
\end{enumerate}
For every $ n \in \N \cap ( n_0, \infty ) $, set
\bas{\label{eq:F-recursion-late-definitions}
  \psi_n =
  \E\bigl[
    \1_{ \{ \tau \ge n \} } F( \theta_n )
  \bigr]
  \qquad\text{and}\qquad
  \widetilde\psi_n =
  \E\bigl[
    \1_{ \{ \tau \ge n \} } F( \theta_{n-1} )
  \bigr].
}
Then one has for every $ n \in \N \cap ( n_0, \infty ) $ that
\bas{\label{eq:F-recursion-late-square-root}
\psi_n
&\le \Bigl(1-\frac{c_v}{C_{\mathrm{Loj}}}\gamma_n\Bigr)\,\widetilde\psi_n+\frac{C_\beta(n)}{c_{\mathrm{Loj}}^{1/2}}\gamma_n\widetilde\psi_n^{1/2} + \frac{L_fC_X(n)}{2}\gamma_n^2\\
&\quad+c_v\gamma_n\E\Bigl[\1_{\{\tau\ge n\}}\sum_{i=1}^d\1_{G_{n,i}^c}|f_X^{(i)}(\theta_{n-1})|^2\Bigr]
+\gamma_n\E\Bigl[\1_{\{\tau\ge n\}}\sum_{i=1}^d\1_{G_{n,i}^c}|f_X^{(i)}(\theta_{n-1})(\sqrt{v_n^{(i)}}+\eps)^{-1}X^{(i)}(\theta_{n-1},U_n)|\Bigr]\\
&\quad+\frac{L_f\gamma_n^2}{2}\E\Bigl[\1_{\{\tau\ge n\}}\sum_{i=1}^d\1_{G_{n,i}^c}(\sqrt{v_n^{(i)}}+\eps)^{-2}X^{(i)}(\theta_{n-1},U_n)^2\Bigr].
}
In particular, one has for every $ n \in \N \cap ( n_0, \infty ) $ that
\bas{\label{eq:F-recursion-late-young}
\psi_n
&\le \Bigl(1-\frac{c_v}{2C_{\mathrm{Loj}}}\gamma_n\Bigr)\,\widetilde\psi_n+\frac{C_\beta(n)^2C_{\mathrm{Loj}}}{2c_vc_{\mathrm{Loj}}}\gamma_n + \frac{L_fC_X(n)}{2}\gamma_n^2\\
&
  +c_v\gamma_n\E\Bigl[\1_{\{\tau\ge n\}}\sum_{i=1}^d\1_{G_{n,i}^c}|f_X^{(i)}(\theta_{n-1})|^2\Bigr]
+\gamma_n\E\Bigl[\1_{\{\tau\ge n\}}\sum_{i=1}^d\1_{G_{n,i}^c}|f_X^{(i)}(\theta_{n-1})(\sqrt{v_n^{(i)}}+\eps)^{-1}X^{(i)}(\theta_{n-1},U_n)|\Bigr]\\
&
  +\frac{L_f\gamma_n^2}{2}\E\Bigl[\1_{\{\tau\ge n\}}\sum_{i=1}^d\1_{G_{n,i}^c}(\sqrt{v_n^{(i)}}+\eps)^{-2}X^{(i)}(\theta_{n-1},U_n)^2\Bigr].
}
\end{prop}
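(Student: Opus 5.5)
The proof rests on the descent lemma for $F$, applied on the event $\{\tau\ge n\}$. Note that $\{\tau\ge n\}\in\cF_{n-1}$ and on it $\theta_{n-1}\in V$. Write $D_n^{(i)}=(\sqrt{v_n^{(i)}}+\eps)^{-1}X^{(i)}(\theta_{n-1},U_n)$, so that $\theta_n=\theta_{n-1}+\gamma_n D_n$. Because $\grad F$ is $L_f$-Lipschitz,
\begin{equation*}
F(\theta_n)\le F(\theta_{n-1})-\gamma_n\sum_{i=1}^d f_X^{(i)}(\theta_{n-1})D_n^{(i)}+\tfrac{L_f\gamma_n^2}{2}\sum_{i=1}^d (D_n^{(i)})^2 ,
\end{equation*}
where I used $\grad F(\theta_{n-1})=-f_X(\theta_{n-1})$. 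I multiply this by $\1_{\{\tau\ge n\}}$, take expectations, and split each coordinate sum with $1=\1_{G_{n,i}}+\1_{G_{n,i}^c}$.

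The quadratic term on $G_{n,i}$ is bounded by \cref{item:ii_Prop21}, giving $\frac{L_fC_X(n)}{2}\gamma_n^2$. On $G_{n,i}^c$ it is kept as the last term of \eqref{eq:F-recursion-late-square-root}. The linear term on $G_{n,i}^c$ is bounded by its absolute value, giving the corresponding $G^c$ term. For the linear term on $G_{n,i}$, the indicators $\1_{\{\tau\ge n\}}\1_{G_{n,i}}$ and $f_X^{(i)}(\theta_{n-1})$ are $\cF_{n-1}$-measurable, so I condition on $\cF_{n-1}$. The conditional law of $X(\theta_{n-1},U_n)$ is that of $X(\vartheta,U)$ at $\vartheta=\theta_{n-1}$, since $U_n$ is independent of $\cF_{n-1}$. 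Hence
\begin{equation*}
\E[D_n^{(i)}\mid\cF_{n-1}]=\E[(\sqrt{v_n^{(i)}}+\eps)^{-1}\mid\cF_{n-1}]\,f_X^{(i)}(\theta_{n-1})+\cov\bigl((\sqrt{v_n^{(i)}}+\eps)^{-1},X^{(i)}(\theta_{n-1},U_n)\mid\cF_{n-1}\bigr).
\end{equation*}
Putting this back into the unconditional expectation, the first part becomes $-\gamma_n\E[\1_{\{\tau\ge n\}}\sum_i\1_{G_{n,i}}(\sqrt{v_n^{(i)}}+\eps)^{-1}|f_X^{(i)}(\theta_{n-1})|^2]$, which by \cref{item:i_Prop21} is at most $-c_v\gamma_n\E[\1_{\{\tau\ge n\}}\sum_i\1_{G_{n,i}}|f_X^{(i)}|^2]$. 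I add and subtract the $G^c$ part, so this equals $-c_v\gamma_n\E[\1_{\{\tau\ge n\}}|f_X(\theta_{n-1})|^2]$ plus the $c_v\gamma_n$ term on $G^c$. The upper \L{}ojasiewicz bound then gives $-c_v\gamma_n\E[\1_{\{\tau\ge n\}}|f_X|^2]\le-\frac{c_v}{C_{\mathrm{Loj}}}\gamma_n\widetilde\psi_n$.

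The covariance part is handled by Cauchy--Schwarz, first over the coordinates $i$ and then in $L^2(\P)$:
\begin{equation*}
\gamma_n\E\Bigl[\1_{\{\tau\ge n\}}\textstyle\sum_i\1_{G_{n,i}}|f_X^{(i)}||\cov(\cdot)|\Bigr]\le\gamma_n\E[\1_{\{\tau\ge n\}}|f_X(\theta_{n-1})|^2]^{1/2}\,C_\beta(n).
\end{equation*}
The lower \L{}ojasiewicz bound $|f_X|^2\le F/c_{\mathrm{Loj}}$ on $V$ turns the first factor into $c_{\mathrm{Loj}}^{-1/2}\widetilde\psi_n^{1/2}$. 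Collecting the pieces yields \eqref{eq:F-recursion-late-square-root}.

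For \eqref{eq:F-recursion-late-young}, I apply Young's inequality in the form $ab\le\frac{c_v}{2C_{\mathrm{Loj}}}a^2+\frac{C_{\mathrm{Loj}}}{2c_v}b^2$ with $a=\widetilde\psi_n^{1/2}$ and $b=C_\beta(n)c_{\mathrm{Loj}}^{-1/2}$, both multiplied by $\gamma_n$. This absorbs half of the drift and produces $\frac{C_\beta(n)^2C_{\mathrm{Loj}}}{2c_vc_{\mathrm{Loj}}}\gamma_n$.

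The main obstacle is making the conditioning step rigorous. One has to justify that each term is integrable, which the statement grants by assumption, and handle the $\eps=0$ convention. With $\eps=0$ the quotient $D_n^{(i)}$ is defined via $0/0=0$, and the inverse factors are only used on $\{\tau\ge n\}\cap G_{n,i}$, extended by zero elsewhere. So the conditional decomposition above must be carried out with the $\cF_{n-1}$-measurable indicator inside before conditioning. No further estimates are needed.
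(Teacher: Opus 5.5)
Your proposal is correct and follows essentially the same route as the paper. Both arguments use the $L_f$-smoothness inequality on $\{\tau\ge n\}$ and split each coordinate into $\1_{G_{n,i}}+\1_{G_{n,i}^c}$. On $G_{n,i}$, both combine the conditional covariance decomposition with (i) and the upper \L{}ojasiewicz bound for the drift, handle the covariance term by Cauchy--Schwarz plus the lower \L{}ojasiewicz bound, use (ii) for the quadratic term, and finish with the same Young's inequality. The only difference is presentational: you spell out the Cauchy--Schwarz step over coordinates and then in $L^2(\P)$, which the paper compresses into a single display.
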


\begin{proof}
Fix $n>n_0$. Since $\tau$ is the first exit time from $V$ after time $n_0$, the event $\{\tau\ge n\}$ implies $\theta_{n-1}\in V$. Since  $\grad F$ is $L_f$-Lipschitz continuous and $f_X=-\grad F$ on $V$ we get that for all $x\in V$ and $y\in\R^d$ 
\bas{\label{eq:F-recursion-late-smoothness}
F(y)\le F(x)-\scp{f_X(x)}{y-x}+\frac{L_f}{2}|y-x|^2.
}
Using that
\bas{\label{eq:F-recursion-late-update}
\theta_n=\theta_{n-1}+\gamma_n (\sqrt{v_n}+\eps)^{\circ(-1)}\circ X(\theta_{n-1},U_n),
}
Combining \eqref{eq:F-recursion-late-smoothness} and
\eqref{eq:F-recursion-late-update}, we conclude that on the event
$\{\tau\ge n\}$
\bas{\label{eq:F-recursion-late-taylor}
F(\theta_n)
&\le F(\theta_{n-1})
-\gamma_n \sum_{i=1}^d (\sqrt{v_n^{(i)}}+\eps)^{-1} f_X^{(i)}(\theta_{n-1})X^{(i)}(\theta_{n-1},U_n)\\
&\qquad + \frac{L_f\gamma_n^2}{2}\sum_{i=1}^d (\sqrt{v_n^{(i)}}+\eps)^{-2}X^{(i)}(\theta_{n-1},U_n)^2.
}
The events $G_{n,i}$ and $\{\tau\ge n\}$ are $\cF_{n-1}$-measurable. Moreover,
$\E[X^{(i)}(\theta_{n-1},U_n)|\cF_{n-1}]
=f_X^{(i)}(\theta_{n-1})$.
Hence, multiplying \eqref{eq:F-recursion-late-taylor} by
$\1_{\{\tau\ge n\}}$, using the conditional covariance decomposition on
$G_{n,i}$, and taking expectations, we arrive at
\bas{\label{eq:F-recursion-basic-estimate}
\psi_n
&\le \widetilde\psi_n
-\gamma_n \E\Bigl[\1_{\{\tau\ge n\}}\sum_{i=1}^d \1_{G_{n,i}}(\sqrt{v_n^{(i)}}+\eps)^{-1}|f_X^{(i)}(\theta_{n-1})|^2\Bigr]\\
&\qquad -\gamma_n \E\Bigl[\1_{\{\tau\ge n\}}\sum_{i=1}^d \1_{G_{n,i}}f_X^{(i)}(\theta_{n-1})
\cov((\sqrt{v_n^{(i)}}+\eps)^{-1},X^{(i)}(\theta_{n-1},U_n)|\cF_{n-1})\Bigr]\\
&\qquad +\gamma_n\E\Bigl[\1_{\{\tau\ge n\}}\sum_{i=1}^d\1_{G_{n,i}^c}|f_X^{(i)}(\theta_{n-1})(\sqrt{v_n^{(i)}}+\eps)^{-1}X^{(i)}(\theta_{n-1},U_n)|\Bigr]\\
&\qquad + \frac{L_f\gamma_n^2}{2}\E\Bigl[\1_{\{\tau\ge n\}}\sum_{i=1}^d \1_{G_{n,i}}(\sqrt{v_n^{(i)}}+\eps)^{-2}X^{(i)}(\theta_{n-1},U_n)^2\Bigr]\\
&\qquad + \frac{L_f\gamma_n^2}{2}\E\Bigl[\1_{\{\tau\ge n\}}\sum_{i=1}^d \1_{G_{n,i}^c}(\sqrt{v_n^{(i)}}+\eps)^{-2}X^{(i)}(\theta_{n-1},U_n)^2\Bigr].
} 
Next, we estimate the drift, covariance, and quadratic terms on the right-hand
side. By assumption~(i) and the upper \L{}ojasiewicz bound, we have
\bas{\label{eq:F-recursion-drift-bound}
\E[\1_{\{\tau\ge n\}}&\sum_{i=1}^d \1_{G_{n,i}}(\sqrt{v_n^{(i)}}+\eps)^{-1}|f_X^{(i)}(\theta_{n-1})|^2]\\
&\ge c_v\,\E[\1_{\{\tau\ge n\}}\sum_{i=1}^d\1_{G_{n,i}}|f_X^{(i)}(\theta_{n-1})|^2]\\
&\ge \frac{c_v}{C_{\mathrm{Loj}}}\,\widetilde\psi_n
-c_v\E\Bigl[\1_{\{\tau\ge n\}}\sum_{i=1}^d\1_{G_{n,i}^c}|f_X^{(i)}(\theta_{n-1})|^2\Bigr].
}
Next, we estimate the covariance term. As a consequence of the
Cauchy--Schwarz inequality, we have
\bas{\label{eq:34563}
&\Bigl|\E\Bigl[\1_{\{\tau\ge n\}}\sum_{i=1}^d \1_{G_{n,i}}f_X^{(i)}(\theta_{n-1}) \cov((\sqrt{v_n^{(i)}}+\eps)^{-1},X^{(i)}(\theta_{n-1},U_n)|\cF_{n-1})\Bigr]\Bigr|\\
&\le  \E\Bigl[\1_{\{\tau\ge n\}}|f_X(\theta_{n-1})|^2\Bigr]^{1/2}
\,\E\Bigl[\1_{\{\tau\ge n\}}\sum_{i=1}^d \1_{G_{n,i}}\cov((\sqrt{v_n^{(i)}}+\eps)^{-1},X^{(i)}(\theta_{n-1},U_n)|\cF_{n-1})^2\Bigr]^{1/2}.
}
By assumption~(iii), the second factor is bounded by $C_\beta(n)$. Moreover, on $\{\tau\ge n\}$ the lower \L{}ojasiewicz bound yields
\bas{\label{eq:F-recursion-late-lower-lojasiewicz}
|f_X(\theta_{n-1})|^2\le c_{\mathrm{Loj}}^{-1}F(\theta_{n-1}),
}
so that \eqref{eq:F-recursion-late-definitions} and
\eqref{eq:F-recursion-late-lower-lojasiewicz} give
\bas{\label{eq:F-recursion-late-gradient-bound}
\E\bigl[\1_{\{\tau\ge n\}} |f_X(\theta_{n-1})|^2\bigr]
\le c_{\mathrm{Loj}}^{-1}\,\widetilde\psi_n.
}
Consequently, we get with \eqref{eq:34563} and
\eqref{eq:F-recursion-late-gradient-bound} that
\bas{\label{eq:F-recursion-covariance-bound}
\E\Bigl[\1_{\{\tau\ge n\}}\sum_{i=1}^d \1_{G_{n,i}}f_X^{(i)}(\theta_{n-1}) \cov((\sqrt{v_n^{(i)}}+\eps)^{-1},X^{(i)}(\theta_{n-1},U_n)|\cF_{n-1})\Bigr]
\ge -\frac{C_\beta(n)}{c_{\mathrm{Loj}}^{1/2}}\widetilde\psi_n^{1/2}.
}
By assumption~(ii),
\bas{\label{eq:F-recursion-quadratic-bound}
\E\Bigl[\1_{\{\tau\ge n\}}\sum_{i=1}^d \1_{G_{n,i}}(\sqrt{v_n^{(i)}}+\eps)^{-2}X^{(i)}(\theta_{n-1},U_n)^2\Bigr]
\le C_X(n).
}
Combining~\eqref{eq:F-recursion-basic-estimate} with~\eqref{eq:F-recursion-drift-bound},
\eqref{eq:F-recursion-covariance-bound}, and
\eqref{eq:F-recursion-quadratic-bound} proves
\eqref{eq:F-recursion-late-square-root}.
For the second estimate, we use Young's inequality in the form
\bas{\label{eq:F-recursion-late-young-step}
\frac{C_\beta(n)}{c_{\mathrm{Loj}}^{1/2}}\widetilde\psi_n^{1/2}\le \frac{c_v}{2C_{\mathrm{Loj}}}\widetilde\psi_n+\frac{C_\beta(n)^2C_{\mathrm{Loj}}}{2c_vc_{\mathrm{Loj}}},
}
which, together with \eqref{eq:F-recursion-late-square-root} and
\eqref{eq:F-recursion-late-young-step}, proves
\eqref{eq:F-recursion-late-young}.
\end{proof}

\section{Analysis of the property in \cref{item:iii_Prop21} of \cref{prop:F-recursion-md-loj-event}}
\label{sec:analysis-prop-iii}

This section controls the covariance correction appearing in \cref{item:iii_Prop21}
of \cref{prop:F-recursion-md-loj-event}.
A conditional Lipschitz argument bounds each coordinatewise covariance
in terms of a conditional third moment of the innovation and
a negative third moment of $ v_{ n - 1 } $. An auxiliary
estimate for the resulting $ \beta $-dependent quotient then distinguishes the
initial $ n^{ - 1 } $-regime from the asymptotic $ 1 - \beta $-regime.
%
%
%
%
\begin{prop}
\label{prop:prop_iii}
Let $\beta\in[ \nicefrac{1}{2}, 1 ) $,
$ \eps \in [0,\infty) $,
$ i \in \{ 1, 2, \dots, d \} $,
$ n \in \N \backslash \{ 1 \} $,
let $ A_n \in \cF_{ n - 1 } $,
and let $ C_X^{ (3) } , C_v^{ (-3) } \in (0,\infty) $.
Assume that
\bas{
  \E\bigl[
    | X^{(i)}(\theta_{n-1},U_n) |^3
    |
    \cF_{ n - 1 }
  \bigr]
  \le C_X^{ (3) }
\qquad\text{on }A_n
}
and
\bas{
\label{eq:inverse_moment_assumption}
  \E[ \1_{ A_n }( v_{ n - 1 }^{ (i) } )^{ - 3 } ]
  \le C_v^{ ( - 3 ) }
  .
}
The negative-moment assumption implies that $v_{n-1}^{(i)}>0$ almost surely
on $A_n$. If $\eps=0$, the conditional covariances in the conclusion are
therefore evaluated on $A_n$ and extended by zero to $ \Omega \backslash A_n $.
Then one has
\bas{
\label{eq:cov-first-step-4}
  \E\Bigl[
    \1_{A_n}\cov((\sqrt{v_n^{(i)}}+\eps)^{-1},
    X^{(i)}(\theta_{n-1},U_n)|\cF_{n-1})^2
  \Bigr]^{1/2}
  \le
  4 C_X^{(3)} (C_v^{(-3)})^{1/2}\,\frac{1-\beta}{1-\beta^{n-1}}.
}
\end{prop}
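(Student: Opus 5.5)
Fix $n$ and $i$, write $a=\frac{\beta(1-\beta^{n-1})}{1-\beta^n}$, $b=\frac{1-\beta}{1-\beta^n}$ and $Y=X^{(i)}(\theta_{n-1},U_n)$. By \eqref{eq:v_n_rec} we have $v_n^{(i)}=a v_{n-1}^{(i)}+bY^2$. The plan is to view $(\sqrt{v_n^{(i)}}+\eps)^{-1}$ as a deterministic function $g(Y)$, where
\[
g(y)=(\sqrt{w+by^2}+\eps)^{-1}, \qquad w=a v_{n-1}^{(i)},
\]
and $w$ is frozen by conditioning on $\cF_{n-1}$ (it is $\cF_{n-1}$-measurable and $Y$ is independent of $\cF_{n-1}$ given $\theta_{n-1}$). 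On $A_n$ we have $w>0$ a.s., so $g$ is well defined even for $\eps=0$.

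\textbf{Covariance bound.} Use the standard representation
\[
\cov(g(Y),Y\mid\cF_{n-1})=\tfrac12\E[(g(Y)-g(Y'))(Y-Y')\mid\cF_{n-1}],
\]
with $Y'$ an independent copy of $Y$ given $\cF_{n-1}$. Next control the increment of $g$. Since $g$ is a function of $y^2$ and $\sqrt{w+by^2}-\sqrt w\le b y^2/(2\sqrt w)$, one gets
\[
|g(y)-g(0)|\le \frac{b y^2}{2\sqrt{w}\,(\sqrt w+\eps)^2}\le \frac{b y^2}{2 w^{3/2}} .
\]
Combine this with $|g(y)-g(y')|\le|g(y)-g(0)|+|g(y')-g(0)|$. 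This gives
\[
|\cov|\le \tfrac{b}{4}w^{-3/2}\,\E\bigl[(Y^2+Y'^2)\,|Y-Y'|\;\big|\;\cF_{n-1}\bigr].
\]
By H\"older's inequality, $\E[Y^2|Y'|\mid\cF_{n-1}]\le \E[|Y|^3\mid\cF_{n-1}]$. The conditional expectation is therefore at most $4C_X^{(3)}$, and on $A_n$
\[
|\cov|\le b\,C_X^{(3)}\,w^{-3/2}.
\]

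\textbf{Squaring and integrating.} Squaring gives $b^2 (C_X^{(3)})^2 a^{-3}(v_{n-1}^{(i)})^{-3}$. Taking expectations on $A_n$ and using \eqref{eq:inverse_moment_assumption} yields the bound
\[
C_X^{(3)}(C_v^{(-3)})^{1/2}\, b\, a^{-3/2}.
\]

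\textbf{Reduction to the $\beta$-dependent quotient.} It remains to show $b\,a^{-3/2}\le 4\frac{1-\beta}{1-\beta^{n-1}}$. I expect this step, where both the $n^{-1}$-regime and the $1-\beta$-regime must be covered, to be the main (though elementary) obstacle. Write
\[
b\,a^{-3/2}=\frac{1-\beta}{1-\beta^{n-1}}\cdot\frac{(1-\beta^{n})^{1/2}}{\beta^{3/2}}\Bigl(\frac{1-\beta^{n}}{1-\beta^{n-1}}\Bigr)^{-1/2}\cdot\frac{1-\beta^{n-1}}{1-\beta^n}\cdot\frac{1-\beta^n}{1-\beta^{n-1}}.
\]
Simplifying directly, this is
\[
b\,a^{-3/2}=\frac{1-\beta}{1-\beta^{n-1}}\,\beta^{-3/2}\Bigl(\frac{1-\beta^n}{1-\beta^{n-1}}\Bigr)^{1/2}.
\]
Since $\beta\ge\nicefrac12$ we have $\beta^{-3/2}\le 2^{3/2}$. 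Moreover $\frac{1-\beta^n}{1-\beta^{n-1}}=1+\frac{\beta^{n-1}(1-\beta)}{1-\beta^{n-1}}\le 2$, because $1-\beta^{n-1}\ge \beta^{n-1}(1-\beta)$ for $n\ge2$ (as $1-\beta^{n-1}\ge 1-\beta\ge\beta^{n-1}(1-\beta)$). Hence the product of the last two factors is at most $2^{3/2}\cdot 2^{1/2}=4$, giving \eqref{eq:cov-first-step-4}. Finally, if the constant from the increment step turns out to be somewhat worse, I would tighten the H\"older step using the conditional symmetry between $Y$ and $Y'$.
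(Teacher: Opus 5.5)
Your proof is correct and follows the paper's argument. You freeze the $\cF_{n-1}$-measurable part $a v_{n-1}^{(i)}$, bound the increment of $y\mapsto(\sqrt{y}+\eps)^{-1}$ by $\tfrac{b y^2}{2}w^{-3/2}$, and reach exactly the paper's pointwise bound $b\,C_X^{(3)}w^{-3/2}$; the only difference there is that you symmetrize with an independent copy where the paper subtracts the $\cF_{n-1}$-measurable value $h(Y_n^{(i)})$. You then use $\beta^{-3/2}\le 2^{3/2}$ and $\frac{1-\beta^n}{1-\beta^{n-1}}\le 2$, which the paper gets from \cref{le:beta-power-ratio-monotone} as $\le\frac{n}{n-1}$.

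One cosmetic slip: your first displayed factorization of $b\,a^{-3/2}$ is not an identity, since it collapses to $(1-\beta)\beta^{-3/2}(1-\beta^{n-1})^{-1/2}$. The ``simplifying directly'' formula you actually use is correct, and the constant $4$ comes out exactly, so the final tightening remark is unnecessary.
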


\begin{proof}
Set
\bas{
Y_n^{(i)}=\frac{\beta(1-\beta^{n-1})}{1-\beta^n}v_{n-1}^{(i)}
=\kappa_n v_{n-1}^{(i)}
\qquad
  \text{and}
\qquad
  Z_n=\frac{1-\beta}{1-\beta^n}.
}
Then
\bas{
v_n^{(i)}=Y_n^{(i)}+Z_nX^{(i)}(\theta_{n-1},U_n)^2.
}
The negative-moment assumption gives $v_{n-1}^{(i)}>0$ almost surely on
$A_n$, and hence $Y_n^{(i)}>0$ there. In what follows, all conditional
identities involving inverse factors are understood on $A_n$.
The function 
\bas{
h:(0,\infty)\to(0,\infty), \ y\mapsto h(y)=\frac1{\sqrt y+\eps}
}
is for every $y_0\in(0,\infty)$ Lipschitz continuous on $[y_0,\infty)$ with constant $\frac12 y_0^{-3/2}$. 
Using that $Y_n^{(i)}$ is $\cF_{n-1}$-measurable, we conclude on $A_n$ that
\bas{
&|\cov((\sqrt{v_n^{(i)}}+\eps)^{-1},X^{(i)}(\theta_{n-1},U_n)|\cF_{n-1})|\\
&=|\cov(h(Y_n^{(i)}+Z_n X^{(i)}(\theta_{n-1},U_n)^2),X^{(i)}(\theta_{n-1},U_n)|\cF_{n-1})|\\
&= \Bigl|\E\Bigl[(h(Y_n^{(i)}+Z_n X^{(i)}(\theta_{n-1},U_n)^2)-h(Y_n^{(i)}))
\bigl(X^{(i)}(\theta_{n-1},U_n)-f_X^{(i)}(\theta_{n-1})\bigr)\Bigm|\cF_{n-1}\Bigr]\Bigr|\\
&\le \frac{Z_n}{2(Y_n^{(i)})^{3/2}}\,\E\Bigl[X^{(i)}(\theta_{n-1},U_n)^2
\bigl|X^{(i)}(\theta_{n-1},U_n)-f_X^{(i)}(\theta_{n-1})\bigr|\Bigm|\cF_{n-1}\Bigr]\\
&\le \frac{Z_n}{2(Y_n^{(i)})^{3/2}}
\Bigl(\E[|X^{(i)}(\theta_{n-1},U_n)|^3|\cF_{n-1}]
+\E[|X^{(i)}(\theta_{n-1},U_n)|^2|\cF_{n-1}]\,\E[|X^{(i)}(\theta_{n-1},U_n)||\cF_{n-1}]\Bigr)\\
&\le \frac{Z_n}{(Y_n^{(i)})^{3/2}}\E[|X^{(i)}(\theta_{n-1},U_n)|^3|\cF_{n-1}].
}
On $A_n$, the assumption on the third moment therefore implies
\bas{
|\cov((\sqrt{v_n^{(i)}}+\eps)^{-1},X^{(i)}(\theta_{n-1},U_n)|\cF_{n-1})|^2
\le Z_n^2 (C_X^{(3)})^2 (Y_n^{(i)})^{-3}.
}
Multiplying by $\1_{A_n}$ and taking expectations yields
\bas{\label{eq:prop-iii-cov-coordinate-bound}
\E\Bigl[\1_{A_n}\cov((\sqrt{v_n^{(i)}}+\eps)^{-1},
X^{(i)}(\theta_{n-1},U_n)|\cF_{n-1})^2\Bigr]
\le Z_n^2 (C_X^{(3)})^2 \E[\1_{A_n}(Y_n^{(i)})^{-3}].
}
Since $Y_n^{(i)}=\kappa_n v_{n-1}^{(i)}$, we obtain
\bas{\label{eq:prop-iii-Y-negative-moment-bound}
\E[\1_{A_n}(Y_n^{(i)})^{-3}]
= \Bigl(\frac{1-\beta^n}{\beta(1-\beta^{n-1})}\Bigr)^3
\E[\1_{A_n}(v_{n-1}^{(i)})^{-3}]\le \Bigl(\frac{1-\beta^n}{\beta(1-\beta^{n-1})}\Bigr)^3 C_v^{(-3)}.
}
Hence, combining~\eqref{eq:prop-iii-cov-coordinate-bound} and~\eqref{eq:prop-iii-Y-negative-moment-bound} and using the definition of $Z_n$,
\bas{
\E\Bigl[\1_{A_n}\cov((\sqrt{v_n^{(i)}}+\eps)^{-1},
X^{(i)}(\theta_{n-1},U_n)|\cF_{n-1})^2\Bigr]
&\le (C_X^{(3)})^2 C_v^{(-3)}
\Bigl(\frac{1-\beta}{1-\beta^n}\Bigr)^2
\Bigl(\frac{1-\beta^n}{\beta(1-\beta^{n-1})}\Bigr)^3\\
&= (C_X^{(3)})^2 C_v^{(-3)}
\beta^{-3}\frac{(1-\beta)^2(1-\beta^n)}{(1-\beta^{n-1})^3}.
}
By \cref{le:beta-power-ratio-monotone}, we have
\bas{
\frac{1-\beta^n}{n}\le \frac{1-\beta^{n-1}}{n-1} \text{, \ \ 
so that \ \ }
\frac{1-\beta^n}{1-\beta^{n-1}}\le \frac{n}{n-1}.
}
we infer, using $\beta\in[\frac12,1)$ and $n\ge 2$, that
\bas{
\beta^{-3}\frac{1-\beta^n}{1-\beta^{n-1}}
\le 8\cdot 2 =16.
}
Taking square roots, we finish the proof with the conclusion that
\bas{
\E\Bigl[\1_{A_n}\cov((\sqrt{v_n^{(i)}}+\eps)^{-1},
X^{(i)}(\theta_{n-1},U_n)|\cF_{n-1})^2\Bigr]^{1/2}
\le 4 C_X^{(3)} (C_v^{(-3)})^{1/2}\,\frac{1-\beta}{1-\beta^{n-1}}.
}
\end{proof}
The latter proposition incorporates an $n$-dependent term $\frac{1-\beta}{1-\beta^{n-1}}$. This is analysed in the next lemma.

\begin{lemma}
\label{le:beta-quotient-regimes}
Let $ n \in \N $, $ \beta \in [0,1) $.
Then one has that
\bas{
\frac{1-\beta}{1-\beta^n}\le \begin{cases} \frac{2}{n}, & \text{ if } n\le (1-\beta)^{-1}\\
\frac{1-\beta}{1-e^{-1}}, & \text{ if } n\ge (1-\beta)^{-1}
\end{cases}
}
\end{lemma}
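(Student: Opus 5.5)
We prove the two cases separately; both rest on elementary bounds for $1-\beta^n$.

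\emph{Case $n\le(1-\beta)^{-1}$.} Here $n(1-\beta)\le 1$, and we want a lower bound of the form $1-\beta^n\ge \tfrac12 n(1-\beta)$. Bernoulli's inequality gives an upper bound on $\beta^n$ only in the wrong direction, so instead we start from the identity
\begin{equation*}
1-\beta^n=(1-\beta)\sum_{k=0}^{n-1}\beta^k .
\end{equation*}
Since $\beta^k\ge 1-k(1-\beta)$ by Bernoulli, we obtain
\begin{equation*}
\sum_{k=0}^{n-1}\beta^k\ge n-(1-\beta)\frac{n(n-1)}{2}.
\end{equation*}
Using $(1-\beta)(n-1)\le 1$, this is at least $n-\frac n2=\frac n2$. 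Hence $1-\beta^n\ge \frac{n(1-\beta)}{2}$, which is exactly the bound $\frac{1-\beta}{1-\beta^n}\le\frac2n$. If $\beta=0$, the claim is trivial.

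\emph{Case $n\ge(1-\beta)^{-1}$.} It suffices to show $\beta^n\le e^{-1}$. We use $\ln\beta\le \beta-1$, so that $\beta^n=e^{n\ln\beta}\le e^{-n(1-\beta)}\le e^{-1}$, because $n(1-\beta)\ge1$. Consequently $1-\beta^n\ge 1-e^{-1}$, and dividing gives the second bound. The case $\beta=0$ is again immediate.

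The only step that needs any care is finding a lower bound for $1-\beta^n$ in the first regime. That is why we go through the geometric-sum identity combined with Bernoulli's inequality, rather than estimating $\beta^n$ directly.
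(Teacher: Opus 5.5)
Your proof is correct. Your second case is exactly the paper's argument; the first case is where you take a different route.

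The paper sets $x=1-\beta$ and uses the single estimate $\beta^n=(1-x)^n\le e^{-nx}$ in \emph{both} regimes:
for $nx\le 1$ it adds the concavity bound $1-e^{-y}\ge y/2$ on $[0,1]$, which gives $1-\beta^n\ge nx/2$;
for $nx\ge 1$ it uses monotonicity, which gives $1-\beta^n\ge 1-e^{-1}$.

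So, contrary to your closing remark, bounding $\beta^n$ directly from above does work in the first regime too. All that is needed is a linear lower bound for $1-e^{-y}$ near $0$. (Also, Bernoulli's inequality gives the \emph{lower} bound $\beta^n\ge 1-n(1-\beta)$, that is, an upper bound on $1-\beta^n$; that is why it points the wrong way.)

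Your route through $1-\beta^n=(1-\beta)\sum_{k=0}^{n-1}\beta^k$ and termwise Bernoulli is purely algebraic and avoids the exponential function in that regime. It is in the same spirit as the estimate $\sum_{k=0}^{n-1}\beta^k\ge n\beta^{n-1}$ that the paper later uses for the startup probability.

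Your route is also slightly sharper for small $n$. If you use $(1-\beta)n\le 1$ directly instead of $(1-\beta)(n-1)\le 1$, you get $\sum_{k=0}^{n-1}\beta^k\ge \frac{n+1}{2}$, which gives the bound $\frac{2}{n+1}$. The paper's route with the chord bound $1-e^{-y}\ge (1-e^{-1})y$ gives $\frac{1}{(1-e^{-1})n}$, which is better once $n\ge 4$.

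In short, the paper's version gains uniformity, since one inequality drives both cases, while yours gains elementarity. Either suffices for the constant $2$ used later.
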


\begin{proof}
Set
$
x=1-\beta\in(0,1].
$
Then
\bas{
\frac{1-\beta}{1-\beta^n}=\frac{x}{1-(1-x)^n}.
}
Assume first that $n\le (1-\beta)^{-1}$, that is, $nx\le 1$. Since $(1-x)^n\le e^{-nx}$ and $1-e^{-y}\ge y/2$ for every $y\in[0,1]$, we obtain
\bas{
1-(1-x)^n\ge 1-e^{-nx}\ge \frac{nx}{2}
\text{, \ \  which yields \ \ }
\frac{1-\beta}{1-\beta^n}=\frac{x}{1-(1-x)^n}\le \frac{2}{n}.
}
This proves the first assertion.

Now assume that $n\ge (1-\beta)^{-1}$, that is, $nx\ge 1$. Then
\bas{
\beta^n=(1-x)^n\le e^{-nx}\le e^{-1} \text{ \ \ and, hence, \ \ }
\frac{1-\beta}{1-\beta^n}\le \frac{1-\beta}{1-e^{-1}}.
}
This proves the second assertion.
\end{proof}

\section{Invserve moment estimates for the second moment process \texorpdfstring{$ ( v_n )_{ n \in \N_0 } $}{v\_n} in the RMSprop method}
\label{sec:negative-moments}
This section establishes the negative moment estimates needed for the covariance
and quadratic error bounds. We introduce the $(c,q)$-regularity condition,
derive conditional Laplace-transform estimates for the stopped conditioner, and
convert these estimates into bounds for arbitrary negative moments. Since such
bounds need not hold initially, the final result is formulated on the event
that the corresponding coordinatewise startup time has already occurred.

\begin{defi}\label{def:regular}Let $c,q\in(0,\infty)$.
We say that an innovation $ (X, U ) $ is \emph{$(c,q)$-regular}
on a set $ \cV \subseteq \R^d $
if and only if for all $\theta\in \cV$, $i\in\{1,\dots,d\}$, $\lambda\in[0,\infty)$
we have that
\bas{
  - \ln \E[e^{-\lambda X^{(i)}(\theta,U)^2}]\ge (c\lambda)\wedge q.
}
\end{defi}
First we will deduce that $ ( c, q ) $-regular innovations
do allow us to estimate the exponential moments of $ v_n $.

\begin{prop}
\label{prop:7463}
Let $ ( X, U ) $ be a $ ( c, q ) $-regular innovation
on a measurable set $ \cV $ and let $ ( \theta_n ) $ be an \RMSprop\ algorithm
with arbitrary parameter tuple $ ( \beta, \eps, (\gamma_n), \theta_0 ) $ as in the introduction.
Let $ n_0 \in \N_0 $ and let $ \tau $ be the stopping time given by
\bas{ 
 \tau=\inf\{n\ge n_0: \theta_n\not\in \cV\}.}
One has for every $n\in\N_0\cap [n_0,\infty)$, $i\in\{1,\dots,d\} $, $ \lambda\in[0,\infty)$ that
\bas{\label{eq:7882356}
-\ln \E\bigl[e^{-\lambda (v_n^{(i)}+\1_{\{ n\le \tau\}^c}\infty)}|\cF_{n_0}\bigr]\ge\sum_{k=n_0+1}^n \left( c \frac{1-\beta}{1-\beta^n} \beta^{n-k} \lambda \right) \wedge q +\frac{1-\beta^{n_0}}{1-\beta^n} \beta^{n-n_0} \lambda  v_{n_0}^{(i)} .
}
In the case $n=n_0=0$, the last term on the right-hand side is understood as zero.
\end{prop}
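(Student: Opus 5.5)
The plan is to unroll the recursion \eqref{eq:v_n_rec} from time $n_0$ to time $n$, and then peel off the innovations one at a time by conditioning. Regularity (\cref{def:regular}) is applied to each innovation at a parameter that is frozen by the conditioning.

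\textbf{Unrolling.} From the explicit formula \eqref{eq:def_RMSprop_vn}, the summands with indices $k\le n_0$ (that is, $j=n-k\ge n-n_0$) add up to exactly $\frac{1-\beta^{n_0}}{1-\beta^n}\beta^{n-n_0}v_{n_0}^{(i)}$. Hence, with $a_k=\frac{1-\beta}{1-\beta^n}\beta^{n-k}$,
\begin{equation*}
v_n^{(i)}=\frac{1-\beta^{n_0}}{1-\beta^n}\beta^{n-n_0}v_{n_0}^{(i)}+\sum_{k=n_0+1}^n a_k\,X^{(i)}(\theta_{k-1},U_k)^2 .
\end{equation*}
The first term is $\cF_{n_0}$-measurable, so it factors out of the conditional expectation as $\exp(-\lambda\frac{1-\beta^{n_0}}{1-\beta^n}\beta^{n-n_0}v_{n_0}^{(i)})$. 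This produces the last term in \eqref{eq:7882356}. For $n=n_0=0$ the term is absent. More generally, if $n=n_0$ the sum is empty and $\tau\ge n_0$ always holds, so the claim is an identity.

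\textbf{Backward induction.} Since $e^{-\lambda(v+\1_{\{n\le\tau\}^c}\infty)}=e^{-\lambda v}\1_{\{\tau\ge n\}}$, it suffices to show, for $m=n_0,\dots,n$, that
\begin{equation*}
\E\Bigl[\1_{\{\tau\ge m\}}\exp\Bigl(-\lambda\sum_{k=n_0+1}^m a_kX^{(i)}(\theta_{k-1},U_k)^2\Bigr)\Bigm|\cF_{n_0}\Bigr]\le\exp\Bigl(-\sum_{k=n_0+1}^m (c a_k\lambda)\wedge q\Bigr),
\end{equation*}
with the coefficients $a_k$ fixed (they depend on $n$, not on $m$). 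The argument then proceeds as follows.
\begin{enumerate}[label=(\alph*)]
\item By definition of $\tau$, one has $\1_{\{\tau\ge m\}}\le\1_{\{\tau\ge m-1\}}\1_{\{\theta_{m-1}\in\cV\}}$, and the right-hand side is $\cF_{m-1}$-measurable.
\item Condition on $\cF_{m-1}$. Because $U_m$ is independent of $\cF_{m-1}$, $\theta_{m-1}$ is $\cF_{m-1}$-measurable, and $X$ is product measurable, the freezing (substitution) lemma gives
\begin{equation*}
\E[e^{-\lambda a_m X^{(i)}(\theta_{m-1},U_m)^2}\mid\cF_{m-1}]=g(\theta_{m-1}),\qquad g(\theta)=\E[e^{-\lambda a_m X^{(i)}(\theta,U)^2}].
\end{equation*}
\item On $\{\theta_{m-1}\in\cV\}$, regularity with parameter $\lambda a_m$ yields $g(\theta_{m-1})\le e^{-(c\lambda a_m)\wedge q}$.
\item Apply the tower property for $\cF_{n_0}\subseteq\cF_{m-1}$ and then the induction hypothesis at level $m-1$ (after dropping the harmless factor $\1_{\{\theta_{m-1}\in\cV\}}\le1$). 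This closes the induction.
\end{enumerate}
Taking $m=n$ and then $-\ln$ yields \eqref{eq:7882356}.

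\textbf{Expected obstacle.} The estimate is essentially mechanical. The only point needing care is step (b): justifying the freezing lemma with $\theta_{m-1}$ random and $X$ merely product measurable. This is standard, since all integrands lie in $[0,1]$ and no integrability issue arises. Some bookkeeping is also needed so that the stopping indicator is $\cF_{m-1}$-measurable at each step, which is exactly what (a) provides. Here the convention $e^{-\infty}=0$ handles the case $\tau<n$.
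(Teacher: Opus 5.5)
Your proposal is correct and follows essentially the paper's argument. The paper likewise splits off the $\cF_{n_0}$-measurable term $\frac{1-\beta^{n_0}}{1-\beta^n}\beta^{n-n_0}v_{n_0}^{(i)}$, encodes stopping by summands $Z_k$ set to $\infty$ for $k>\tau$ (equivalent to your indicator $\1_{\{\tau\ge m\}}$), conditions on $\cF_{n-1}$ to apply $(c,q)$-regularity at $\theta_{n-1}\in\cV$, and then iterates; your explicit induction in $m$ (with the $n$-dependent coefficients $a_k$ held fixed) and your appeal to the freezing lemma just spell out what the paper compresses into the word ``iterating.''
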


\begin{proof}
Fix $i\in\{1,\dots,d\}$, $\lambda\in[0,\infty)$.
The statement is trivial for $n=n_0$. We thus fix $n\in\N$ with $n>n_0$ and note that
\bas{
v_n^{(i)}
& =\frac{1-\beta}{1-\beta^{n}}\sum_{k=1}^{n} \beta^{n-k} X^{(i)}(\theta_{k-1}, U_{k})^2\\
&=  \sum_{k=n_0+1}^{n} \frac{1-\beta}{1-\beta^{n}}\beta^{n-k} X^{(i)}(\theta_{k-1}, U_{k})^2 + v_{n_0}^{(i)}\frac{1-\beta^{n_0}}{1-\beta^{n}} \beta^{n-n_0}.
}
We let for $k=n_0+1,\dots,n$
\bas{
Z_k:=\begin{cases} \frac{1-\beta}{1-\beta^{n}}\beta^{n-k} X^{(i)}(\theta_{k-1}, U_{k})^2, & \text{ if } k\le \tau,\\
\infty, & \text{ else.}
\end{cases}
}
Then one has
\bas{
v_n^{(i)}+\1_{\{ n\le \tau\}^c}\infty = \sum_{k=n_0+1}^{n} Z_k + v_{n_0}^{(i)}\frac{1-\beta^{n_0}}{1-\beta^{n}} \beta^{n-n_0}.
}
Moreover,
\bas{
\E[e^{-\lambda \sum_{k=n_0+1}^n Z_k}|\cF_{n-1}] &= e^{-\lambda \sum_{k=n_0+1}^{n-1} Z_k } \1_{\{n\le\tau\}} \,   \E\Bigl[\exp\bigl\{-\lambda  \frac{1-\beta}{1-\beta^{n}}X^{(i)}(\theta_{n-1}, U_{n})^2 \bigr\}\Big|\cF_{n-1}\Bigr] \\
&\le  e^{-\lambda \sum_{k=n_0+1}^{n-1} Z_k } \exp\Bigl \{- \Bigl(c \frac{1-\beta}{1-\beta^{n}}\lambda \Bigr)\wedge q\Bigr\}.
}
Iterating yields that 
\bas{
\E[e^{-\lambda \sum_{k=n_0+1}^n Z_k}|\cF_{n_0}]  \le  \exp\Bigl \{- \sum_{k=n_0+1}^n \Bigl(c  \frac{1-\beta}{1-\beta^{n}} \beta^{n-k}\lambda \Bigr)\wedge q\Bigr\}.
}
Consequently, one has that
\bas{
\E[e^{-\lambda (v_n^{(i)}+\1_{\{n\le \tau\}^c} \infty)}|\cF_{n_0}] &=e^{-\lambda v_{n_0}^{(i)}\frac{1-\beta^{n_0}}{1-\beta^{n}} \beta^{n-n_0}} \E[e^{-\lambda \sum_{k=n_0+1}^n Z_k}|\cF_{n_0}]  \\
&\le  \exp\Bigl \{- \sum_{k=n_0+1}^n \Bigl(c  \frac{1-\beta}{1-\beta^{n}} \beta^{n-k}\lambda \Bigr)\wedge q -\frac{1-\beta^{n_0}}{1-\beta^{n}} \beta^{n-n_0}\lambda v_{n_0}^{(i)}\Bigr\}.
}
This finishes the proof.
\end{proof}

Next, we will deduce a negative moment estimate for random variables satisfying an exponential moment estimate like (\ref{eq:7882356}). The moment estimate will make use of the following technical result.

\begin{lemma}\label{le:beta-power-ratio-monotone}
Let $\beta\in[0,1)$. The function
\bas{
f:(0,\infty)\to(0,\infty),\qquad f(x)=\frac{1-\beta^x}{x}
}
is decreasing. In particular, for every $n\in\N$,
\bas{
\frac{1-\beta}{1-\beta^n}\ge \frac1n.
}
\end{lemma}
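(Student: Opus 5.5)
The plan is to prove the monotonicity of $f(x)=(1-\beta^x)/x$ on $(0,\infty)$ by a convexity or integral-representation argument, and then read off the discrete inequality. The case $\beta=0$ is trivial: then $\beta^x=0$ for $x>0$, so $f(x)=1/x$ is decreasing, and $(1-\beta)/(1-\beta^n)=1\ge 1/n$. For $\beta\in(0,1)$, set $a=-\ln\beta>0$ and write $1-\beta^x=1-e^{-ax}=\int_0^x a e^{-as}\,\dd s$. Substituting $s=xt$ gives
\begin{equation*}
f(x)=\frac{1-e^{-ax}}{x}=\int_0^1 a\,e^{-axt}\,\dd t .
\end{equation*}
For each fixed $t\in(0,1]$ the integrand $x\mapsto a e^{-axt}$ is strictly decreasing, so $f$ is strictly decreasing on $(0,\infty)$.

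An alternative route is to differentiate: $f'(x)=x^{-2}\,g(ax)$ with $g(y)=ye^{-y}-(1-e^{-y})$. Since $g(0)=0$ and $g'(y)=-ye^{-y}<0$ for $y>0$, we get $f'<0$. Either argument suffices, and I would present the integral one because it avoids any sign bookkeeping.

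For the consequence, apply monotonicity with $1\le n$ to obtain $f(n)\le f(1)$, that is, $(1-\beta^n)/n\le 1-\beta$. Since $1-\beta^n>0$, rearranging gives $(1-\beta)/(1-\beta^n)\ge 1/n$. In the same way, comparing $n-1$ and $n$ yields the inequality $\frac{1-\beta^n}{n}\le\frac{1-\beta^{n-1}}{n-1}$ that was already used in the proof of \cref{prop:prop_iii}.

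There is no real obstacle here. The only points needing care are the degenerate value $\beta=0$ and the requirement that $1-\beta^n>0$ when rearranging, both of which are handled above.
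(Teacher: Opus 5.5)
Your proof is correct, and your primary argument differs from the paper's. The paper differentiates, writes the numerator of $f'(x)$ as $(y+1)e^{-y}-1$ with $y=-x\ln\beta$, and concludes from $e^{y}>1+y$. That is exactly your ``alternative route'': your $g(y)$ is this numerator, and your observation $g(0)=0$, $g'(y)=-ye^{-y}<0$ is an equivalent way of seeing $(1+y)e^{-y}<1$.

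Your main route is the representation $f(x)=\int_0^1 a\,e^{-axt}\,\mathrm{d}t$. It avoids computing and signing a derivative, and it gives more at no extra cost. It exhibits $f$ as an average of decreasing exponentials in $x$, so $f$ is not only strictly decreasing but also convex, indeed completely monotone. The derivative check is slightly shorter to write down but only delivers the sign of $f'$, which is all the paper needs.

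Your deduction of $\frac{1-\beta}{1-\beta^n}\ge\frac1n$ from $f(n)\le f(1)$ matches the paper. So does your remark that comparing other pairs of arguments gives further ratio bounds. Comparing $m\le n$ gives $\frac{1-\beta^n}{1-\beta^m}\le\frac nm$, and this is how the lemma is actually used later: with $m=n-1$ in \cref{prop:prop_iii} and \cref{prop:prop_ii}, and with $m=n-r$ and $m=n_0$ in \cref{le:neg_mom_1}.
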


\begin{proof}
If $\beta=0$, then $f(x)=x^{-1}$ for every $x>0$, so monotonicity is
clear. Now suppose that $\beta\in(0,1)$. For $x>0$, one has
\bas{
f'(x)=\frac{-x \beta^x \ln \beta - 1 + \beta^x}{x^2}.
}
Letting $y=-x\ln\beta>0$, the numerator becomes
\bas{
(y+1)e^{-y}-1.
}
Since $e^y>1+y$ for all $y>0$, we obtain $(y+1)e^{-y}<1$. Hence the numerator is strictly negative, so that $f'(x)<0$ for all $x>0$. This proves the monotonicity assertion.

For every $n\in\N$, monotonicity implies that
\bas{
\frac{1-\beta^n}{n}=f(n)\le f(1)=1-\beta,
}
which implies the stated inequality.
\end{proof}

\begin{lemma}[Inverse moment estimates]
\label{le:neg_mom_1}
Let $c,q,\rho\in(0,\infty)$, $p\in[1,\infty)$, $\beta \in (0, 1) $, $ n, n_0 \in \N $ satisfy
\bas{\label{eq:2356374}
q+p\ln\beta\ge \rho
}
and $n> n_0$. 
Let $V_n^*$ be a non-negative random variable which satisfies for all $ \lambda \in [0,\infty) $ that
\bas{\label{eq:7634589-1}
  - \ln \mathbb{E}\bigl[ e^{-\lambda V_n^*} \bigr]
  \ge \sum_{k=n_0+1}^n \left( c \frac{1-\beta}{1-\beta^n} \beta^{n-k} \lambda \right) \wedge q + c \frac{1-\beta^{n_0}}{1-\beta^n} \beta^{n-n_0} \lambda.
}
Then \bas{\label{eq:7828341-2}
\E[(V_n^*)^{-p}]&\le \frac1{c^p} D_{\rho,p},
}
where
\bas{
\label{eq:D-rho-p}
  D_{\rho,p}:=\sum_{r=0}^\infty (r+1)^p e^{-\rho r}<\infty.
}
\end{lemma}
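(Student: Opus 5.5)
We convert the Laplace-transform bound into negative moments through the identity
\[
  x^{-p} = \frac{1}{\Gamma(p)} \int_0^\infty \lambda^{p-1} e^{-\lambda x} \, \dd\lambda ,
\]
valid for $x \in (0,\infty)$. By Tonelli's theorem, with the convention that $(V_n^*)^{-p} = \infty$ on $\{V_n^* = 0\}$, it yields
\[
  \E[(V_n^*)^{-p}] = \frac{1}{\Gamma(p)} \int_0^\infty \lambda^{p-1} \, \E[e^{-\lambda V_n^*}] \, \dd\lambda .
\]
It therefore suffices to bound $\E[e^{-\lambda V_n^*}] \le e^{-g(\lambda)}$, where $g(\lambda)$ is the right-hand side of \eqref{eq:7634589-1}, and then to estimate $\int_0^\infty \lambda^{p-1} e^{-g(\lambda)} \, \dd\lambda$. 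Finiteness of this integral also shows that $V_n^* > 0$ almost surely.

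To analyse $g$, substitute $\mu = c \frac{1-\beta}{1-\beta^n} \lambda$. Writing $j = n-k \in \{0, \dots, n-n_0-1\}$, the sum becomes $\sum_j (\mu\beta^j) \wedge q$, and the last term becomes $\mu \frac{1-\beta^{n_0}}{1-\beta} \beta^{n-n_0} = \mu \sum_{j \ge n-n_0} \beta^j \ge \sum_{j \ge n-n_0} (\mu\beta^j) \wedge q$. Hence
\[
  g \ge \sum_{j=0}^\infty (\mu\beta^j) \wedge q ,
\]
a bound that no longer depends on $n$ or $n_0$. Next, decompose $\mu \in [1,\infty)$ dyadically in powers of $\beta$: for $\mu \in [\beta^{-r}, \beta^{-r-1})$ with $r \in \N_0$, the terms $j = 0, \dots, r$ satisfy $\mu\beta^j \ge 1$, so
\[
  g \ge (r+1)\min\{1, q\} + \sum_{j > r} \mu\beta^j .
\]
This is too crude, since we need the decay rate $q$ to beat the growth $\lambda^{p-1} \sim \beta^{-rp}$. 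A better route is to take the cutoff at the level where $\mu\beta^j \ge q/c$-type thresholds, i.e.\ to work with the variable $s = c\lambda \frac{1-\beta}{1-\beta^n} \cdot$ (normalisation) and to count the number $N(\mu)$ of indices $j$ with $\mu\beta^j \ge q$. Then $g \ge q \, N(\mu) + (\text{tail}) $, and the tail $\sum_{j \ge N} \mu\beta^j$ is of order $q/(1-\beta)$.

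In the original variable, the tail gives roughly $c\lambda\beta^N$ and contributes an $e^{-c\lambda'}$-type factor. So the cleanest decomposition is in $\lambda$ itself: split $[0,\infty)$ into the pieces $\lambda \in [\beta^{-r}/c, \beta^{-r-1}/c)$. On each piece, $\lambda^{p-1} \le c^{1-p}\beta^{-(r+1)(p-1)}$, the piece has length at most $\beta^{-r-1}/c$, and we aim to show $g(\lambda) \ge qr - $ const. The main obstacle is precisely this step: establishing $g(\lambda) \gtrsim \min\{qr, \ldots\}$, uniformly in $n, n_0$ and $\beta$, from the weights $\frac{1-\beta}{1-\beta^n}\beta^j$, whose total mass is $1$. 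The plan is to pair each unit of $\lambda$-doubling with one full term attaining $q$, using $\sum_j \beta^j(1-\beta)/(1-\beta^n) \ge 1$ together with monotonicity (\cref{le:beta-power-ratio-monotone}) to control how many weights exceed $\beta^{r}$. The per-piece contribution is then about $c^{-p}\beta^{-(r+1)p} e^{-qr}$, and $\beta^{-rp} e^{-qr} = e^{-(q + p\ln\beta) r} \le e^{-\rho r}$ by \eqref{eq:2356374}. Summing over $r$ with polynomial factors, which come from counting and from the factor $1/\Gamma(p)$ (bounded by $1$ for $p \ge 1$ after suitable rearrangement), gives the bound $c^{-p} \sum_r (r+1)^p e^{-\rho r} = c^{-p} D_{\rho,p}$. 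If the exact constant proves stubborn, an alternative is to bound $\P(V_n^* < t)$ via Markov's inequality, $\P(V_n^* < t) \le e^{\lambda t - g(\lambda)}$ with $\lambda \approx \beta^{-r}/c$, $t = c\beta^{r}$, and then use the layer-cake formula $\E[(V_n^*)^{-p}] \le \sum_r (c\beta^{r+1})^{-p}\,\P(V_n^* < c\beta^r)$. This alternative yields the same $(r+1)^p e^{-\rho r}$ series.
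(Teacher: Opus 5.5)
Your starting point (the Gamma representation of $x^{-p}$ together with Tonelli) and the mechanism $\beta^{-rp}e^{-qr}=e^{-r(q+p\ln\beta)}\le e^{-\rho r}$ are right. However, the central estimate, which you yourself call the main obstacle, is not established, and the route you sketch for it does not work. First, the reduction $g\ge\sum_{j=0}^\infty(\mu\beta^j)\wedge q$ is false. The $n_0$-term equals $\mu\frac{1-\beta^{n_0}}{1-\beta}\beta^{n-n_0}=\mu\sum_{j=n-n_0}^{n-1}\beta^j$, which is a finite sum. Indeed, for $\lambda\le C_1:=\frac{q(1-\beta^n)}{c(1-\beta)}$ no summand saturates, and the right-hand side of \eqref{eq:7634589-1} equals exactly $c\lambda$ because the weights sum to one; your infinite sum equals $c\lambda/(1-\beta^n)$ there. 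More seriously, slicing at the points $\beta^{-r}/c$ ignores where saturation actually starts. Nothing saturates before $C_1$, and $C_1$ is of order $q/(1-\beta)$ when $n\ge(1-\beta)^{-1}$. So for $\beta$ close to $1$, on your slice with $r\approx(1-\beta)^{-1}$ you have $g(\lambda)=c\lambda\le\beta^{-r-1}=O(1)$, while $qr\approx q/(1-\beta)$. The claimed bound $g(\lambda)\ge qr-\mathrm{const}$ therefore fails badly.

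The missing idea is what to do on the slices where saturation does occur. Anchor them at $C_1$, i.e.\ take $I_r=[\beta^{-(r-1)}C_1,\beta^{-r}C_1)$ for $r=1,\dots,n-n_0-1$. On $I_r$ exactly $r$ summands equal $q$. The unsaturated remainder, together with the $n_0$-term, is exactly $a_{n,r}c\lambda$ with $a_{n,r}=\beta^r\frac{1-\beta^{n-r}}{1-\beta^n}$. If you bound the slice integral by (length)$\times\sup\lambda^{p-1}\times e^{-qr}$, as you propose, you throw this linear term away. You then pick up a factor $(cC_1)^p\approx(q/(1-\beta))^p$, which destroys uniformity in $\beta$, and uniformity in $\beta$ is the whole point of the lemma. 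Instead, keep $e^{-a_{n,r}c\lambda}$ and extend the integral to $(0,\infty)$:
\[
\frac1{\Gamma(p)}\int_{I_r}e^{-rq-a_{n,r}c\lambda}\lambda^{p-1}\,\dd\lambda\le e^{-rq}(c\,a_{n,r})^{-p}.
\]
This cancels $\Gamma(p)$ exactly; note that your remark $1/\Gamma(p)\le1$ is false for $p\in(1,2)$. By \cref{le:beta-power-ratio-monotone} it gives $c^{-p}e^{-r(q+p\ln\beta)}\bigl(\frac{1-\beta^n}{1-\beta^{n-r}}\bigr)^p\le c^{-p}(r+1)^pe^{-\rho r}$. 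This, not a counting argument, is where the factor $(r+1)^p$ comes from. Two further pieces complete the sum. The interval $[0,C_1]$ contributes $c^{-p}$, the $r=0$ term. On $[C_1\beta^{-(n-n_0-1)},\infty)$ all summands are saturated, and the $n_0$-term gives $c^{-p}(n/n_0)^pe^{-(n-n_0)(q+p\ln\beta)}\le c^{-p}(n-n_0+1)^pe^{-\rho(n-n_0)}$, the $r=n-n_0$ term. Together these yield exactly $c^{-p}D_{\rho,p}$. Your layer-cake alternative has the same defect, since it again needs $g(\beta^{-r}/c)\ge qr$. It also carries a factor $\beta^{-p}$ that is not bounded uniformly for $\beta\in(0,1)$.
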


\begin{proof}
We can assume without loss of generality that $ c = 1 $.
Note that for every $x\ge0$
\bas{\label{eq:834656}
x^{-p}=\frac 1{\Gamma(p)}\int_0^\infty e^{-x\lambda } \lambda^{p-1}\,\dd \lambda
}
so that by Fubini's theorem
\bas{\label{eq87235465}
\E[(V_n^*)^{-p}]&=\frac 1{\Gamma(p)}  \E\Bigl[ \int_0^\infty e^{-V_n^* \lambda} \lambda^{p-1}\,\dd \lambda\Bigr] =\frac 1{\Gamma(p)}  \int_0^\infty   \E\bigl[e^{-\lambda V_n^*}\bigr]\, \lambda^{p-1}\,\dd \lambda.
}
To bound the integral we distinguish cases and choose
\bas{
C_1 = \frac{q(1-\beta^n)}{1-\beta}
\quad \text{and} \quad
C_2 = \frac{q(1-\beta^n)}{(1-\beta)\beta^{n-n_0-1}} = C_1 \beta^{-(n-n_0-1)}.
}
Thus
\bas{
\E[(V_n^*)^{-p}]
=\frac1{\Gamma(p)}\int_0^{C_1}\E[e^{-\lambda V_n^*}]\,\lambda^{p-1}\,\dd\lambda
&+\frac1{\Gamma(p)}\int_{C_1}^{C_2}\E[e^{-\lambda V_n^*}]\,\lambda^{p-1}\,\dd\lambda\\
&+\frac1{\Gamma(p)}\int_{C_2}^{\infty}\E[e^{-\lambda V_n^*}]\,\lambda^{p-1}\,\dd\lambda.
}

1) First, let us estimate the contribution of $[C_1,C_2)$ by slicing this interval. If
\bas{
C_1<C_2,
}
equivalently if $n\ge n_0+2$, then for $r=1,\ldots,n-n_0-1$ define
\bas{
I_r:=[\beta^{-(r-1)}C_1,\beta^{-r}C_1).
}
These slices form a disjoint partition of $[C_1,C_2)$.  For every $\lambda\in I_r$ one has
\bas{
\beta^{-(r-1)}C_1\le \lambda < \beta^{-r}C_1
\text{ \ \ \ or, equivalently, \ \ \ } 
\beta^{-(r-1)} q \le \frac{1-\beta}{1-\beta^n}\lambda< \beta^{-r}q.
} 
Hence, for $k\in\{n_0+1,\ldots,n\}$, one has
\bas{
\frac{1-\beta}{1-\beta^n}\beta^{n-k}\lambda \begin{cases} < q
\quad\text{if } k\in\{n_0+1,\ldots,n-r\},\\
\ge q
\quad\text{if } k\in\{n-r+1,\ldots,n\},\\
\end{cases}
}
so that
\bas{
\sum_{k=n_0+1}^n \left( \frac{1-\beta}{1-\beta^n}\beta^{n-k}\lambda\right)\wedge q
&= \sum_{k=n_0+1}^{n-r}\frac{1-\beta}{1-\beta^n}\beta^{n-k}\lambda + rq\\
&= \frac{\beta^r-\beta^{n-n_0}}{1-\beta^n}\lambda+rq.
} 
Combining this with \eqref{eq:7634589-1}, we obtain
\bas{
-\ln \mathbb{E} \left[ e^{-\lambda V_n^*} \right]
&\ge rq+\lambda\frac{\beta^r-\beta^n}{1-\beta^n}= rq+a_{n,r}\lambda,
}
where
\bas{
a_{n,r}:=\frac{\beta^r-\beta^n}{1-\beta^n}
=\beta^r\frac{1-\beta^{n-r}}{1-\beta^n}.
}
Therefore, we obtain that
\bas{
J_r:&=\frac1{\Gamma(p)}\int_{I_r}\E[e^{-\lambda V_n^*}]\,\lambda^{p-1}\,\dd\lambda \le \frac{e^{-rq}}{\Gamma(p)}\int_{\beta^{-(r-1)}C_1}^{\beta^{-r}C_1}e^{-a_{n,r}\lambda}\lambda^{p-1}\,\dd\lambda\\
&= \frac{e^{-rq}}{\Gamma(p)}a_{n,r}^{-p}\int_{a_{n,r}\beta^{-(r-1)}C_1}^{a_{n,r}\beta^{-r}C_1}e^{-u}u^{p-1}\,\dd u\\
&\le \frac{e^{-rq}}{\Gamma(p)}a_{n,r}^{-p}\int_0^\infty e^{-u}u^{p-1}\,\dd u= e^{-rq}a_{n,r}^{-p}\\
&= e^{-r(q+p\ln\beta)}\left(\frac{1-\beta^n}{1-\beta^{n-r}}\right)^p.
}
Using \cref{le:beta-power-ratio-monotone} and that $n-r\ge1$, we get that
\bas{
\frac{1-\beta^n}{1-\beta^{n-r}}\le \frac{n}{n-r}\le r+1.
}
Consequently, we can conclude with assumption~(\ref{eq:2356374}) that
\bas{
J_r\le (r+1)^p e^{-r(q+p\ln\beta)} \le (r+1)^p e^{-\rho r}.
}
Hence the whole contribution of the middle interval is bounded by
\bas{\label{eq:3466736}
\frac1{\Gamma(p)}\int_{C_1}^{C_2}\E[e^{-\lambda V_n^*}]\,\lambda^{p-1}\,\dd\lambda
=\sum_{r=1}^{n-n_0-1}J_r
\le \sum_{r=1}^{n-n_0-1}(r+1)^p e^{-\rho r},
}
while for $n=n_0+1$ the interval $[C_1,C_2)$ is empty.

2) Next, let $\lambda\ge C_2$. Then all terms in the sum are capped at $q$, so
\bas{
-\ln \mathbb{E} \left[ e^{-\lambda V_n^*} \right] \ge (n - n_0)q + \frac{1-\beta^{n_0}}{1-\beta^n} \beta^{n-n_0} \lambda.
}
Therefore,
\bas{
\frac1{\Gamma(p)}\int_{C_2}^{\infty}\E[e^{-\lambda V_n^*}]\,\lambda^{p-1}\,\dd\lambda
&\le \frac{e^{-(n-n_0)q}}{\Gamma(p)}\int_{C_2}^{\infty}\exp\left(-\frac{1-\beta^{n_0}}{1-\beta^n} \beta^{n-n_0} \lambda\right)\lambda^{p-1}\,\dd\lambda\\
&\le \frac{e^{-(n-n_0)q}}{\Gamma(p)}\int_{0}^{\infty}\exp\left(-\frac{1-\beta^{n_0}}{1-\beta^n} \beta^{n-n_0} \lambda\right)\lambda^{p-1}\,\dd\lambda\\
&= \left(\frac{1-\beta^n}{(1-\beta^{n_0})\beta^{n-n_0}}\right)^p e^{-(n-n_0)q},
}
where we have used~(\ref{eq:834656}) again.
Applying \cref{le:beta-power-ratio-monotone}, we obtain
\bas{
\frac{1-\beta^n}{n}\le \frac{1-\beta^{n_0}}{n_0}\text{ \ \ \  or, equivalently, \ \ \ }
\frac{1-\beta^n}{1-\beta^{n_0}} \le \frac{n}{n_0}.
}
Hence 
\bas{
\frac1{\Gamma(p)}\int_{C_2}^{\infty}\E[e^{-\lambda V_n^*}]\,\lambda^{p-1}\,\dd\lambda
\le \left(\frac{n}{n_0}\right)^p\exp\left(-(n-n_0)(q+p\ln\beta)\right).
}
We conclude as above and use that $\frac n{n_0}\le n-n_0+1$ which implies that 
\bas{\label{eq:3597246}
\frac1{\Gamma(p)}\int_{C_2}^{\infty}\E[e^{-\lambda V_n^*}]\,\lambda^{p-1}\,\dd\lambda
&\le (n-n_0+1)^p e^{-\rho (n-n_0)}
}
Thus together with (\ref{eq:3466736}) we get that
\bas{\label{eq:3245534}
\frac1{\Gamma(p)} \int_{C_1}^\infty  \E[e^{-\lambda V_n^*}]\,\lambda^{p-1}\,\dd\lambda \le D_{\rho,p}.
}
3) Finally, let $\lambda\in[0,C_1]$. Then none of the summands are capped, and therefore
\bas{
-\ln \mathbb{E} \left[ e^{-\lambda V_n^*} \right]
&\ge \sum_{k=n_0+1}^n \frac{1-\beta}{1-\beta^n} \beta^{n-k} \lambda + \frac{1-\beta^{n_0}}{1-\beta^n} \beta^{n-n_0} \lambda\\
&= \lambda \frac{1-\beta^{n-n_0}}{1-\beta^n} + \lambda \frac{\beta^{n-n_0}-\beta^n}{1-\beta^n}=\lambda.
}
Thus
\bas{
\frac1{\Gamma(p)}\int_0^{C_1}\E[e^{-\lambda V_n^*}]\,\lambda^{p-1}\,\dd\lambda
\le \frac1{\Gamma(p)}\int_0^{\infty} e^{-\lambda}\lambda^{p-1}\,\dd\lambda
=1.
}
Note that  the first summand of the series defining $D_{\rho,p}$ is one and thus we get with~(\ref{eq:3466736})  and~(\ref{eq:3597246}) that
\bas{
\frac1{\Gamma(p)}\int_{0}^{\infty}\E[e^{-\lambda V_n^*}]\,\lambda^{p-1}\,\dd\lambda
&\le \sum_{r=0}^{n-n_0} (r+1)^p e^{-\rho r}\le D_{\rho,p}.
}
\end{proof}

\begin{lemma}
\label{le:moment_bound}
Let $ c, q, \rho \in (0,\infty) $, $ p \in [1,\infty) $, $ \beta \in (0,1) $ satisfy
\bas{
  q + p \ln\beta \ge \rho
  ,
}
let $\cV\subseteq \R^d$ be measurable, and let $(X,U)$ be $(c,q)$-regular on $\cV$ in the sense of \cref{def:regular}. Consider for each $i\in\{1,\dots,d\}$ the stopping time
\bas{
\bar\tau=\inf\{n\in\N_0: \theta_n\not\in \cV\}
\qquad\text{and}\qquad
\sigma^{(i)}=\inf\bigl\{n\in\N: v_n^{(i)}\ge c\bigr\}.
}
Then for every $ i \in \{ 1, \dots, d \} $, $ n \in \N $ one has
\bas{
  \E\bigl[
    (v_n^{(i)})^{-p}
    \1_{\{\sigma^{(i)}\le n\}}\1_{\{\bar\tau\ge n\}}
  \bigr]
  \le c^{ - p } D_{ \rho, p } ,
}
where $D_{\rho,p}$ is given by \eqref{eq:D-rho-p}.
\end{lemma}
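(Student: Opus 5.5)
We decompose according to the value of the startup time $\sigma^{(i)}$ and, on each piece, combine the conditional Laplace bound of \cref{prop:7463} with the inverse moment estimate of \cref{le:neg_mom_1}. Fix $i$ and $n$, and write
\[
\{\sigma^{(i)}\le n\}\cap\{\bar\tau\ge n\}=\bigcup_{m=1}^{n}\{\sigma^{(i)}=m\}\cap\{\bar\tau\ge n\}.
\]
The piece $m=n$ is trivial: there $v_n^{(i)}\ge c$, so $(v_n^{(i)})^{-p}\le c^{-p}$. For $m<n$ we work conditionally on $\cF_m$. Note that $\{\sigma^{(i)}=m\}\in\cF_m$, that $v_m^{(i)}\ge c$ on this event, and that $\{\bar\tau\ge n\}\subseteq\{\bar\tau\ge m\}\cap\{\tau_m\ge n\}$, where $\tau_m=\inf\{k\ge m\colon\theta_k\notin\cV\}$ is the stopping time of \cref{prop:7463} with $n_0=m$.

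For $m<n$, \cref{prop:7463} with $n_0=m$ gives, on $\{\sigma^{(i)}=m\}$, the conditional Laplace bound
\[
-\ln\E\bigl[e^{-\lambda(v_n^{(i)}+\1_{\{n\le\tau_m\}^c}\infty)}\big|\cF_m\bigr]\ge\sum_{k=m+1}^n\Bigl(c\tfrac{1-\beta}{1-\beta^n}\beta^{n-k}\lambda\Bigr)\wedge q+c\,\tfrac{1-\beta^{m}}{1-\beta^n}\beta^{n-m}\lambda .
\]
Here we used $v_m^{(i)}\ge c$ to replace $v_m^{(i)}$ by $c$ in the last term. This is exactly hypothesis \eqref{eq:7634589-1} of \cref{le:neg_mom_1} with $n_0=m$, applied under the regular conditional distribution given $\cF_m$ to $V_n^*=v_n^{(i)}+\1_{\{n\le\tau_m\}^c}\infty$, which is nonnegative and possibly infinite (with $\infty^{-p}=0$). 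To justify this step cleanly, I would rerun the proof of \cref{le:neg_mom_1} conditionally: the Gamma-integral representation \eqref{eq:834656} and Fubini work verbatim with $\E[\,\cdot\,|\cF_m]$, so no regular conditional distribution is needed. This yields, on $\{\sigma^{(i)}=m\}$,
\[
\E\bigl[(v_n^{(i)})^{-p}\1_{\{\tau_m\ge n\}}\big|\cF_m\bigr]\le c^{-p}\sum_{r=0}^{n-m}(r+1)^pe^{-\rho r}.
\]

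The main obstacle is summing over $m$. Adding the bound $c^{-p}D_{\rho,p}$ over all $m$ would lose a factor of $n$, so I would instead use the events $\{\sigma^{(i)}=m\}$, which are disjoint and whose probabilities sum to at most one. Multiplying the conditional bound by $\1_{\{\sigma^{(i)}=m\}}$, taking expectations, and summing over $m$ gives
\[
\E\bigl[(v_n^{(i)})^{-p}\1_{\{\sigma^{(i)}\le n\}}\1_{\{\bar\tau\ge n\}}\bigr]\le c^{-p}D_{\rho,p}\sum_{m}\P(\sigma^{(i)}=m)\le c^{-p}D_{\rho,p}.
\]

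A few boundary details remain to check. The case $m=n$ is bounded by $c^{-p}\P(\sigma^{(i)}=n)$, which fits the same sum because the first term of $D_{\rho,p}$ is one. \cref{le:neg_mom_1} requires $n_0\in\N$, and this holds since $m\ge1$. Finally, when $\eps=0$ the process $v^{(i)}$ may vanish before time $m$, but this does not matter because only times after $m$ enter the argument.
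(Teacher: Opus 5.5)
Your proof is correct and follows the paper's argument. You decompose $\{\sigma^{(i)}\le n\}$ by the value $m$ of the startup time, use $v_m^{(i)}\ge c$ on $\{\sigma^{(i)}=m\}$ to bring the conditional Laplace bound into the form required by \cref{le:neg_mom_1} with $n_0=m$, and sum over the disjoint events $\{\sigma^{(i)}=m\}$ so that no factor $n$ is lost. The only cosmetic difference is that the paper re-derives the conditional Laplace bound by induction for $V_n^{(i)}=v_n^{(i)}+\1_{\{\bar\tau<n\}}\infty$ rather than citing \cref{prop:7463} with the shifted exit time $\tau_m$; your inclusion $\{\bar\tau\ge n\}\subseteq\{\tau_m\ge n\}$ makes the two equivalent.
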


\begin{proof}
Fix $i\in\{1,\dots,d\}$ and 
consider for all $n\in\N$
\bas{V_n^{(i)}=v_n^{(i)}+\1_{\{\bar \tau<n \}}\infty .
}
We use the recursion (\ref{eq:v_n_rec}) to show that  by induction for $m\in\{1,\dots,n\}$ one has
\bas{
-\ln \E[\exp\{-\lambda V_n^{(i)}\}|\cF_m]
\ge \sum_{k=m+1}^n \left( c \frac{1-\beta}{1-\beta^n}\beta^{n-k}\lambda\right)\wedge q
+\frac{1-\beta^m}{1-\beta^n}\beta^{n-m}\lambda V_m^{(i)}.
}
Indeed, if we set
\bas{
a_n=\frac{\beta(1-\beta^{n-1})}{1-\beta^n}
\qquad\text{and}\qquad
b_n=\frac{1-\beta}{1-\beta^n},
}
then on the event $\{\bar\tau\ge n\}$ one has
\bas{
V_n^{(i)}=a_nV_{n-1}^{(i)}+b_nX^{(i)}(\theta_{n-1},U_n)^2.
}
Since both sides vanish on the complementary event after applying $\exp(-\lambda \cdot)$, this implies
\bas{
\E[e^{-\lambda V_n^{(i)}}|\cF_{n-1}]
&=e^{-a_n\lambda V_{n-1}^{(i)}}\1_{\{\bar\tau\ge n\}}
\E\Bigl[e^{-b_n\lambda X^{(i)}(\theta_{n-1},U_n)^2}\Bigm|\cF_{n-1}\Bigr]\\
&\le e^{-a_n\lambda V_{n-1}^{(i)}}
\exp\bigl(-(cb_n\lambda)\wedge q\bigr),
}
where we used the $(c,q)$-regularity on the event $\{\bar\tau\ge n\}\subseteq\{\theta_{n-1}\in\cV\}$. Therefore,
\bas{
-\ln \E[e^{-\lambda V_n^{(i)}}|\cF_{n-1}]
\ge \left( c \frac{1-\beta}{1-\beta^n}\lambda\right)\wedge q
+\frac{\beta(1-\beta^{n-1})}{1-\beta^n}\lambda V_{n-1}^{(i)},
}
and iteration yields the displayed estimate, since
\bas{
\prod_{j=m+1}^n \frac{\beta(1-\beta^{j-1})}{1-\beta^j}
=\frac{1-\beta^m}{1-\beta^n}\beta^{n-m}.
}
Consequently, on the event $\{\sigma^{(i)}=m\}$, the bound $V_m^{(i)}\ge c$ holds, with the value $+\infty$ allowed if $\bar\tau<m$ and, hence, on this event
\bas{
-\ln \E[\exp\{-\lambda V_n^{(i)}\}|\cF_m]
\ge \sum_{k=m+1}^n \left( c \frac{1-\beta}{1-\beta^n}\beta^{n-k}\lambda\right)\wedge q
+c\frac{1-\beta^m}{1-\beta^n}\beta^{n-m}\lambda.
}
If $m=n$, then on $\{\sigma^{(i)}=m\}$ we have $v_n^{(i)}\ge c$. Since the first summand in the definition of $D_{\rho,p}$ is equal to one, this gives on the latter event that
\bas{
\E[\1_{\{\bar\tau\ge n\}} (v_n^{(i)})^{-p}|\cF_n]
=\1_{\{\bar\tau\ge n\}} (v_n^{(i)})^{-p}
\le c^{-p}\le c^{-p}D_{\rho,p}.
}
If $m<n$, then we apply \cref{le:neg_mom_1} for the conditional distribution of $V_n^{(i)}$ given $\cF_m$ and get that on $\{\sigma^{(i)}=m\}$
\bas{
\E[\1_{\{\bar\tau\ge n\}} (v_n^{(i)})^{-p}|\cF_m]= \E[ (V_n^{(i)})^{-p}|\cF_m]\le c^{-p}D_{\rho,p}.
}
The statement follows by decomposing the event $\{\sigma^{(i)}\le n\}$ according to the hitting time:
\bas{
\E[\1_{\{\sigma^{(i)}\le n\}} \1_{\{\bar\tau\ge n\}} (v_n^{(i)})^{-p}]
= \sum_{m=1}^{n}\E\bigl[\1_{\{\sigma^{(i)}=m\}}\, \E[\1_{\{\bar\tau\ge n\}} (v_n^{(i)})^{-p}|\cF_m]\bigr]
\le  c^{-p}D_{\rho,p}.
}
\end{proof}

\section{Technical estimates for the property in \cref{item:ii_Prop21} of
\cref{prop:F-recursion-md-loj-event}}\label{sec:technical-estimates}

This section controls the quadratic Taylor remainder appearing in
\cref{item:ii_Prop21} of
\cref{prop:F-recursion-md-loj-event}. The inverse squared conditioner
is a decreasing function of the current squared innovation, which yields a
nonpositive conditional covariance and permits the two factors to be
separated. Combining this observation with a conditional second-moment bound
for the innovation and a negative first-moment bound for $v_{n-1}$ gives the
required coordinatewise estimate.

\begin{lemma}\label{le:neg_cor}
One has for all $ n \in \N $, $ i \in \{1,\dots,d\} $ that
\bas{
  \cov( ( [ v_n^{ (i) } ]^{ 1 / 2 } + \eps )^{ - 2 }, X^{ (i) }( \theta_{ n - 1 }, U_n )^2 | \cF_{ n - 1 } ) \le 0
}
on the event where the covariance is well-defined.
\end{lemma}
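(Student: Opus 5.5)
The plan is to reduce the claim to a one-dimensional covariance inequality for monotone functions of a single random variable, conditionally on $\cF_{n-1}$. Using the recursion \eqref{eq:v_n_rec}, write
\[
  v_n^{(i)} = Y + Z\,\xi^2,
  \qquad
  Y := \frac{\beta(1-\beta^{n-1})}{1-\beta^n}\,v_{n-1}^{(i)},
  \qquad
  Z := \frac{1-\beta}{1-\beta^n},
  \qquad
  \xi := X^{(i)}(\theta_{n-1},U_n).
\]
Here $Y \ge 0$ is $\cF_{n-1}$-measurable and $Z>0$ is deterministic. Conditionally on $\cF_{n-1}$, $U_n$ is independent of $\cF_{n-1}$ and $\theta_{n-1}$ is $\cF_{n-1}$-measurable. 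Hence, fixing $y = Y(\omega)$ and $\vartheta = \theta_{n-1}(\omega)$, the conditional covariance equals (via a regular version / freezing lemma) the unconditional covariance $\cov(g_y(W), W)$. In this expression $W := X^{(i)}(\vartheta,U)^2$ and
\[
  g_y(w) := \bigl((y+Zw)^{1/2}+\eps\bigr)^{-2}, \qquad w \in [0,\infty).
\]

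The key observation is that $g_y$ is non-increasing in $w$, while the identity map is non-decreasing. For any random variable $W$ and any pair of functions $g$ (non-increasing) and $h$ (non-decreasing) with the relevant moments finite, Chebyshev's covariance (association) inequality gives $\cov(g(W),h(W)) \le 0$. To prove it, take an independent copy $W'$ of $W$ and note that $(g(W)-g(W'))(h(W)-h(W')) \le 0$ pointwise. Taking expectations yields $2\cov(g(W),h(W)) \le 0$. Applying this with $h(w)=w$ then gives the claim for each fixed $(y,\vartheta)$, and therefore almost surely for the conditional covariance.

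The only delicate point is well-definedness, which is why the statement is restricted to where the covariance is well-defined. When $\eps=0$ and $y=0$, the function $g_y(w) = (Zw)^{-1}$ is infinite at $w=0$. Under the convention \eqref{eq:zero-normalization-convention} one should instead read the product differently, but as a function of $w$ alone $g_0$ is not monotone at $0$ once the convention is used. I would therefore handle this by working on the event where $g_y(W)$ is integrable, with the monotone function taking the value $+\infty$ at $w=0$; on such an event this forces $\P(W=0)=0$, or $y>0$, or $\eps>0$. There $g_y$ is a genuine non-increasing $[0,\infty]$-valued function that is finite a.s., and the independent-copy argument goes through because $g_y(W)W$ and $g_y(W)$ are integrable. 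I expect this bookkeeping of the degenerate case, together with justifying the reduction of the conditional covariance to the frozen unconditional one, to be the main (though minor) obstacle; the inequality itself is immediate.
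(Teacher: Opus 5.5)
Your proposal is correct and is essentially the paper's argument. Both write $v_n^{(i)}$ as an $\cF_{n-1}$-measurable term plus a deterministic multiple of $X^{(i)}(\theta_{n-1},U_n)^2$, then apply the independent-copy (Chebyshev) inequality to the non-increasing map $x\mapsto(\sqrt{A+Bx}+\eps)^{-2}$ and the identity; the only difference is bookkeeping. The paper handles the value $+\infty$ with a conditionally independent copy, the bounded truncations $(\sqrt{A+Bx}+\eps+m^{-1})^{-2}$ and conditional monotone convergence, whereas you freeze the $\cF_{n-1}$-measurable quantities and argue directly under the integrability defining the event where the covariance is well-defined, which also works.
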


\begin{proof}
Fix $n\in\N$ and $i\in\{1,\dots,d\}$ and recall that
\bas{
v_n^{(i)}=\frac{\beta(1-\beta^{n-1})}{1-\beta^n}v_{n-1}^{(i)}+\frac{1-\beta}{1-\beta^n}X^{(i)}(\theta_{n-1},U_n)^2.
}
 In terms of the non-negative random variables
 \bas{
 A=\frac{\beta(1-\beta^{n-1})}{1-\beta^n}v_{n-1}^{(i)},\qquad B=\frac{1-\beta}{1-\beta^n}\quad
 \text{  and  } \quad Y=X^{(i)}(\theta_{n-1},U_n)^2}
 define the random $\cF_{n-1}$-measurable extended-valued function
 $G(x)=(\sqrt{A+Bx}+\eps)^{-2}$ for $x\in[0,\infty)$, where the value is
 $+\infty$ if the denominator vanishes. Then
 \bas{
  (\sqrt{v_n^{(i)}}+\eps)^{-2} = \frac1{(\sqrt{A+BY}+\eps)^2}=G(Y).
 }
For every $m\in\N$, set
\bas{
G_m(x)=\frac1{(\sqrt{A+Bx}+\eps+m^{-1})^2},
\qquad x\in[0,\infty).
}
The function $G_m$ is bounded and decreasing. If $Y'$ is conditionally on
$\cF_{n-1}$ independent of $Y$ and identically distributed as $Y$, then
\bas{
2\cov(G_m(Y),Y|\cF_{n-1})
&=\E[(G_m(Y)-G_m(Y'))(Y-Y')|\cF_{n-1}]\le0.
}
On the event where $\cov(G(Y),Y|\cF_{n-1})$ is well-defined, the random
variables entering its definition are integrable. Since $G_m\uparrow G$,
conditional monotone convergence yields
\bas{
\cov(G(Y),Y|\cF_{n-1})
=\lim_{m\to\infty}\cov(G_m(Y),Y|\cF_{n-1})\le0.
}
This finishes the proof.
\end{proof}

\begin{prop} \label{prop:prop_ii}
Let $ \beta \in [ \nicefrac{ 1 }{ 2 }, 1 ) $, $ \eps \in [0,\infty) $, $ n \in \N \backslash \{ 1 \} $, $ i \in \{1, \dots, d \} $,
let $ A_n \in \cF_{ n - 1 } $, and let $ C_X^{(2)}, C_v^{(-1)} \in (0,\infty) $.
Assume that
\bas{
\E[(X^{(i)}(\theta_{n-1},U_n))^2|\cF_{n-1}]\le C_X^{(2)}
\qquad\text{on }A_n
}
and
\bas{
\E[\1_{A_n}(v_{n-1}^{(i)})^{-1}]\le C_v^{(-1)}.
}
The negative-moment assumption implies that $v_{n-1}^{(i)}>0$ almost surely
on $A_n$. If $\eps=0$, inverse factors used in the proof are evaluated on
$A_n$ and extended by zero to $A_n^c$.
Then one has
\bas{\label{eq:prop-ii-bound}
\E\Bigl[\1_{A_n}(\sqrt{v_n^{(i)}}+\eps)^{-2}
X^{(i)}(\theta_{n-1},U_n)^2\Bigr]
\le 4 C_X^{(2)} C_v^{(-1)}.
}
\end{prop}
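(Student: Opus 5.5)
Work on $A_n$ throughout; outside $A_n$ the integrand is multiplied by zero. Everything follows from \cref{le:neg_cor}, which splits the product into two conditional expectations, combined with the monotonicity $v_n^{(i)}\ge Y_n^{(i)}$, which removes the dependence on the current innovation.

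Since $A_n\in\cF_{n-1}$, condition on $\cF_{n-1}$ and write
\[
\E\bigl[(\sqrt{v_n^{(i)}}+\eps)^{-2}X^{(i)}(\theta_{n-1},U_n)^2\big|\cF_{n-1}\bigr]
=\E\bigl[(\sqrt{v_n^{(i)}}+\eps)^{-2}\big|\cF_{n-1}\bigr]\,\E\bigl[X^{(i)}(\theta_{n-1},U_n)^2\big|\cF_{n-1}\bigr]+\cov(\cdot,\cdot|\cF_{n-1}).
\]
By \cref{le:neg_cor} the covariance is nonpositive, so the left-hand side is at most the product of the two conditional expectations.

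Bound the two factors separately on $A_n$. The second factor is at most $C_X^{(2)}$ by hypothesis. For the first factor, the recursion \eqref{eq:v_n_rec} gives $v_n^{(i)}\ge \kappa_n v_{n-1}^{(i)}$ with $\kappa_n=\beta(1-\beta^{n-1})/(1-\beta^n)$. Hence
\[
(\sqrt{v_n^{(i)}}+\eps)^{-2}\le (v_n^{(i)})^{-1}\le \kappa_n^{-1}(v_{n-1}^{(i)})^{-1},
\]
and this bound is $\cF_{n-1}$-measurable.

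Next, bound $\kappa_n^{-1}$ for $n\ge2$ and $\beta\ge\frac12$. By \cref{le:beta-power-ratio-monotone}, $(1-\beta^n)/(1-\beta^{n-1})\le n/(n-1)\le 2$, and $\beta^{-1}\le 2$. Therefore $\kappa_n^{-1}\le 4$.

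Multiplying by $\1_{A_n}$, taking expectations, and using $\E[\1_{A_n}(v_{n-1}^{(i)})^{-1}]\le C_v^{(-1)}$ then yields \eqref{eq:prop-ii-bound}.

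The only delicate point is justifying the covariance step when $\eps=0$. There the covariance must be well-defined, i.e.\ $v_{n-1}^{(i)}>0$ a.s.\ on $A_n$ (this follows from the negative moment assumption) and the relevant quantities must be integrable. To avoid needing integrability of the product a priori, apply the argument of \cref{le:neg_cor} directly to the bounded truncations $G_m$ and then pass to the limit $m\to\infty$ by monotone convergence. For each $m$, $\E[G_m(Y)Y|\cF_{n-1}]\le \E[G_m(Y)|\cF_{n-1}]\E[Y|\cF_{n-1}]$ holds because $G_m$ is decreasing and bounded, so the limit inequality holds without extra integrability assumptions.
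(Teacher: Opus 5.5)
Your proposal is correct and follows the paper's proof step for step. The nonpositive conditional covariance from \cref{le:neg_cor} separates the two factors, and the bound $v_n^{(i)}\ge \kappa_n v_{n-1}^{(i)}$ together with \cref{le:beta-power-ratio-monotone} and $\beta\ge\frac12$ gives the factor $4$; your extra argument through the bounded truncations $G_m$ and monotone convergence is a slightly more careful version of what the paper leaves implicit in the phrase ``on the event where the covariance is well-defined'' in \cref{le:neg_cor}.
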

\begin{proof}
The negative-moment assumption implies that $v_{n-1}^{(i)}>0$ almost surely
on $A_n$. Since $\beta\ge\frac12$ and $n\ge2$, the recursion for $v_n$ also
gives $v_n^{(i)}>0$ almost surely on $A_n$. We extend all inverse factors by
zero to $A_n^c$. By \cref{le:neg_cor}, one has
\bas{\label{eq:prop-ii-neg-cor-step}
\1_{A_n}\E[(\sqrt{v_n^{(i)}}+\eps)^{-2}X^{(i)}(\theta_{n-1},U_n)^2|\cF_{n-1}]
\le \1_{A_n}\E[(\sqrt{v_n^{(i)}}+\eps)^{-2}|\cF_{n-1}]\,
\E[X^{(i)}(\theta_{n-1},U_n)^2|\cF_{n-1}].
}
The conditional second-moment assumption implies that, on $A_n$,
\bas{\label{eq:prop-ii-cond-second}
\E[X^{(i)}(\theta_{n-1},U_n)^2|\cF_{n-1}]
\le C_X^{(2)}.
}
Since $\frac{\beta(1-\beta^{n-1})}{1-\beta^n}\le 1$, the recursion for $v_n$ yields
\bas{
v_n^{(i)}
\ge \frac{\beta(1-\beta^{n-1})}{1-\beta^n} v_{n-1}^{(i)}.
}
Thus
\bas{\label{eq:prop-ii-inverse-vn-bound}
\1_{A_n}\E[(\sqrt{v_n^{(i)}}+\eps)^{-2}|\cF_{n-1}]
&\le \1_{A_n}\E[(v_n^{(i)})^{-1}|\cF_{n-1}]\\
&\le \1_{A_n}\frac{1-\beta^n}{\beta(1-\beta^{n-1})}(v_{n-1}^{(i)})^{-1}.
}
\cref{le:beta-power-ratio-monotone} and $\beta\ge\frac12$ imply that
\bas{\label{eq:prop-ii-beta-ratio-bound}
\frac{1-\beta^n}{\beta(1-\beta^{n-1})}
\le \frac{1}{\beta}\frac{n}{n-1}
\le 4.
}
Combining \eqref{eq:prop-ii-neg-cor-step}, \eqref{eq:prop-ii-cond-second}, \eqref{eq:prop-ii-inverse-vn-bound}, and \eqref{eq:prop-ii-beta-ratio-bound} gives
\bas{
\E[\1_{A_n}(\sqrt{v_n^{(i)}}+\eps)^{-2}X^{(i)}(\theta_{n-1},U_n)^2]
\le 4 C_X^{(2)}\E[\1_{A_n}(v_{n-1}^{(i)})^{-1}]\le 4 C_X^{(2)} C_v^{(-1)}.
}
This proves~\eqref{eq:prop-ii-bound}.
\end{proof}

\section{Initialisation estimates}
\label{sec:initialisation}

This section estimates the time required for each coordinate of the conditioner
to enter the regime where negative moments are available. We first convert the
conditional Laplace-transform bounds into lower-tail estimates and then iterate
these estimates over blocks whose lengths are comparable to the memory scale
$(1-\beta)^{-1}$. This yields an exponential bound, uniform in $\beta$, for the
probability that a coordinate has not started before the process exits the
prescribed region.

\begin{lemma}
\label{le:laplace-tail-bound}
Let $ c, q \in (0,\infty) $, $ \beta \in (0, 1) $, $ n \in \N $.
Let $ V_n^* $ be a nonnegative random variable which satisfies for all $\lambda \in[0,\infty)$ that
\bas{
\label{eq:7634589-4}
  -
  \ln \mathbb{E}\bigl[
    e^{-\lambda V_n^*}
  \bigr]
  \ge \sum_{ k = 1 }^n \min\Bigl\{ q, c \frac{1-\beta}{1-\beta^n} \beta^{n-k} \lambda \Bigr\} .
}
Then 
\bas{
  \P\bigl( V_n^* \le c / 2 \bigr)
  \le
  \exp\Bigl(
    - \frac q2 \, \frac{ 1 - \beta^n }{ 1 - \beta }
  \Bigr) .
}
\end{lemma}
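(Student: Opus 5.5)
The plan is a one-parameter Chernoff (exponential Markov) bound applied to the lower tail, with the parameter $\lambda$ chosen so that none of the summands in \eqref{eq:7634589-4} is capped at $q$.

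For every $\lambda\in[0,\infty)$ we have $\1_{\{V_n^*\le c/2\}}\le e^{\lambda c/2}e^{-\lambda V_n^*}$. Taking expectations and using \eqref{eq:7634589-4} gives
\[
\P\bigl(V_n^*\le c/2\bigr)\le \exp\Bigl(\frac{\lambda c}{2}-\sum_{k=1}^n\min\Bigl\{q,\,c\,b_n\beta^{n-k}\lambda\Bigr\}\Bigr),
\qquad b_n:=\frac{1-\beta}{1-\beta^n}.
\]

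Next I choose $\lambda=\frac{q}{c\,b_n}$, which is the largest value for which no summand is truncated. With this choice $c\,b_n\beta^{n-k}\lambda=q\beta^{n-k}\le q$ for every $k\in\{1,\dots,n\}$, so each minimum equals $q\beta^{n-k}$. The geometric sum is then
\[
\sum_{k=1}^n q\beta^{n-k}=q\,\frac{1-\beta^n}{1-\beta}=\frac{q}{b_n}.
\]
The linear term equals $\frac{\lambda c}{2}=\frac{q}{2b_n}$. The exponent is therefore $\frac{q}{2b_n}-\frac{q}{b_n}=-\frac q2\,\frac{1-\beta^n}{1-\beta}$, which is exactly the claimed bound.

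There is no real obstacle. The only point requiring care is the choice of $\lambda$: a larger $\lambda$ would cause the most recent terms to saturate at $q$ and reduce the gain. The choice above balances the linear cost $\lambda c/2$ against the full uncapped geometric sum. No earlier results beyond the elementary Markov inequality are needed.
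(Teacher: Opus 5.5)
Your proof is correct and is exactly the paper's argument: the exponential Chebyshev inequality with $\lambda_*=\frac{q(1-\beta^n)}{c(1-\beta)}$, so that every summand equals $q\beta^{n-k}$ and the exponent becomes $\frac{q}{2}\frac{1-\beta^n}{1-\beta}-q\frac{1-\beta^n}{1-\beta}$. One quibble: your closing claim that a larger $\lambda$ would reduce the gain is false in general, since for $\beta$ near $1$ the slope of the exponent just beyond $\lambda_*$, namely $\frac{c}{2}-c\frac{\beta-\beta^n}{1-\beta^n}$, is negative, so a larger $\lambda$ can give a sharper bound; this heuristic plays no role in the proof, though.
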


\begin{proof}
Set
\bas{
\lambda_*=\frac{q(1-\beta^n)}{c(1-\beta)}.
}
By the exponential Chebyshev inequality, we have that
\bas{\label{eq:234621346}
\P(V_n^*\le c/2)
\le e^{\lambda_* c/2}\,\E[e^{-\lambda_*V_n^*}].
}
By~(\ref{eq:7634589-4}),
\bas{
\E[e^{-\lambda_*V_n^*}]
\le \exp\Bigl(-\sum_{k=1}^n \Bigl(c \frac{1-\beta}{1-\beta^n} \beta^{n-k} \lambda_* \Bigr)\wedge q\Bigr).
}
Now, for every $k\in\{1,\dots,n\}$, one has
\bas{
c \frac{1-\beta}{1-\beta^n} \beta^{n-k} \lambda_*
=q\beta^{n-k}\le q.
}
Hence all terms inside the minimum are unsaturated and therefore
\bas{
\sum_{k=1}^n \Bigl(c \frac{1-\beta}{1-\beta^n} \beta^{n-k} \lambda_* \Bigr)\wedge q
= q\sum_{k=1}^n \beta^{n-k}
= q\frac{1-\beta^n}{1-\beta}.
}
Consequently, we get with (\ref{eq:234621346}) that
\bas{
\P(V_n^*\le c/2)
\le \exp\Bigl(\lambda_* c/2-q\frac{1-\beta^n}{1-\beta}\Bigr)
= \exp\Bigl(-\frac q2\,\frac{1-\beta^n}{1-\beta}\Bigr) ,
}
which proves the claim.
\end{proof}

\begin{lemma}
\label{le:laplace-tail-bound2}
Let $ c, q \in (0,\infty) $, $ \beta \in (0, 1) $, $ n_0, n \in \N $ with $ n_0 < n $.
Let $ V_n^* $ be a nonnegative random variable which satisfies for all $ \lambda \in [0,\infty) $ that
\bas{\label{eq:7634589-5}
  - \ln
  \mathbb{E}\bigl[
    e^{ - \lambda V_n^* }
  \bigr]
  \ge
  \sum_{ k = n_0 + 1 }^n
  \min\Bigl\{ q, c \frac{1-\beta}{1-\beta^n} \beta^{n-k} \lambda \Bigr\} .
}
Then 
\bas{
\P\Bigl(V_n^*\le \frac c2\,\frac{1-\beta^{n-n_0}}{1-\beta^n}\Bigr)
\le \exp\Bigl(-\frac q2\,\frac{1-\beta^{n-n_0}}{1-\beta}\Bigr).
}
If, in addition, $n-n_0\ge \frac12(1-\beta)^{-1}$, then
\bas{
\P\Bigl(V_n^*\le \frac c2\,(1-e^{-1/2})\Bigr)
\le \exp\Bigl(-\frac q2\,\frac{1-e^{-1/2}}{1-\beta}\Bigr).
}
\end{lemma}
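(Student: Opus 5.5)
The plan is to mimic the proof of \cref{le:laplace-tail-bound}: apply the exponential Chebyshev inequality at a well-chosen $\lambda_*$ for which none of the summands in \eqref{eq:7634589-5} is saturated.

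For the first claim, choose
\[
\lambda_*=\frac{q(1-\beta^n)}{c(1-\beta)}
\]
as before. For $k\in\{n_0+1,\dots,n\}$ one then has
\[
c\,\frac{1-\beta}{1-\beta^n}\,\beta^{n-k}\lambda_*=q\beta^{n-k}\le q ,
\]
so the minima are all unsaturated. The lower bound in \eqref{eq:7634589-5} therefore becomes
\[
q\sum_{k=n_0+1}^n\beta^{n-k}=q\,\frac{1-\beta^{n-n_0}}{1-\beta}.
\]
The threshold is $a=\frac c2\frac{1-\beta^{n-n_0}}{1-\beta^n}$, so $\lambda_* a=\frac q2\frac{1-\beta^{n-n_0}}{1-\beta}$. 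The exponential Chebyshev inequality gives
\[
\P(V_n^*\le a)\le e^{\lambda_* a}\,\E[e^{-\lambda_* V_n^*}]\le \exp\Bigl(\tfrac q2\tfrac{1-\beta^{n-n_0}}{1-\beta}-q\tfrac{1-\beta^{n-n_0}}{1-\beta}\Bigr),
\]
which is exactly the asserted bound.

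For the second claim, assume $n-n_0\ge\frac12(1-\beta)^{-1}$. Since $\beta\le e^{-(1-\beta)}$, we get $\beta^{n-n_0}\le e^{-(1-\beta)(n-n_0)}\le e^{-1/2}$, hence $1-\beta^{n-n_0}\ge 1-e^{-1/2}$. Moreover $1-\beta^n\le 1$, so the threshold in the first claim satisfies
\[
\frac c2\,\frac{1-\beta^{n-n_0}}{1-\beta^n}\ge \frac c2\,(1-e^{-1/2}).
\]
Consequently the event $\{V_n^*\le \frac c2(1-e^{-1/2})\}$ is contained in the event of the first claim. On the right-hand side, $\exp(-\frac q2\frac{1-\beta^{n-n_0}}{1-\beta})$ is decreasing in $1-\beta^{n-n_0}$, so it is bounded by $\exp(-\frac q2\frac{1-e^{-1/2}}{1-\beta})$. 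Monotonicity of probability then gives the second claim.

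There is no real obstacle here. The only points needing care are the direction of the inequalities in the event inclusion and in the final exponential, together with the elementary estimate $\beta^{m}\le e^{-m(1-\beta)}$.
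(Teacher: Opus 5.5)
Your proposal is correct and matches the paper's proof essentially step for step. Both use the choice $\lambda_*=\frac{q(1-\beta^n)}{c(1-\beta)}$ (which leaves every summand unsaturated) together with the exponential Chebyshev inequality, and for the second claim both use $\beta^{n-n_0}\le e^{-(1-\beta)(n-n_0)}\le e^{-1/2}$ to obtain the event inclusion; you also spell out the step $\exp\bigl(-\frac q2\frac{1-\beta^{n-n_0}}{1-\beta}\bigr)\le\exp\bigl(-\frac q2\frac{1-e^{-1/2}}{1-\beta}\bigr)$, which the paper leaves implicit in ``applying the first estimate''.
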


\begin{proof}
Set
\bas{
\lambda_*=\frac{q(1-\beta^n)}{c(1-\beta)}.
}
By the exponential Chebyshev inequality, we obtain
\bas{
\P\Bigl(V_n^*\le \frac c2\,\frac{1-\beta^{n-n_0}}{1-\beta^n}\Bigr)
\le \exp\Bigl(\lambda_* \frac c2\,\frac{1-\beta^{n-n_0}}{1-\beta^n}\Bigr)\,\E[e^{-\lambda_*V_n^*}].
}
By~(\ref{eq:7634589-5}),
\bas{
\E[e^{-\lambda_*V_n^*}]
\le \exp\Bigl(-\sum_{k=n_0+1}^n \Bigl(c \frac{1-\beta}{1-\beta^n} \beta^{n-k} \lambda_* \Bigr)\wedge q\Bigr).
}
Now, for every $k\in\{n_0+1,\dots,n\}$, one has
\bas{
c \frac{1-\beta}{1-\beta^n} \beta^{n-k} \lambda_*
=q\beta^{n-k}\le q.
}
Hence all terms inside the minimum are unsaturated and therefore
\bas{
\sum_{k=n_0+1}^n \Bigl(c \frac{1-\beta}{1-\beta^n} \beta^{n-k} \lambda_* \Bigr)\wedge q
= q\sum_{k=n_0+1}^n \beta^{n-k}
= q\frac{1-\beta^{n-n_0}}{1-\beta}.
}
Consequently,
\bas{
\P\Bigl(V_n^*\le \frac c2\,\frac{1-\beta^{n-n_0}}{1-\beta^n}\Bigr)
&\le \exp\Bigl(\lambda_* \frac c2\,\frac{1-\beta^{n-n_0}}{1-\beta^n}
-q\frac{1-\beta^{n-n_0}}{1-\beta}\Bigr)\\
&= \exp\Bigl(-\frac q2\,\frac{1-\beta^{n-n_0}}{1-\beta}\Bigr),
}
which proves the first assertion.

For the second assertion, note that $n-n_0\ge \frac12(1-\beta)^{-1}$ 
and hence
\bas{
\beta^{n-n_0}\le e^{-(1-\beta)(n-n_0)}\le e^{-1/2}.
}
Therefore 
\bas{
\frac{1-\beta^{n-n_0}}{1-\beta^n}\ge 1-\beta^{n-n_0}\ge 1-e^{-1/2}
}
and
\bas{
\Bigl\{V_n^*\le \frac c2(1-e^{-1/2})\Bigr\}
\subseteq
\Bigl\{V_n^*\le \frac c2\,\frac{1-\beta^{n-n_0}}{1-\beta^n}\Bigr\}.
}
Applying the first estimate finishes the proof:
\bas{
\P\Bigl(V_n^*\le \frac c2(1-e^{-1/2})\Bigr)
\le \exp\Bigl(-\frac q2\,\frac{1-e^{-1/2}}{1-\beta}\Bigr).
}
\end{proof}

\begin{prop}
\label{prop:3476}
Let $ c, q \in (0,\infty) $, $ \beta \in [ \nicefrac{ 1 }{ 2 }, 1 ) $,
let $ \cV \subseteq \R^d $ be measurable, and
let $ ( X, U ) $ be $ ( c, q ) $-regular on $ \cV $ in the sense of \cref{def:regular}.
Consider the stopping time
\bas{
  \bar\tau =
  \inf\bigl\{
    n \in \N_0 \colon \theta_n\not\in \cV
  \bigr\}
  \qquad
  \text{and}
  \qquad
  \sigma^{ (i) } =
  \inf\bigl\{
    n \in \N \colon v_n^{ (i) } \ge \tfrac{ c ( 1 - e^{ - 1 / 2 } ) }{ 2 }
  \bigr\} .
}
Then for all $ n \in \N $ one has
\bas{
  \P\bigl(
    \bar \tau \ge n, \sigma^{(i)} > n
  \bigr)
  \le
  \exp\Bigl(
    - \frac{ q n ( 1 - e^{ - 1 / 2 } ) }{ 4 }
  \Bigr)
  .
}
\end{prop}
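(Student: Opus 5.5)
The plan is to bound $\P(\bar\tau\ge n,\sigma^{(i)}>n)$ by the probability that the stopped conditioner $V_n^{(i)}=v_n^{(i)}+\1_{\{\bar\tau<n\}}\infty$ lies below the threshold $\iota:=\tfrac{c(1-e^{-1/2})}{2}$. Indeed, $\{\bar\tau\ge n,\sigma^{(i)}>n\}\subseteq\{V_n^{(i)}<\iota\}$. We then use the Laplace-transform bounds of \cref{prop:7463} together with the tail bounds in \cref{le:laplace-tail-bound,le:laplace-tail-bound2}. Set the memory block length $m:=\lceil \tfrac12(1-\beta)^{-1}\rceil$. Since $\beta\ge\frac12$, we have $m\le \tfrac12(1-\beta)^{-1}+1\le(1-\beta)^{-1}$. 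We split into three regimes for $n$.

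\emph{Short horizon, $n<2m$.} Apply \cref{prop:7463} with $n_0=0$ and feed the result into \cref{le:laplace-tail-bound}. Since $\iota\le c/2$, this gives $\P(V_n^{(i)}\le c/2)\le\exp(-\frac q2\frac{1-\beta^n}{1-\beta})$.
\begin{itemize}
\item If $x:=n(1-\beta)\le\frac12$, then $1-\beta^n\ge1-e^{-x}\ge 2x(1-e^{-1/2})$ by concavity. The exponent is therefore at least $qn(1-e^{-1/2})$, which is more than needed.
\item If $\frac12\le x$, then $1-\beta^n\ge1-e^{-1/2}$. In addition, $n<2m$ gives $x\le(1-\beta)(2m)\le 1+2(1-\beta)\le2$. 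Hence $\frac{q}{2}\frac{1-e^{-1/2}}{1-\beta}\ge \frac{qn(1-e^{-1/2})}{4}$.
\end{itemize}

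\emph{Long horizon, $n\ge 2m$: block iteration.} Set $K:=\lfloor n/m\rfloor\ge n/(2m)\ge n(1-\beta)/2$ and $n_j:=n-jm$ for $j=0,\dots,K$, so that $n_K\ge0$. On the event $\{\bar\tau\ge n,\sigma^{(i)}>n\}$, for every $j<K$ we have both $\bar\tau\ge n_j$ and $v^{(i)}_{n_j}<\iota$. Fix $j<K$ and condition on $\cF_{n_{j+1}}$. Apply \cref{prop:7463} with base time $n_0=n_{j+1}$ and target time $n_j$, and drop the nonnegative $v_{n_{j+1}}$-term. This yields the hypothesis of \cref{le:laplace-tail-bound2} for the conditional law, with a window of length $n_j-n_{j+1}=m\ge\frac12(1-\beta)^{-1}$. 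The second assertion of \cref{le:laplace-tail-bound2} then gives
\[
\P\bigl(\bar\tau\ge n_j,\ v^{(i)}_{n_j}<\iota\,\big|\,\cF_{n_{j+1}}\bigr)\le \exp\Bigl(-\tfrac q2\tfrac{1-e^{-1/2}}{1-\beta}\Bigr).
\]
Since the events for larger $j$ are $\cF_{n_{j}}$-measurable, iterating via the tower property (from $j=0$ down to $j=K-1$) bounds the probability by $\exp(-K\frac q2\frac{1-e^{-1/2}}{1-\beta})$. Using $K\ge n(1-\beta)/2$, this is at most $\exp(-qn(1-e^{-1/2})/4)$.

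The main obstacle is making the conditional application rigorous. Precisely:
\begin{itemize}
\item \cref{le:laplace-tail-bound2} is stated for a single random variable, so it has to be applied to regular conditional distributions given $\cF_{n_{j+1}}$.
\item The $\infty$-convention must be kept consistent, so that the stopped variable $V^{(i)}_{n_j}$ really encodes $\{\bar\tau\ge n_j\}$.
\item The time index $n$ in the normalisation $\frac{1-\beta}{1-\beta^n}$ of \cref{prop:7463} must be replaced by the block end $n_j$.
\end{itemize}
Everything else reduces to elementary inequalities relating $n$, $m$ and $(1-\beta)^{-1}$.
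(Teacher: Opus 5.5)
Your proposal is correct and follows essentially the same route as the paper. The short horizon is handled by \cref{prop:7463} with $n_0=0$ and \cref{le:laplace-tail-bound}; for the long horizon, the time interval is cut into blocks of length $\lceil \frac{1}{2(1-\beta)}\rceil$, \cref{prop:7463} and the second assertion of \cref{le:laplace-tail-bound2} are applied conditionally on each block, and the bounds are iterated via the tower property. The differences are cosmetic: you split at $n<2m$ instead of $n\le(1-\beta)^{-1}$, and you cut blocks backward from $n$ and discard the leftover initial segment rather than absorbing it into a first block treated by \cref{le:laplace-tail-bound}. The conditioning subtleties you flag resolve as you indicate, since $\{\bar\tau\ge n_j\}$ is contained in the event that the shifted exit time from $n_{j+1}$ is at least $n_j$.
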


\begin{proof}
Set
\bas{
\eta=\exp\Bigl(-\frac q2\,\frac{1-e^{-1/2}}{1-\beta}\Bigr).
}
First suppose that $n\le (1-\beta)^{-1}$. Then \cref{prop:7463} with $n_0=0$ and \cref{le:laplace-tail-bound} imply that
\bas{\label{eq:startup-short-prob-bound}
\P(\bar\tau\ge n,\sigma^{(i)}> n)
&\le \P\Bigl(\bar\tau\ge n, v_n^{(i)}\le \frac {1-e^{-1/2}}{2} c\Bigr)\\
&\le \P\Bigl(\bar\tau\ge n, v_n^{(i)}\le \frac c2\Bigr)\le \exp\Bigl(-\frac q2\,\frac{1-\beta^n}{1-\beta}\Bigr).
}
For $n=1$, this already implies the claim. Thus we may assume that $n\ge2$. Since
\bas{\label{eq:startup-geometric-lower-bound}
\frac{1-\beta^n}{1-\beta}=\sum_{k=0}^{n-1}\beta^k\ge n\beta^{n-1},
}
and since $n\le (1-\beta)^{-1}$ implies $\beta\ge 1-\frac1n$, we get for $n\ge2$ that
\bas{\label{eq:startup-beta-power-lower-bound}
\beta^{n-1}\ge \Bigl(1-\frac1n\Bigr)^{n-1}=\Bigl(\Bigl(1+\frac1{n-1}\Bigr)^{n-1} \Bigr)^{-1}\ge e^{-1}\ge \frac14. 
}
Consequently, by \eqref{eq:startup-short-prob-bound}, \eqref{eq:startup-geometric-lower-bound}, and \eqref{eq:startup-beta-power-lower-bound},
\bas{
\P(\bar\tau\ge n,\sigma^{(i)}> n)\le \exp\Bigl(-\frac q8\,n\Bigr)
\le \exp\Bigl(-\frac {q(1-e^{-1/2})}{4}\,n\Bigr).
}

Now suppose that $n>(1-\beta)^{-1}$ and set
\bas{
L=\Bigl\lceil \frac1{2(1-\beta)}\Bigr\rceil\quad\text{ and } \quad J=\Bigl\lfloor \frac nL\Bigr\rfloor.
}
Since $\beta\ge \frac12$, one has
\bas{
\frac1{2(1-\beta)}\le L\le \frac1{1-\beta}.
}
Moreover,  $J\ge 1$ and
\bas{\label{eq:745351}
J\ge \frac n{2L}\ge \frac{n(1-\beta)}2.
}
Setting, for all $j=2,\dots,J$,
\bas{
t_0=0 \quad \text{  and }\quad t_j=n-JL+jL,
}
we obtain a partition
\bas{
0=t_0<t_1<\dots<t_J=n
}
such that all $j=1,\dots,J$
\bas{
t_j-t_{j-1} \ge L\ge \frac1{2(1-\beta)}.
}
We now show by induction that
\bas{\label{eq:block-step-bound}
\P(\bar\tau\ge t_j,\sigma^{(i)}> t_j)\le \eta^j,
\qquad j\in\{1,\dots,J\}.
}
For $j=1$, \cref{prop:7463} with $n_0=0$ and \cref{le:laplace-tail-bound} give
\bas{
\P(\bar\tau\ge t_1,\sigma^{(i)}> t_1)
&\le \P\Bigl(\bar\tau\ge t_1, v_{t_1}^{(i)}\le \frac {1-e^{-1/2}}{2} c\Bigr)\\
&\le \P\Bigl(\bar\tau\ge t_1, v_{t_1}^{(i)}\le \frac c2\Bigr)\le \exp\Bigl(-\frac q2\,\frac{1-\beta^{t_1}}{1-\beta}\Bigr).
}
Since $t_1\ge \frac1{2(1-\beta)}$, we have 
\bas{\beta^{t_1}= \exp\{t_1 \ln \beta \}\le \exp\{-t_1(1-\beta)\}\le  e^{-1/2}}
and hence
\bas{
\P(\bar\tau\ge t_1,\sigma^{(i)}> t_1)\le \eta.
}

Next let $j\in\{2,\dots,J\}$ and assume that
\bas{
\P(\bar\tau\ge t_{j-1},\sigma^{(i)}> t_{j-1})\le \eta^{j-1}.
}
Set $A_{j-1}=\{\bar\tau> t_{j-1},\sigma^{(i)}> t_{j-1}\}$.
Then
\bas{\label{eq:startup-induction-conditioning}
\P(\bar\tau\ge t_j,\sigma^{(i)}> t_j)
=\E\Bigl[\1_{A_{j-1}}
\P\bigl(\bar\tau\ge t_j,\sigma^{(i)}> t_j\bigm|\cF_{t_{j-1}}\bigr)\Bigr].
}
Define the shifted exit time
\bas{
\bar\tau^{(j)}=\inf\{m\in\N_0:m\ge t_{j-1}\text{ and }\theta_m\notin\cV\}.
}
On $A_{j-1}$ this shifted stopping time agrees with $\bar\tau$ up to the interval $[t_{j-1},t_j]$.
\cref{prop:7463}, applied conditionally on $\cF_{t_{j-1}}$ with initial time $t_{j-1}$ and exit time $\bar\tau^{(j)}$, implies that the random variable
\bas{
V_{t_j}^*=v_{t_j}^{(i)}+\1_{\{t_j\le \bar\tau^{(j)}\}^c}\infty
}
satisfies the Laplace bound in \cref{le:laplace-tail-bound2}. Since
\bas{
t_j-t_{j-1}\ge \frac1{2(1-\beta)},
}
\cref{le:laplace-tail-bound2} yields, on $A_{j-1}$,
\bas{\label{eq:startup-block-conditional-small-v}
\P\Bigl(V_{t_j}^*\le \frac c2(1-e^{-1/2})\Bigm|\cF_{t_{j-1}}\Bigr)\le \eta.
}
Moreover, on $A_{j-1}$,
\bas{\label{eq:startup-block-event-inclusion}
\{\bar\tau\ge t_j,\sigma^{(i)}> t_j\}
\subseteq
\Bigl\{V_{t_j}^*\le \frac c2(1-e^{-1/2})\Bigr\}.
}
Consequently, by \eqref{eq:startup-induction-conditioning}, \eqref{eq:startup-block-conditional-small-v}, and \eqref{eq:startup-block-event-inclusion},
\bas{
\P(\bar\tau\ge t_j,\sigma^{(i)}> t_j)
&\le \eta\,\P(A_{j-1})\le \eta\,\P(\bar\tau\ge t_{j-1},\sigma^{(i)}> t_{j-1})
\le \eta^j.
}
This proves~(\ref{eq:block-step-bound}).

Since $t_J=n$, we conclude with~(\ref{eq:745351}) that
\bas{
\P(\bar\tau\ge n,\sigma^{(i)}> n)\le \eta^J
\le \exp\Bigl(-\frac q2\,\frac{1-e^{-1/2}}{1-\beta}\,\frac{n(1-\beta)}2\Bigr)
= \exp\Bigl(-\frac {q(1-e^{-1/2})}{4}\,n\Bigr),
}
which proves the claim.
\end{proof}

\section{Combining the estimates before and after the coordinate startup times}
\label{sec:combined-startup-estimate}

This section assembles the preceding estimates into the final error bound. At
every iteration, each coordinate is split according to whether its startup time
has occurred: the negative-moment, covariance, and quadratic estimates are used
in the late regime, while direct self-normalization and the startup tail bound
control the early regime. Together, these estimates yield the general recursion and, after
convolution in numerical time, the main theorem with constants uniform in
$\beta$ and $\epsilon$.

\begin{theorem}
\label{thm:combined-early-late-bounded}
Let
$ \beta \in [ \nicefrac{ 1 }{ 2 } , 1 ) $, $ \eps \in [0,\infty) $, $ \rho, c, q, C_X, L_f, C_{ \mathrm{Loj} } \in (0,\infty) $,
let $ V \subseteq \R^d $ be measurable,
and suppose that the following is true:
\begin{enumerate}[label=(\alph*)]
\item[(a)] 
It holds that
\bas{\label{eq:combined-regularity-parameter-condition}
  q+3\ln\beta\ge\rho
  .
}
\item[(b)]  $(X,U)$ is
$(c,q)$-regular on $V$ in the sense of \cref{def:regular} and uniformly
bounded: for every $\theta\in V$, $i\in\{1,\dots,d\}$ one has
\bas{\label{eq:combined-uniform-innovation-bound}
  \|X^{(i)}(\theta,U)\|_{L^\infty}\le C_X.
}
\item[(c)] There exists $ F \colon \R^d\to[0,\infty)$ with globally $L_f$-Lipschitz
continuous derivative such that for every $ \theta \in V $ one has
\bas{\label{eq:combined-gradient-representation}
  f_X(\theta)=-\grad F(\theta)
}
and
\bas{\label{eq:combined-upper-lojasiewicz-bound}
  F(\theta)\le C_{\mathrm{Loj}}|f_X(\theta)|^2.
}
\item[(d)] The step-size sequence $(\gamma_n)_{n\in\N}$ is
$(0,\infty)$-valued and satisfies for all $ n \in \N \backslash \{ 1 \} $ that
\bas{\label{eq:combined-step-smallness}
  \gamma_n \le C_{\mathrm{Loj}}(C_X+\eps) .
}
\end{enumerate}
Set
\bas{
\label{eq:definition_of_kappa1_to_3}
\begin{aligned}
\kappa_0&=\frac{4}{1-e^{-1/2}},
&\qquad \kappa_1&=\frac{1}{2C_{\mathrm{Loj}}(C_X+\eps)},\\
\kappa_2&=\frac{8\kappa_0^3dL_fC_X^6C_{\mathrm{Loj}}(C_X+\eps)}{c^3}
D_{\rho,3},
&\qquad \kappa_3&=\frac{\kappa_0 dL_fC_X^2}{c}D_{\rho,1},
\end{aligned}
}
and, for every $ n \in \N \backslash \{ 1 \} $,
\bas{\label{eq:combined-early-remainder-definition}
R_n^{\mathrm{early}}
&=d\Bigl(2C_Xn^{1/2}
+\frac{L_f}{2}n\gamma_n\Bigr)e^{-q(n-1)/\kappa_0}\gamma_n.
}
For every $ n \in \N_0 $ set $ t_n = \sum_{ k = 1 }^n \gamma_k $.
Then one has for every $ n \in \N $ that
\bas{
\label{eq:combined-early-late-iterated}
&
  \E\bigl[
    F( \theta_n )
    \1_{ \{ \tau \ge n \} }
  \bigr]
\\ &
  \le
  \bigl[
    F(\theta_0)^{1/2}
    +
    (
      2^{ - 1 } d L_f
    )^{ 1 / 2 }
    \gamma_1
  \bigr]^2
  e^{ \kappa_1 ( t_1 - t_n ) }
  +
  \sum_{ j = 2 }^n
  \left[
    \kappa_2
    \bigl[
      ( j - 1 )^{ - 1 } \vee (1-\beta)
    \bigr]^2
    \gamma_j
    +
    \kappa_3
    \gamma_j^2
    +
    R_j^{ \mathrm{early} }
  \right]
  e^{ \kappa_1 (t_j - t_n) }
  ,
}
where
$
  \tau = \inf\{ n \in \N_0 \colon \theta_n \notin V \}
$.

\end{theorem}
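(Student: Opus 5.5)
The plan is to apply \cref{prop:F-recursion-md-loj-event} with $n_0=1$ and the coordinate events $G_{n,i}=\{\sigma^{(i)}\le n-1\}\in\cF_{n-1}$. Here $\sigma^{(i)}=\inf\{m\in\N\colon v_m^{(i)}\ge 2c/\kappa_0\}$, and $2c/\kappa_0=c(1-e^{-1/2})/2$ is exactly the threshold of \cref{prop:3476}. This reduces the theorem to a scalar recursion for $\psi_n$, which we then iterate.

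\textbf{Choice of constants.} Since $\tau$ starts at $n_0=1$ while \cref{prop:3476,le:moment_bound} use $\bar\tau$ from $0$, we note $\{\bar\tau\ge n\}\subseteq\{\tau\ge n\}$ and handle $\theta_0\notin V$ via $F(\theta_0)$ directly. The lower \L{}ojasiewicz constant $c_{\mathrm{Loj}}$ is not available. Instead of \eqref{eq:F-recursion-late-young}, we therefore bound the covariance term by Cauchy--Schwarz with $\E[\1_{\{\tau\ge n\}}|f_X(\theta_{n-1})|^2]\le dC_X^2$. Alternatively we absorb it via Young against the drift term, using $|f_X|^2\ge F/C_{\mathrm{Loj}}$.

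For (i), on $\{\tau\ge n\}$ we have $\sqrt{v_n^{(i)}}\le C_X$, so $(\sqrt{v_n^{(i)}}+\eps)^{-1}\ge (C_X+\eps)^{-1}$. Thus $c_v=(C_X+\eps)^{-1}$, and $c_v/(2C_{\mathrm{Loj}})=\kappa_1$ is the contraction rate.

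For (ii), combine \cref{prop:prop_ii} with $C_X^{(2)}=C_X^2$ and \cref{le:moment_bound} at $p=1$, rescaled to threshold $2c/\kappa_0$. This gives $C_v^{(-1)}\lesssim \kappa_0 c^{-1}D_{\rho,1}$ and hence the term $\kappa_3\gamma_n^2$.

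For (iii), use \cref{prop:prop_iii} with $C_X^{(3)}=C_X^3$ and \cref{le:moment_bound} at $p=3$. The hypothesis $q+3\ln\beta\ge\rho$ is exactly what is needed there. Then \cref{le:beta-quotient-regimes} turns $\frac{1-\beta}{1-\beta^{n-1}}$ into a multiple of $(n-1)^{-1}\vee(1-\beta)$. After the Young step, $C_\beta(n)^2 C_{\mathrm{Loj}}/c_v$ produces $\kappa_2[(n-1)^{-1}\vee(1-\beta)]^2\gamma_n$; the factor $C_{\mathrm{Loj}}(C_X+\eps)$ in $\kappa_2$ is consistent with this.

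\textbf{Early regime.} The three $G_{n,i}^c$ terms in \eqref{eq:F-recursion-late-square-root} are handled by self-normalization. Since $v_n^{(i)}\ge \frac{1-\beta}{1-\beta^n}X^{(i)}(\theta_{n-1},U_n)^2\ge n^{-1}X^{(i)}(\theta_{n-1},U_n)^2$ by \cref{le:beta-power-ratio-monotone}, we get $(\sqrt{v_n^{(i)}}+\eps)^{-1}|X^{(i)}|\le n^{1/2}$ under the $0/0$ convention. Consequently the middle term is at most $C_Xn^{1/2}$ per coordinate, the quadratic term at most $\frac{L_f}{2}n\gamma_n^2$, and the drift correction at most $c_vC_X^2\gamma_n\le C_X\gamma_n$. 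Each is multiplied by $\P(\tau\ge n,\sigma^{(i)}>n-1)\le e^{-q(n-1)/\kappa_0}$ from \cref{prop:3476} with $q(1-e^{-1/2})/4=q/\kappa_0$. Summing over $i$ yields $R_n^{\mathrm{early}}$.

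\textbf{Iteration.} This gives $\psi_n\le(1-\kappa_1\gamma_n)\widetilde\psi_n+b_n$ with $\widetilde\psi_n\le\psi_{n-1}$, since $\{\tau\ge n\}\subseteq\{\tau\ge n-1\}$. Using $1-x\le e^{-x}$ (or the smallness (d) so that $\kappa_1\gamma_n\le1/2$) and iterating from $n=2$ gives the sum with weights $e^{\kappa_1(t_j-t_n)}$. The first step is bounded by smoothness, $F(\theta_1)\le F(\theta_0)+|\nabla F(\theta_0)||\Delta|+\frac{L_f}{2}|\Delta|^2$ with $|\Delta|\le\gamma_1\sqrt d$, again by self-normalization at $n=1$ where $v_1=X^2$. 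Combined with $|\nabla F|^2\le 2L_fF$ for nonnegative $L_f$-smooth $F$, this yields $[F(\theta_0)^{1/2}+(dL_f/2)^{1/2}\gamma_1]^2$.

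The main obstacle is bookkeeping the covariance term without a lower \L{}ojasiewicz bound. I would first try Young directly on the $f_X\cdot\cov$ product against half of the drift $c_v\gamma_n\sum_i\1_{G_{n,i}}|f_X^{(i)}|^2$. This gives the $C_\beta^2/c_v$ scaling, whose constants I would then match to $\kappa_2$.
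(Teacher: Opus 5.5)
Your architecture matches the paper's proof: you use $G_{n,i}=\{\sigma^{(i)}\le n-1\}$ with threshold $2c/\kappa_0$, and \cref{prop:prop_ii} and \cref{prop:prop_iii} fed by \cref{le:moment_bound} with $p\in\{1,3\}$. You also use \cref{le:beta-quotient-regimes} to get $\frac{1-\beta}{1-\beta^{n-1}}\le 2h_n$, where $h_n=(n-1)^{-1}\vee(1-\beta)$, and the self-normalization bound $n^{1/2}$ plus \cref{prop:3476} for the early coordinates. The first-step smoothness estimate and the iteration using $\kappa_1\gamma_n\le\frac12$ are also the paper's. (The case $\theta_0\notin V$ is trivial, since the theorem's $\tau$ starts at $0$ and the left-hand side vanishes; afterwards $\tau$ and $\bar\tau$ coincide.)

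The gap is in the covariance term, and it comes from a false premise. A lower \L{}ojasiewicz constant \emph{is} available: $c_{\mathrm{Loj}}=(2L_f)^{-1}$, because $|\grad F|^2\le 2L_fF$ for nonnegative $L_f$-smooth $F$. This is the very inequality you use for the first step. The paper inserts it into \eqref{eq:F-recursion-late-young}. With $C_\beta(n)=8\sqrt d\,C_X^3(\kappa_0/(2c))^{3/2}D_{\rho,3}^{1/2}h_n$, the Young term $\frac{C_\beta(n)^2C_{\mathrm{Loj}}}{2c_vc_{\mathrm{Loj}}}=C_\beta(n)^2L_fC_{\mathrm{Loj}}(C_X+\eps)$ is exactly $\kappa_2h_n^2$. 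That is where the factor $L_f$ in $\kappa_2$ comes from. Your expression $C_\beta(n)^2C_{\mathrm{Loj}}/c_v$ equals $\kappa_2h_n^2/L_f$, so your consistency check is off by that factor.

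Your first workaround fails. Cauchy--Schwarz with $\E[\1_{\{\tau\ge n\}}|f_X(\theta_{n-1})|^2]\le dC_X^2$ gives $\sqrt d\,C_XC_\beta(n)\gamma_n$. This is linear in $h_n\le 1$, so it cannot be bounded by $\kappa_2h_n^2\gamma_n$; downstream it would give a memory error of order $1-\beta$ instead of $(1-\beta)^2$.

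Your second workaround is sound and is a genuine, slightly sharper alternative to \eqref{eq:F-recursion-late-young}. Here you apply per-coordinate Young against half the drift, $|f_X^{(i)}\xi_{n,i}|\le\frac{c_v}{2}|f_X^{(i)}|^2+\frac{1}{2c_v}\xi_{n,i}^2$, where $\xi_{n,i}$ is the conditional covariance. Using $|f_X|^2\ge F/C_{\mathrm{Loj}}$ it keeps the contraction $\kappa_1$, and it produces $\frac{C_\beta(n)^2}{2c_v}\gamma_n=\frac{\kappa_2}{2L_fC_{\mathrm{Loj}}}h_n^2\gamma_n$. To reach the stated bound you must still show $2L_fC_{\mathrm{Loj}}\ge1$, and this again needs the self-bounding inequality. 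At any $\theta\in V$ with $F(\theta)>0$ one has $F(\theta)\le C_{\mathrm{Loj}}|\grad F(\theta)|^2\le 2L_fC_{\mathrm{Loj}}F(\theta)$. If instead $F\equiv0$ on $V$, then $f_X\equiv0$ on $V$, since zeros of $F\ge0$ are minima, and the covariance term vanishes. On this route you cannot quote \cref{prop:F-recursion-md-loj-event} as stated, because its hypotheses include $c_{\mathrm{Loj}}$; you must rerun its proof from \eqref{eq:F-recursion-basic-estimate}. With the comparison $2L_fC_{\mathrm{Loj}}\ge1$ added, your argument proves the theorem; without it, the match with $\kappa_2$ is not established.
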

\begin{proof}
If $ \theta_0 \notin V$, then $ \tau = 0 $
and the left-hand side of \eqref{eq:combined-early-late-iterated} vanishes.
Hence we may assume without loss of generality that
$ \theta_0 \in V $. Set
\bas{\label{eq:combined-proof-local-constants}
  c_v = (C_X + \eps)^{ - 1 }
  ,
  \qquad
  c_{ \mathrm{Loj} } = ( 2 L_f )^{ - 1 } .
}
For every $ n \in \N_0 $ set
\bas{
\label{eq:combined-stopped-objective-definition}
  \varphi_n
  =
  \E\bigl[
    F( \theta_n ) \1_{ \{ \tau \ge n \} }
  \bigr]
  .
}
For every $x\in\R^d$, the $L_f$-smoothness inequality, applied with
$h=-L_f^{-1}\grad F(x)$, and the nonnegativity of $F$ give \bas{\label{eq:combined-self-bounding-estimate}
\begin{gathered}
\textstyle
  0
 \le F( x - L_f^{ - 1 } \grad F(x) )
 \le F( x ) - \frac{ 1 }{ 2 L_f } | \grad F(x) |^2
 ,
\qquad
  | \grad F(x) |^2
  \le 2 L_f F(x)
  .
\end{gathered}
}
Thus the choice of $c_{\mathrm{Loj}}$ in
\eqref{eq:combined-proof-local-constants} is admissible by
\eqref{eq:combined-gradient-representation} and
\eqref{eq:combined-self-bounding-estimate}.
For every $ i \in \{ 1, \dots, d \} $ define
$
  \sigma^{ (i) }
  =
  \inf\{ n \in \N \colon v_n^{ (i) } \ge 2 c / \kappa_0 \}
$.
For every $ n \ge 2 $ set
\bas{\label{eq:combined-step-local-definitions}
\begin{aligned}
h_n&=(n-1)^{-1}\vee(1-\beta),
&
p_n&=e^{-q(n-1)/\kappa_0},
&
A_n&=\{\tau\ge n\},
\\
G_{n,i}&=\{\sigma^{(i)}\le n-1\},
&
Y_{n,i}
&=(\sqrt{v_n^{(i)}}+\eps)^{-1}
X^{(i)}(\theta_{n-1},U_n) .
\end{aligned}
}
By \eqref{eq:combined-step-local-definitions}, the events $A_n$ and
$G_{n,i}$ are $\cF_{n-1}$-measurable. On $G_{n,i}$,
let
\begin{equation}\label{eq:combined-xi-definition}
  \xi_{n,i}
  =
  \cov\bigl(
    (\sqrt{v_n^{(i)}}+\eps)^{-1},
    X^{(i)}(\theta_{n-1},U_n)
    \mid\cF_{n-1}
  \bigr).
\end{equation}
When $\eps=0$, inverse factors and covariances used with
$\1_{G_{n,i}}$ are extended by zero to $G_{n,i}^c$.

\emph{Step 1.} \emph{Late coordinates satisfy uniform moment bounds, while
early coordinates are exponentially rare.}

On $ A_n $, \eqref{eq:v_n_rec},
\eqref{eq:combined-uniform-innovation-bound}, and
\eqref{eq:combined-step-local-definitions} give
$0\le v_n^{(i)}\le C_X^2$; hence, by
\eqref{eq:combined-proof-local-constants},
$(\sqrt{v_n^{(i)}}+\eps)^{-1}\ge c_v$ on $A_n\cap G_{n,i}$.
Put $\widetilde c=2c/\kappa_0$. Since
$\widetilde c<c$, $(X,U)$ is also $(\widetilde c,q)$-regular, and the
corresponding startup time in \cref{le:moment_bound} is $\sigma^{(i)}$.
Since
$A_n\cap G_{n,i}\subseteq
\{\tau\ge n-1,\,\sigma^{(i)}\le n-1\}$ and since
$q+r\ln\beta\ge\rho$ for every $r\in\{1,3\}$,
\cref{le:moment_bound} gives, for every $r\in\{1,3\}$,
\begin{equation}\label{eq:combined-late-inverse-moment-bound}
  \E[\1_{A_n\cap G_{n,i}}(v_{n-1}^{(i)})^{-r}]
  \le\widetilde c^{-r}D_{\rho,r}.
\end{equation}
Moreover,
\begin{equation}\label{eq:combined-beta-quotient-bound}
  \frac{1-\beta}{1-\beta^{n-1}}\le2h_n
\end{equation}
follows from \cref{le:beta-quotient-regimes} and
\eqref{eq:combined-step-local-definitions}. Combining
\eqref{eq:combined-late-inverse-moment-bound} and
\eqref{eq:combined-beta-quotient-bound} with \cref{prop:prop_ii}
and~\cref{prop:prop_iii} and
\eqref{eq:combined-uniform-innovation-bound} yields
\bas{\label{eq:combined-coordinate-bounds}
\E\Bigl[
\1_{A_n}\sum_{i=1}^d\1_{G_{n,i}}Y_{n,i}^2
\Bigr]
&\le \frac{2d\kappa_0C_X^2}{c}D_{\rho,1},
\qquad
\E\Bigl[
\1_{A_n}\sum_{i=1}^d\1_{G_{n,i}}\xi_{n,i}^2
\Bigr]^{1/2}
\le
8\sqrt d\,C_X^3
\Bigl(\frac{\kappa_0}{2c}\Bigr)^{3/2}
D_{\rho,3}^{1/2}h_n.
}
Moreover, \eqref{eq:v_n_rec} and
\eqref{eq:combined-step-local-definitions} give
$|Y_{n,i}|\le n^{1/2}$, while \cref{prop:3476} and
\eqref{eq:combined-step-local-definitions} give
$\P(A_n\cap G_{n,i}^c)\le p_n$. Since
$|f_X^{(i)}|\le C_X$ on $V$, we obtain
\bas{\label{eq:combined-early-coordinate-bounds}
\E\Bigl[
\1_{A_n}\sum_{i=1}^d\1_{G_{n,i}^c}
|f_X^{(i)}(\theta_{n-1})|^2
\Bigr]
\le dC_X^2p_n,\quad
\E\Bigl[
\1_{A_n}\sum_{i=1}^d\1_{G_{n,i}^c}
|f_X^{(i)}(\theta_{n-1})Y_{n,i}|
\Bigr]
\le dC_Xn^{1/2}p_n,\\
\text { and } \quad \E\Bigl[
\1_{A_n}\sum_{i=1}^d\1_{G_{n,i}^c}Y_{n,i}^2
\Bigr]
\le dnp_n.
}
\emph{Step 2.} \emph{The coordinate estimates and
\cref{prop:F-recursion-md-loj-event} yield a single Lyapunov recursion.}

The lower bound for the inverse factor on $A_n\cap G_{n,i}$ proves
condition~\emph{(i)} of
\cref{prop:F-recursion-md-loj-event}. Conditions~\emph{(ii)}
and~\emph{(iii)} are given by
\eqref{eq:combined-coordinate-bounds}, while the required \L{}ojasiewicz
bounds follow from \eqref{eq:combined-upper-lojasiewicz-bound},
\eqref{eq:combined-gradient-representation},
\eqref{eq:combined-self-bounding-estimate}, and
\eqref{eq:combined-proof-local-constants}. Since $\theta_0\in V$, the
stopping times agree for $n_0=1$. Hence
\eqref{eq:F-recursion-late-young}, applied with $n_0=1$, together with
\eqref{eq:combined-early-coordinate-bounds} and the definitions in
\eqref{eq:definition_of_kappa1_to_3}, gives
\bas{
\varphi_n
&\le
\Bigl(1-\frac{c_v}{2C_{\mathrm{Loj}}}\gamma_n\Bigr)
\E[\1_{A_n}F(\theta_{n-1})]
+\kappa_2h_n^2\gamma_n+\kappa_3\gamma_n^2\\
&\quad
+d(c_vC_X^2+C_Xn^{1/2})p_n\gamma_n
+\frac{dL_f}{2}np_n\gamma_n^2.
}
By \eqref{eq:combined-step-smallness}, the coefficient in the first line is
nonnegative, and
$\E[\1_{A_n}F(\theta_{n-1})]\le\varphi_{n-1}$.
Since \eqref{eq:combined-proof-local-constants} gives
$c_vC_X^2\le C_X\le C_Xn^{1/2}$,
\eqref{eq:combined-early-remainder-definition} now yields
\bas{
\label{eq:combined-early-late-young-recursion}
\varphi_n
\le(1-\kappa_1\gamma_n)\varphi_{n-1}
+\kappa_2h_n^2\gamma_n+\kappa_3\gamma_n^2+R_n^{\mathrm{early}}.
}

\emph{Step 3.} \emph{The scalar recursion can be iterated.}
For every $i\in\{1,\dots,d\}$, equations \eqref{eq:v_n_rec} and
\eqref{eq:zero-normalization-convention}, applied at $n=1$, give the first
line below; together with \eqref{eq:F-recursion-late-update}, this yields the
second:
\bas{\label{eq:combined-first-step-increment}
\left|(\sqrt{v_1^{(i)}}+\eps)^{-1}X^{(i)}(\theta_0,U_1)\right|
&\le1,
\qquad
|\theta_1-\theta_0|\le\sqrt d\,\gamma_1.
}
Equations \eqref{eq:F-recursion-late-smoothness},
\eqref{eq:combined-gradient-representation},
\eqref{eq:combined-self-bounding-estimate}, and
\eqref{eq:combined-first-step-increment} yield
\bas{\label{eq:combined-first-step-objective-bound}
F(\theta_1)
&\le F(\theta_0)
+|\grad F(\theta_0)|\,|\theta_1-\theta_0|
+\frac{L_f}{2}|\theta_1-\theta_0|^2
\le
\left[F(\theta_0)^{1/2}
+\Bigl(\frac{dL_f}{2}\Bigr)^{1/2}\gamma_1\right]^2.
}
By \eqref{eq:combined-stopped-objective-definition},
$\varphi_1\le\E[F(\theta_1)]$; hence
\eqref{eq:combined-first-step-objective-bound} controls $\varphi_1$ by the
initial term in \eqref{eq:combined-early-late-iterated}. Finally,
\eqref{eq:combined-step-smallness} gives $\kappa_1\gamma_n\le1/2$.
Iterating \eqref{eq:combined-early-late-young-recursion} from time $1$ and
using $1-x\le e^{-x}$ proves \eqref{eq:combined-early-late-iterated}.
\end{proof}

\begin{proof}[Proof of \cref{thm:combined-early-late-numerical-time}]
Fix $\eps\in[0,1]$. The constant $\kappa_1$ from
\cref{thm:combined-early-late-bounded} is
\bas{\label{eq:combined-main-epsilon-kappa-one}
\kappa_1
=
  \bigl[ 2 C_{ \mathrm{Loj} } ( C_X + \eps ) \bigr]^{ - 1 }
  .
}
The assumptions of \cref{thm:combined-early-late-bounded} are satisfied. Since $\eps\le1$,
\eqref{eq:combined-main-eta} and
\eqref{eq:combined-main-epsilon-kappa-one} give
\bas{
\label{eq:combined-main-uniform-kappa-bounds}
  \eta
  \le
  \kappa_1
  \le
  [
    2 C_{ \mathrm{Loj} } C_X
  ]^{ - 1 }
  .
}
Set $\overline\gamma=C_{\mathrm{Loj}}(C_X+1)=(2\eta)^{-1}$.
Then the assumption that
$
  ( \gamma_n )_{ n \in \N }
$
is $ ( 0, C_{ \mathrm{Lo} } ( C_X + \varepsilon ) ] $-valued
and the fact that $\eps\le1$ show that,
for every $j\ge2$,
\bas{\label{eq:combined-main-uniform-step-bound}
\gamma_j\le\overline\gamma.
}
By \eqref{eq:definition_of_kappa1_to_3}, the constant $\kappa_2$ satisfies
\bas{
\label{eq:combined-main-uniform-kappa-two-bound}
  \kappa_2
  \le
  \frac{8\kappa_0^3dL_fC_X^6C_{\mathrm{Loj}}(C_X+1)}{c^3}
  D_{\rho,3} =: \bar\kappa_2
  ,
}
whereas, by \eqref{eq:definition_of_kappa1_to_3} and
\eqref{eq:combined-early-remainder-definition}, $ \kappa_3 $ and
$ R_j^{ \mathrm{early} } $ do not depend on $ \eps $.
Since $ \kappa_1 \ge \eta $ and
$
  e^{ \kappa_1 \gamma_1 } e^{ - \kappa_1 t_n }
  =
  e^{ - \kappa_1 ( t_n - t_1 ) }
$,
the general estimate
\eqref{eq:combined-early-late-iterated} implies,
for every $ n \ge 2 $,
\bas{
\label{eq:combined-early-late-iterated-uniform-rate}
  \E\bigl[
    \1_{ \{ \tau \ge n \} } F( \theta_n )
  \bigr]
&
  \le
  \biggl[
    F( \theta_0 )^{ 1 / 2 }
    +
    \Bigl(
      \frac{ d L_f }{ 2 }
    \Bigr)^{ 1 / 2 }
    \gamma_1
  \biggr]^2
  e^{ - \eta (t_n - t_1) }
\\
&
\quad
  + \sum_{ j = 2 }^n
  \left[
    \kappa_2
    \Bigl(
      \frac1{ j - 1 } \vee ( 1 - \beta )
    \Bigr)^2
    \gamma_j
    +
    \kappa_3
    \gamma_j^2
    +
    R_j^{ \mathrm{early} }
  \right]
  e^{ - \eta (t_n - t_j) }
  .
} 
For $a\in(0,\infty)$, the monotonicity of $(\gamma_j)_{j\in\N}$ and
\eqref{eq:combined-main-uniform-step-bound} give
\bas{\label{eq:combined-numerical-time-kernel-bound}
\sum_{j=2}^n\gamma_j e^{-a(t_n-t_j)}
&\le e^{a\overline\gamma}\sum_{j=2}^n
\int_{t_{j-1}}^{t_j}e^{-a(t_n-s)}\,\dd s
\le\frac{e^{a\overline\gamma}}{a}.
}
Since $(\frac1{j-1}\vee(1-\beta))^2
\le(j-1)^{-2}+(1-\beta)^2$,
\eqref{eq:combined-main-step-smallness} and
\eqref{eq:combined-numerical-time-kernel-bound}, applied with
$a=\eta$, yield
\bas{\label{eq:combined-time-late-convolution-bound}
\sum_{j=2}^n
\Bigl(\frac1{j-1}\vee(1-\beta)\Bigr)^2\gamma_j
e^{-\eta(t_n-t_j)}&\le
(1-\beta)^2\sum_{j=2}^n\gamma_j e^{-\eta(t_n-t_j)}
+C_{\mathrm{dec}}\gamma_n
\sum_{j=2}^n\frac{e^{-\eta(t_n-t_j)/2}}{(j-1)^2}\\
&\le
\frac{e^{1/2}}{\eta}\,(1-\beta)^2
+C_{\mathrm{dec}}\gamma_n\sum_{j=2}^n\frac1{(j-1)^2}\\
&=
\frac{e^{1/2}}{\eta}\,(1-\beta)^2
+\frac{\pi^2}{6}C_{\mathrm{dec}}\gamma_n.
}
Similarly, \eqref{eq:combined-main-step-smallness} and
\eqref{eq:combined-numerical-time-kernel-bound}, now applied with $a=\eta/2$,
give
\bas{\label{eq:combined-time-quadratic-convolution-bound}
\sum_{j=2}^n\gamma_j^2e^{-\eta(t_n-t_j)}
&\le C_{\mathrm{dec}}\gamma_n
\sum_{j=2}^n\gamma_j e^{-\eta(t_n-t_j)/2}\le \frac{2C_{\mathrm{dec}}e^{1/4}}{\eta}\,\gamma_n.
}
Equations \eqref{eq:combined-early-remainder-definition} and
\eqref{eq:combined-main-uniform-step-bound} give
\bas{\label{eq:combined-time-early-pointwise-bound}
R_j^{\mathrm{early}}
\le d\left(2C_Xj^{1/2}
+\frac{L_f}{4\eta}j\right)
e^{-q(j-1)/\kappa_0}\gamma_j.
}
The constant
\bas{\label{eq:combined-time-early-constant}
C_{\mathrm{early}}^{\mathrm{time}}
=dC_{\mathrm{dec}}\sum_{j=2}^\infty
\left(2C_Xj^{1/2}
+\frac{L_f}{4\eta}j\right)
e^{-q(j-1)/\kappa_0}
}
is finite. Consequently,
\eqref{eq:combined-main-step-smallness} and
\eqref{eq:combined-time-early-pointwise-bound} imply
\bas{\label{eq:combined-time-early-convolution-bound}
\sum_{j=2}^nR_j^{\mathrm{early}}e^{-\eta(t_n-t_j)}
&\le dC_{\mathrm{dec}}\gamma_n\sum_{j=2}^n
\left(2C_Xj^{1/2}
+\frac{L_f}{4\eta}j\right)
e^{-q(j-1)/\kappa_0}e^{-\eta(t_n-t_j)/2}\le C_{\mathrm{early}}^{\mathrm{time}}\gamma_n.
}
For $n=1$, \eqref{eq:combined-early-late-numerical-time} follows directly
from \eqref{eq:combined-early-late-iterated}. For $n\ge2$, equations
\eqref{eq:combined-early-late-iterated-uniform-rate},
\eqref{eq:combined-main-uniform-kappa-two-bound},
\eqref{eq:combined-time-late-convolution-bound},
\eqref{eq:combined-time-quadratic-convolution-bound}, and
\eqref{eq:combined-time-early-constant}--\eqref{eq:combined-time-early-convolution-bound} prove
\eqref{eq:combined-early-late-numerical-time} with the choices
\bas{
\label{eq:def_of_C0_and_C1}
C_0
&=\frac{\pi^2}{6}\overline\kappa_2C_{\mathrm{dec}}
+\frac{2\kappa_3C_{\mathrm{dec}}e^{1/4}}{\eta}
+C_{\mathrm{early}}^{\mathrm{time}},
\qquad 
C_1
=\frac{\overline\kappa_2e^{1/2}}{\eta},
}
which do not depend on $\eps$ or $\beta$.
\end{proof}

\begin{rem}[Non-asymptotic error analysis with fully explicit error constants]
\label{rem:explicit_constants_II}
In \cref{thm:combined-early-late-numerical-time}
all constants, except of $ C_0 $ and $ C_1 $,
are explicitly specified and, actually, even $ C_0 $ and $ C_1 $
in \cref{thm:combined-early-late-numerical-time}
can be explicitly specified
(see \cref{eq:def_of_C0_and_C1} in the proof of
\cref{thm:combined-early-late-numerical-time})
and explicitly estimated from above.
This is precisely the subject of this remark.
More formally, combining
\cref{eq:combined-main-eta,eq:D-rho-p,eq:definition_of_kappa1_to_3,eq:combined-main-uniform-kappa-two-bound,eq:combined-time-early-constant,eq:def_of_C0_and_C1}
shows that
\bas{
  C_0
& =
  6^{ - 1 } \pi^2 \overline{\kappa}_2 C_{ \mathrm{dec} }
  +
  \eta^{ - 1 } 2 \kappa_3 C_{ \mathrm{dec} } e^{ 1 / 4 }
  +
  C_{ \mathrm{early} }^{ \mathrm{time} }
\\ &
  =
  6^{ - 1 } \pi^2 C_{ \mathrm{dec} }
  \biggl[
    \frac{ 8 \kappa_0^3 L_f C_X^6 C_{ \mathrm{Loj} } (C_X + 1) d }{ c^3 }
  \biggr]
  D_{ \rho, 3 }
  +
  \eta^{ - 1 } 2 \kappa_3 C_{ \mathrm{dec} }
  \exp\bigl(
    1 / 4
  \bigr)
\\ &
\quad
  +
  C_{ \mathrm{dec} }
  d
  \biggl[
    \sum_{ j = 2 }^{ \infty }
    \left(
      2 C_X j^{ 1 / 2 }
      +
      \frac{ L_f }{ 4 \eta }j
    \right)
    \exp\bigl(
      - q (j - 1) / \kappa_0
    \bigr)
  \biggr]
  .
}
Therefore, we obtain
\bas{
  C_0
&
  =
  6^{ - 1 } \pi^2 C_{ \mathrm{dec} }
  \biggl[
    \frac{ 4 \kappa_0^3 L_f C_X^6 d }{ c^3 \eta }
  \biggr]
  \biggl[
    \sum_{ r = 0 }^{ \infty }
    \frac{
      ( r + 1 )^3
    }{
      \exp( \rho r )
    }
  \biggr]
  +
  \biggl[
    \frac{ 2 \kappa_0 L_f C_X^2 d }{ \eta c }
  \biggr]
  \biggl[
    \sum_{ r = 0 }^{ \infty }
    \frac{
      ( r + 1 )
    }{
      \exp( \rho r )
    }
  \biggr]
  C_{ \mathrm{dec} }
  \exp\bigl(
    1 / 4
  \bigr)
\\ &
\quad
  +
  C_{ \mathrm{dec} }
  d
  \biggl[
    \sum_{ j = 2 }^{ \infty }
    \left(
      2 C_X j^{ 1 / 2 }
      +
      \frac{ L_f j }{ 4 \eta }
    \right)
    \exp\bigl(
      - q (j - 1) / \kappa_0
    \bigr)
  \biggr]
  .
}
Hence, we obtain
\bas{
\label{eq:C_0_rep}
  C_0
&
  =
  C_{ \mathrm{dec} } d
  \Biggl[
    \sum_{ r = 0 }^{ \infty }
    \frac{
      2^7 \pi^2
      c^{ - 3 } \eta^{ - 1 }
      L_f C_X^6
      ( r + 1 )^3
    }{
      3 ( 1 - \exp( - 1 / 2 ) )^3 \exp( \rho r )
    }
    +
    \sum_{ r = 0 }^{ \infty }
    \frac{
      8 \eta^{ - 1 } c^{ - 1 } L_f C_X^2 ( r + 1 )
    }{
      ( 1 - \exp( - 1 / 2 ) ) \exp( \rho r - 1 / 4 )
    }
\\ &
\quad
  +
    \sum_{ j = 2 }^{ \infty }
    \left(
      2 C_X j^{ 1 / 2 }
      +
      \frac{ L_f j }{ 4 \eta }
    \right)
    \exp\bigl(
      - q (j - 1) ( 1 - \exp( - 1 / 2 ) ) / 4
    \bigr)
  \Biggr]
  .
}
Using
$
  2^7 \pi^2 3^{ - 1 }( 1 - \exp( - 1 / 2 ) )^{ - 3 } \leq 2^7 ( 63 ) = 2^7 ( 2^6 - 1 ) = 2^{ 13 } - 2^7
$,
$ ( 1 - \exp( - 1 / 2 ) )^{ - 1 } \leq 4 = 2^{ 2 } $,
$j^{1/2}\le j$,
$
  [ \exp( - 1 / 4 ) ]^{ - 1 }
  \leq
  \exp( 1 / 2 ) \leq 2
$,
$
  ( 1 - \exp( - 1 / 2 ) ) / 4 \ge 2^{ - 4 }
$,
and the change of index
$r=j-1$ in the last series of \eqref{eq:C_0_rep}, we obtain
\bas{
  C_0
&
  \leq
  C_{ \mathrm{dec} } d
  ( L_f + C_X + 1 )^7
  ( \eta^{-1} + 1 )
  \Biggl[
    \sum_{ r = 0 }^{ \infty }
    \biggl(
      \frac{
        ( ( 2^{ 13 } - 2^7 ) c^{ - 3 } + 2^6 c^{ - 1 } )
        (r+1)^3
      }{
        \exp( \rho r )
      }
      +
      \frac{ 2^2 (r + 1) }{
        \exp( 2^{ - 4 } q r )
      }
    \biggr)
  \Biggr]
  .
}
Combining this with the fact that
$
  2^6 c^{ - 1 } + 2^2
  \leq
  ( 2^6 + 2^2 ) ( c^{ - 3 } + 1)
  \leq
  2^7 ( c^{ - 3 } + 1 )
$
and the fact that
for all $ r \in \N_0 $ it holds that $ ( r + 1 ) \leq ( r + 1 )^3 $
proves that
\bas{
\label{eq:C_0_bound_from_above_B}
  C_0
&
  \leq
  2^{ 13 }
  C_{ \mathrm{dec} } d
  ( L_f + C_X + 1 )^7
  ( \eta^{-1} + 1 )
  \Biggl[
    \sum_{ r = 0 }^{ \infty }
      \frac{
        ( c^{ - 3 } + 1 )
        (r+1)^3
      }{
        \exp( \min\{ \rho, 2^{ - 4 } q \} r )
      }
  \Biggr]
  .
}
In addition,
combining
\cref{eq:combined-main-eta,eq:D-rho-p,eq:definition_of_kappa1_to_3,eq:combined-main-uniform-kappa-two-bound,eq:def_of_C0_and_C1}
shows that
\bas{
  C_1
& =
  \overline{\kappa}_2 \eta^{ - 1 } e^{ 1 / 2 }
  =
  \bigl(
    c^{ - 3 }
    8 \kappa_0^3 d L_f C_X^6 C_{ \mathrm{Loj} } (C_X + 1)
    D_{\rho,3}
  \bigr)
  \eta^{ - 1 }
  \exp( 1 / 2 )
  =
  \bigl(
    c^{ - 3 }
    4 \kappa_0^3 d L_f C_X^6
    D_{\rho,3}
  \bigr)
  \eta^{ - 2 }
  \exp( 1 / 2 )
\\
& =
  \biggl[
    \frac{
      4 \kappa_0^3
      \exp( 1 / 2 )
      L_f C_X^6 d
    }{
      c^3 \eta^2
    }
  \biggr]
  \biggl[
    \sum_{ r = 0 }^{ \infty }
    (r + 1)^3 e^{ - \rho r }
  \biggr]
=
  \biggl[
    \frac{
      4 \kappa_0^3
      \exp( 1 / 2 )
      L_f C_X^6 d
    }{
      c^3 \eta^2
    }
  \biggr]
  \biggl[
    \sum_{ r = 0 }^{ \infty }
    (r + 1)^3 e^{ - \rho r }
  \biggr]
\\ &
=
  \biggl[
    \frac{
      4^4
      \exp( 1 / 2 )
      L_f C_X^6 d
    }{
      c^3 \eta^2 ( 1 - \exp( - 1 / 2 ) )^3
    }
  \biggr]
  \biggl[
    \sum_{ r = 0 }^{ \infty }
    \frac{ (r + 1)^3 }{ \exp( \rho r ) }
  \biggr]
=
  \biggl[
    \frac{
      2^8
      \exp( \rho + 1 / 2 )
      L_f C_X^6 d
    }{
      c^3 \eta^2 ( 1 - \exp( - 1 / 2 ) )^3
    }
  \biggr]
  \biggl[
    \sum_{ r = 1 }^{ \infty }
    \frac{ r^3 }{ \exp( \rho r ) }
  \biggr]
  .
}
Using
$
  \exp( 1 / 2 ) [ 1 - \exp( - 1 / 2 ) ]^{ - 3 }
  \leq
  2^{ 15 }
$
we hence obtain that
\bas{
\label{eq:C_1_rep}
  C_1
  =
  \biggl[
    \frac{
      2^8
      \exp( \rho + 1 / 2 )
      L_f C_X^6 d
    }{
      c^3 \eta^2 ( 1 - \exp( - 1 / 2 ) )^3
    }
  \biggr]
  \biggl[
    \sum_{ r = 0 }^{ \infty }
    \frac{ r^3 }{ \exp( \rho r ) }
  \biggr]
\leq
    \sum_{ r = 0 }^{ \infty }
    \frac{
      2^{ 13 }
      \eta^{ - 2 }
      c^{ - 3 }
      L_f C_X^6 d
      r^3
    }{
      \exp( \rho r - \rho )
    }
=
    \sum_{ r = 0 }^{ \infty }
    \frac{
      2^{ 13 }
      d
      L_f C_X^6
      \eta^{ - 2 }
      c^{ - 3 }
      (r+1)^3
    }{
      \exp( \rho r )
    }
  .
}
In \cref{eq:C_0_rep,eq:C_1_rep} we provide exact and explicit representations
of the error constants $ C_0 $ and $ C_1 $ in \cref{eq:combined-early-late-numerical-time} in \cref{thm:combined-early-late-numerical-time}
in terms of
the parameters
$ d, \rho, c, q, C_X, L_f, C_{\mathrm{dec}}, \eta $
in \cref{thm:combined-early-late-numerical-time}
(see \cref{eq:combined-main-eta})
and in \cref{eq:C_0_bound_from_above_B,eq:C_1_rep} we provide short upper bounds
for the error constants $ C_0 $ and $ C_1 $
in \cref{eq:combined-early-late-numerical-time} in \cref{thm:combined-early-late-numerical-time}
in terms of the parameters
$ d, \rho, c, q, C_X, L_f, C_{\mathrm{dec}}, \eta $
in \cref{thm:combined-early-late-numerical-time}
(see \cref{eq:combined-main-eta}).
\end{rem}

\section{Criteria and examples for regular minibatch innovations}
\label{sec:regularity-examples}

We conclude with sufficient conditions under which the regularity assumption in
\cref{def:regular} can be verified for minibatch stochastic
gradients. We first reduce regularity to a uniform small-ball estimate. We
then establish such an estimate for strongly convex coordinate gradients and
apply the resulting criterion to a risk-sensitive objective. Finally, we
treat nonlinear regression by a separate argument based on conditional
density bounds and negative moments.

\subsection{A small-ball criterion for regularity}
\label{subsec:regularity-small-ball}

Let $M\in\N$, let $U_1,\dots,U_M$ be independent copies of an
$\R^m$-valued random variable $U$, and set
$\mathbf U=(U_1,\dots,U_M)$. For a loss
$ \ell \colon \R^d \times \R^m \to \R $ that is differentiable in its first argument, the
natural minibatch innovation is defined, for every $\theta\in\R^d$ and
$i\in\{1,\dots,d\}$, by
\bas{\label{eq:minibatch-innovation}
X_M^{(i)}(\theta,\mathbf U)
=-\frac1M\sum_{r=1}^M
\frac{\partial\ell}{\partial\theta_i}(\theta,U_r).
}

The following elementary small-ball criterion is convenient.

\begin{lemma}\label{le:regularity-small-ball}
Let $ a \in (0,\infty) $, $ p \in (0,1] $, let $ \cV \subseteq \R^d $,
and let $ ( X, U ) $ be an innovation which satisfies for every $ \theta \in \cV $, $ i \in \{ 1, \dots, d \} $ that
\bas{\label{eq:regularity-small-ball-assumption}
\P\bigl(|X^{(i)}(\theta,U)|\ge a\bigr)\ge p.
}
Then $(X,U)$ is $(c,q)$-regular on $\cV$ with
\bas{\label{eq:regularity-small-ball-constants}
c=p(1-e^{-1})a^2
\qquad
  \text{and}
\qquad
q=p(1-e^{-1}).
}
\end{lemma}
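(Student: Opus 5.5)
Fix $\theta\in\cV$ and $i\in\{1,\dots,d\}$, and write $Y=X^{(i)}(\theta,U)^2\ge0$ and $E=\{|X^{(i)}(\theta,U)|\ge a\}$, so that $\P(E)\ge p$ by \eqref{eq:regularity-small-ball-assumption}. The plan is to bound the Laplace transform $\E[e^{-\lambda Y}]$ using only the mass that $Y$ places on $[a^2,\infty)$, and then to compare with $(c\lambda)\wedge q$. On $E$ we have $e^{-\lambda Y}\le e^{-\lambda a^2}$, and on the complement we use the trivial bound $e^{-\lambda Y}\le 1$. This gives
\begin{equation*}
\E[e^{-\lambda Y}]\le 1-\P(E)\bigl(1-e^{-\lambda a^2}\bigr)\le 1-p\bigl(1-e^{-\lambda a^2}\bigr).
\end{equation*}
Since $\ln(1-x)\le -x$ for $x\in[0,1)$, it follows that $-\ln\E[e^{-\lambda Y}]\ge p(1-e^{-\lambda a^2})$. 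Here $1-p(1-e^{-\lambda a^2})\ge 1-p\ge0$. If $p=1$ and the bound is zero, the inequality is trivially true with the value $+\infty$, or we simply note $x<1$ for finite $\lambda$.

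It then remains to prove the scalar inequality $p(1-e^{-s})\ge \bigl(p(1-e^{-1})s\bigr)\wedge\bigl(p(1-e^{-1})\bigr)$ for $s=\lambda a^2\ge0$, which is exactly $(c\lambda)\wedge q$ with the constants in \eqref{eq:regularity-small-ball-constants}. Dividing by $p$, we need $1-e^{-s}\ge(1-e^{-1})\min\{s,1\}$.

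We split into two cases. For $s\ge1$, monotonicity of $s\mapsto1-e^{-s}$ gives $1-e^{-s}\ge1-e^{-1}$. For $s\in[0,1]$, concavity of $g(s)=1-e^{-s}$ together with $g(0)=0$ gives $g(s)\ge s\,g(1)=(1-e^{-1})s$, since the graph of a concave function lies above its chord.

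This concludes the argument, and no step is a real obstacle. The only point needing care is the concavity/chord comparison on $[0,1]$. The edge cases are covered by the fact that $\lambda=0$ yields equality $0\ge0$.
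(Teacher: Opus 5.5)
Your proposal is correct and follows the paper's proof essentially step for step: the bound $\E[e^{-\lambda X^{(i)}(\theta,U)^2}]\le 1-p(1-e^{-\lambda a^2})$, then $-\ln(1-x)\ge x$, then $1-e^{-x}\ge(1-e^{-1})(x\wedge 1)$. The paper simply asserts this last elementary inequality, whereas you justify it via monotonicity for $s\ge1$ and the chord argument for the concave function $1-e^{-s}$ on $[0,1]$.
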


\begin{proof}
Fix $\theta\in\cV$, $i\in\{1,\dots,d\}$, and $\lambda\in[0,\infty)$.
Assumption~\eqref{eq:regularity-small-ball-assumption} gives
\bas{\label{eq:regularity-small-ball-laplace-first}
\E\bigl[e^{-\lambda X^{(i)}(\theta,U)^2}\bigr]
&\le 1-p+pe^{-\lambda a^2}
=1-p(1-e^{-\lambda a^2}).
}
Since $-\ln(1-x)\ge x$ for $x\in[0,1)$ and
$1-e^{-x}\ge(1-e^{-1})(x\wedge1)$ for $x\in[0,\infty)$,
\eqref{eq:regularity-small-ball-laplace-first} yields
\bas{
-\ln\E\bigl[e^{-\lambda X^{(i)}(\theta,U)^2}\bigr]
&\ge p(1-e^{-\lambda a^2})\\
&\ge p(1-e^{-1})\bigl((\lambda a^2)\wedge1\bigr)
=(c\lambda)\wedge q,
}
where $c$ and $q$ are given by
\eqref{eq:regularity-small-ball-constants}.
\end{proof}

\subsection{Strong convexity in the data variable}
\label{subsec:strongly-convex-data}

Our first application uses strong convexity in the data variable. The next
estimate combines this property with an upper density bound and will be used
to prove \cref{prop:strongly-convex-coordinate-gradients}.
The bounded-support assumption provides a uniform bound on the volume of
boundary layers of convex sublevel sets. This is needed because an upper
density bound alone controls small balls but does not uniformly control
boundary layers at arbitrarily high levels.

\begin{lemma}[Anti-concentration of strongly convex functions]
\label{le:strongly-convex-anti-concentration}
Let $\mu,B,R\in(0,\infty)$, and suppose that $U$ has a Lebesgue density $g$
on $\R^m$ such that
\bas{\label{eq:strongly-convex-data-density-support}
\|g\|_{L^\infty}\le B
\qquad
  \text{and}
\qquad
\P\bigl( | U | < R \bigr)=1.
}
Let $\varphi:\R^m\to\R$ be $\mu$-strongly convex. Let $\omega_m$ denote
the Lebesgue volume of the
Euclidean unit ball in $\R^m$. For every $\varepsilon\in(0,\infty)$, set
\bas{
\label{eq:strongly-convex-window-radius}
  \delta_\varepsilon
  =
  2 \sqrt{ \tfrac{ \varepsilon }{ \mu } } .
}
Then, for every $ a \in \R $, one has
\bas{\label{eq:strongly-convex-anti-concentration}
  \P\bigl(
    | \varphi(U) - a |
    \le \varepsilon
  \bigr)
\le
  B \omega_m
  \bigl(
    \delta_\varepsilon^m
    + ( R + 2 \delta_\varepsilon )^m
    - ( R + \delta_\varepsilon )^m
  \bigr)
  .
}
\end{lemma}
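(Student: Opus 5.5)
We bound the probability by the Lebesgue measure of the preimage $S=\{x\in\R^m\colon |x|<R,\ |\varphi(x)-a|\le\varepsilon\}$: since $g\le B$ and $U$ lives in the open ball $B_R$ of radius $R$ centred at the origin, we have $\P(|\varphi(U)-a|\le\varepsilon)\le B\,\lambda(S)$. Let $x^*$ be the unique minimiser of $\varphi$ and $m^*=\varphi(x^*)$. The sets $K_t=\{\varphi\le t\}$ are convex, and $S\subseteq (K_{a+\varepsilon}\setminus K_{a-\varepsilon})\cap B_R$. We split into two cases according to whether $a-\varepsilon$ is close to the minimum value or not.

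\emph{Case $a+\varepsilon\le m^*+2\varepsilon$, i.e.\ $a-\varepsilon\le m^*$.} Strong convexity gives $\varphi(x)\ge m^*+\frac{\mu}{2}|x-x^*|^2$. Hence $K_{a+\varepsilon}\subseteq\{|x-x^*|^2\le 2(a+\varepsilon-m^*)/\mu\}\subseteq\{|x-x^*|\le 2\sqrt{\varepsilon/\mu}\}$, since $a+\varepsilon-m^*\le 2\varepsilon$. This ball has radius $\delta_\varepsilon$, so its volume is $\omega_m\delta_\varepsilon^m$, which is the first term of the bound.

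\emph{Case $a-\varepsilon>m^*$.} Set $K=K_{a-\varepsilon}$, which is convex with $x^*$ in its interior. The key geometric claim is that $K_{a+\varepsilon}\subseteq K+\delta_\varepsilon \overline{B}_1$, i.e.\ every point in the layer lies within distance $\delta_\varepsilon$ of $K$. To see this, take $x$ with $a-\varepsilon<\varphi(x)\le a+\varepsilon$ and consider the segment $s\mapsto x^*+s(x-x^*)$. The restriction $\phi(s)=\varphi(x^*+s(x-x^*))$ is $\mu|x-x^*|^2$-strongly convex with minimum at $s=0$. Let $s_0\in(0,1)$ satisfy $\phi(s_0)=a-\varepsilon$. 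Strong convexity together with $\phi'(s_0)\ge0$ (and in fact growth from the minimum) yields $\phi(1)-\phi(s_0)\ge\frac{\mu}{2}|x-x^*|^2(1-s_0)^2$. Therefore $2\varepsilon\ge\frac{\mu}{2}\,\mathrm{dist}(x,K)^2$, which gives $\mathrm{dist}(x,K)\le 2\sqrt{\varepsilon/\mu}=\delta_\varepsilon$. Strictly one needs the inequality $\phi(1)\ge\phi(s_0)+\phi'(s_0)(1-s_0)+\frac{\mu}{2}|x-x^*|^2(1-s_0)^2$ together with $\phi'(s_0)\ge0$, which holds because $\phi$ is convex and minimised at $0$. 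So $S\subseteq\bigl((K+\delta_\varepsilon \overline B_1)\setminus K\bigr)\cap B_R$.

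The main obstacle is bounding the volume of this boundary layer uniformly, without any control on the size of $K$; this is where the bounded support enters. Replace $K$ by the convex set $K'=K\cap \overline B_{R+\delta_\varepsilon}$. Any point of the layer lying in $B_R$ has its nearest point of $K$ within $\delta_\varepsilon$, hence inside $\overline B_{R+\delta_\varepsilon}$, so that point lies in $(K'+\delta_\varepsilon\overline B_1)\setminus K'$. By the Steiner formula, the volume $\lambda(K'+\delta\overline B_1)-\lambda(K')$ is a polynomial in $\delta$ with nonnegative coefficients given by the intrinsic volumes. Those intrinsic volumes are monotone under inclusion of convex sets, so the layer volume is at most the corresponding quantity for the ball $\overline B_{R+\delta_\varepsilon}$, namely $\omega_m\bigl((R+2\delta_\varepsilon)^m-(R+\delta_\varepsilon)^m\bigr)$. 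If one prefers to avoid quermassintegrals, the alternative is to bound the layer volume by $\delta_\varepsilon$ times the surface area of $K'$ and use monotonicity of the surface area of convex bodies, which gives the same ball comparison after integrating over $\delta$. Adding the two cases, each of which is dominated by the stated sum, gives the bound after multiplying by $B$.
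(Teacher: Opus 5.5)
Your proof is correct and follows the paper's route: the same case split at the minimum value, the ball $B(x^*,\delta_\varepsilon)$ in the first case, the inclusion of the event in the $\delta_\varepsilon$-boundary layer of $K=\{\varphi\le a-\varepsilon\}$, the localization to $K\cap\overline B_{R+\delta_\varepsilon}$, and the Steiner-formula comparison with a ball via monotonicity of intrinsic volumes; the only variation is that you derive $\dist(x,K)\le\delta_\varepsilon$ along the ray from the minimizer rather than via the metric projection onto $K$, which is equally valid (for nondifferentiable $\varphi$, read $\phi'(s_0)$ as a one-sided derivative or subgradient, which is positive because it dominates the chord slope $(\phi(s_0)-\phi(0))/s_0$). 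Two small points to tidy: your inclusions $S\subseteq(K_{a+\varepsilon}\setminus K_{a-\varepsilon})\cap B_R$ and $S\subseteq((K+\delta_\varepsilon\overline B_1)\setminus K)\cap B_R$ hold only after removing the level set $\{\varphi=a-\varepsilon\}$, which is contained in $\partial K$ and hence Lebesgue-null (the paper notes this explicitly); and in your optional surface-area aside the layer volume is the integral over $s\in[0,\delta_\varepsilon]$ of the surface area of $K'+s\overline B_1$, whereas $\delta_\varepsilon$ times the surface area of $K'$ is only a lower bound for it.
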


\begin{proof}
Since a finite-valued strongly convex function on $\R^m$ is continuous and
coercive, $\varphi$ has a unique minimizer $u_*$. Set
$\varphi_*=\varphi(u_*)$. First suppose that
$a-\varepsilon<\varphi_*$. If the event in
\eqref{eq:strongly-convex-anti-concentration} is nonempty, then
$a+\varepsilon<\varphi_*+2\varepsilon$. Strong convexity gives that for all $u\in\R^m$
\bas{\label{eq:strongly-convex-minimum-growth}
\varphi(u)
\ge \varphi_*+\frac{\mu}{2}|u-u_*|^2.
}
It follows from \eqref{eq:strongly-convex-window-radius} and
\eqref{eq:strongly-convex-minimum-growth} that
\bas{\label{eq:strongly-convex-near-minimum-inclusion}
\bigl\{u\in\R^m:|\varphi(u)-a|\le\varepsilon\bigr\}=\bigl\{u\in\R^m:\varphi(u)\le \varphi_*+2\varepsilon\bigr\}
\subseteq B(u_*,\delta_\varepsilon).
}
Therefore, \eqref{eq:strongly-convex-data-density-support} and
\eqref{eq:strongly-convex-near-minimum-inclusion} give
\bas{
\P\bigl(|\varphi(U)-a|\le\varepsilon\bigr)
&\le \P\bigl(U\in B(u_*,\delta_\varepsilon)\bigr)
\le B\omega_m\delta_\varepsilon^m.
}

Now suppose that $t=a-\varepsilon\ge\varphi_*$, and set
$K_t=\{u\in\R^m:\varphi(u)\le t\}$. The set $K_t$ is nonempty, closed, and
convex. Fix $u\in\R^m\setminus K_t$ and let $v$ be its Euclidean projection
onto $K_t$. Then $\varphi(v)=t$. For every $\alpha\in(0,1)$, the point
$(1-\alpha)v+\alpha u$ lies outside $K_t$, and strong convexity yields
\bas{\label{eq:strongly-convex-projection-interpolation}
t
<\varphi\bigl((1-\alpha)v+\alpha u\bigr)
&\le (1-\alpha)\varphi(v)+\alpha\varphi(u)
-\frac{\mu}{2}\alpha(1-\alpha)|u-v|^2.
}
Using $\varphi(v)=t$ in \eqref{eq:strongly-convex-projection-interpolation},
subtracting $t$, and dividing by $\alpha>0$ show that, for every
$\alpha\in(0,1)$,
\bas{\label{eq:strongly-convex-projection-growth-alpha}
\varphi(u)-t
>\frac{\mu}{2}(1-\alpha)|u-v|^2.
}
Letting $\alpha$ decrease to zero in
\eqref{eq:strongly-convex-projection-growth-alpha} and recalling that
$|u-v|=\dist(u,K_t)$ prove that, for every $u\in\R^m\setminus K_t$,
\bas{\label{eq:strongly-convex-projection-growth}
\varphi(u)-t
\ge\frac{\mu}{2}\dist(u,K_t)^2.
}
Consequently, \eqref{eq:strongly-convex-window-radius} and
\eqref{eq:strongly-convex-projection-growth} imply, up to a Lebesgue-null
boundary,
\bas{\label{eq:strongly-convex-boundary-layer-inclusion}
\bigl\{u\in\R^m:|\varphi(u)-a|\le\varepsilon\bigr\}
\subseteq
(K_t+\delta_\varepsilon B(0,1))\setminus K_t.
}
Set
$L_t=K_t\cap B(0,R+\delta_\varepsilon)$. By
\eqref{eq:strongly-convex-data-density-support}, only this bounded part of
$K_t$ can contribute to the event in
\eqref{eq:strongly-convex-boundary-layer-inclusion}; more precisely,
\bas{\label{eq:strongly-convex-localized-boundary-layer}
\bigl((K_t+\delta_\varepsilon B(0,1))\setminus K_t\bigr)
\cap B(0,R)
\subseteq
(L_t+\delta_\varepsilon B(0,1))\setminus L_t.
}
Since $L_t\subseteq B(0,R+\delta_\varepsilon)$, the Steiner formula and
monotonicity of the intrinsic volumes, together with
\eqref{eq:strongly-convex-localized-boundary-layer}, yield
\bas{\label{eq:strongly-convex-boundary-layer-volume}
\left|
\bigl((K_t+\delta_\varepsilon B(0,1))\setminus K_t\bigr)
\cap B(0,R)
\right|
\le
\omega_m\left(
(R+2\delta_\varepsilon)^m
-(R+\delta_\varepsilon)^m
\right).
}
Equations \eqref{eq:strongly-convex-data-density-support},
\eqref{eq:strongly-convex-boundary-layer-inclusion}, and
\eqref{eq:strongly-convex-boundary-layer-volume} prove
\eqref{eq:strongly-convex-anti-concentration}.
\end{proof}

\begin{prop}[Strongly convex coordinate gradients]
\label{prop:strongly-convex-coordinate-gradients}
Let $ m, d, M \in \N $, $ \mu, B, R \in (0,\infty) $, and let
$\cV\subseteq\R^d$. Let $U$ be an $\R^m$-valued random variable with a
Lebesgue density $g$ satisfying
\bas{\label{eq:strongly-convex-proposition-density-support}
  \|g\|_{L^\infty}\le B,
  \qquad
  \text{and}
  \qquad
  \P\bigl( | U | < R \bigr) = 1 .
}
Let $U_1,\dots,U_M$ be independent copies of $U$, and set
$\mathbf U=(U_1,\dots,U_M)$. Let
$\ell:\R^d\times\R^m\to\R$ be differentiable in its first argument. For
every $ \theta \in \cV $, $ i \in \{ 1, \dots, d \} $, define
\bas{\label{eq:strongly-convex-coordinate-gradient}
  \varphi_{\theta,i} \colon \R^m \to \R,
\qquad
  \varphi_{\theta,i}( u )
  =
  \frac{\partial\ell}{\partial\theta_i}(\theta,u),
}
and assume that $\varphi_{\theta,i}$ is $\mu$-strongly convex.
For every $ \theta \in \cV $, $ i \in \{ 1, \dots, d \} $, define
\bas{\label{eq:strongly-convex-proposition-minibatch-innovation}
X_M^{(i)}(\theta,\mathbf U)
=-\frac1M\sum_{r=1}^M
\frac{\partial\ell}{\partial\theta_i}(\theta,U_r).
}
Let $\omega_m$ denote the Lebesgue volume of the Euclidean unit ball in
$\R^m$, and set
\bas{\label{eq:strongly-convex-boundary-constant}
C_m=1+m3^{m-1}
}
and
\bas{\label{eq:strongly-convex-window-threshold}
s_0
=\mu\min\left\{
\frac{R^2}{4},
\frac{1}{
16B^2\omega_m^2C_m^2R^{2m-2}}
\right\}.
}
Then the minibatch innovation $(X_M,\mathbf U)$ is $(c_M,q)$-regular on
$\cV$ with
\bas{\label{eq:strongly-convex-minibatch-regularity-constants}
c_M=\frac{(1-e^{-1})s_0^2}{2M^2}
\qquad
  \text{and}
\qquad
q=\frac{1-e^{-1}}2.
}
If, in addition, the coordinate derivatives of $\ell$ are uniformly bounded
on $ \cV \times \{ u \in \R^m \colon | u | < R \} $,
then assumption~\eqref{eq:combined-main-boundedness} in
\cref{thm:combined-early-late-numerical-time} is satisfied for $X_M$.
\end{prop}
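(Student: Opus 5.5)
The plan is to reduce the claim to the small-ball criterion in \cref{le:regularity-small-ball} with $p=\tfrac12$ and $a=s_0/M$. That lemma then gives $(c,q)$-regularity with
\[
  c=\tfrac12(1-e^{-1})\,s_0^2/M^2=c_M
  \qquad\text{and}\qquad
  q=\tfrac12(1-e^{-1}),
\]
which are exactly the constants in \eqref{eq:strongly-convex-minibatch-regularity-constants}. It therefore suffices to show, for every $\theta\in\cV$ and $i\in\{1,\dots,d\}$, that
\[
  \P\bigl(|X_M^{(i)}(\theta,\mathbf U)|<s_0/M\bigr)\le\tfrac12 .
\]

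To prove this, fix $\theta\in\cV$ and $i$, and write $\varphi=\varphi_{\theta,i}$ and $S=\sum_{r=2}^M\varphi(U_r)$. The event $\{|X_M^{(i)}|<s_0/M\}$ equals $\{|\varphi(U_1)+S|<s_0\}$. Since $S$ is independent of $U_1$, Fubini's theorem (conditioning on $(U_2,\dots,U_M)$) reduces the task to a deterministic-shift bound: for every $a\in\R$,
\[
  \P\bigl(|\varphi(U_1)-a|\le s_0\bigr)\le\tfrac12 .
\]
We obtain this from \cref{le:strongly-convex-anti-concentration} with $\varepsilon=s_0$. Its hypotheses hold because $U_1$ has the same law as $U$ and $\varphi$ is $\mu$-strongly convex.

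The only real work is the elementary estimate of the right-hand side of \eqref{eq:strongly-convex-anti-concentration}. Put $\delta=\delta_{s_0}=2\sqrt{s_0/\mu}$.
\begin{itemize}
\item The choice $s_0\le\mu R^2/4$ gives $\delta\le R$.
\item Hence $\delta^m\le\delta R^{m-1}$.
\item By the mean value theorem, $(R+2\delta)^m-(R+\delta)^m\le m\delta(R+2\delta)^{m-1}\le m3^{m-1}R^{m-1}\delta$.
\end{itemize}
Combining these, the bound is at most $B\omega_m C_mR^{m-1}\delta=2B\omega_mC_mR^{m-1}\sqrt{s_0/\mu}$. The second entry of the minimum defining $s_0$ gives $\sqrt{s_0/\mu}\le(4B\omega_mC_mR^{m-1})^{-1}$, so the bound is at most $\tfrac12$, as required. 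I expect this bookkeeping, ensuring the constants in \eqref{eq:strongly-convex-window-threshold} close exactly to $\tfrac12$, to be the only delicate point; the measurability needed for conditioning is routine.

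Finally, the boundedness addendum is immediate. If $|\partial\ell/\partial\theta_i|\le C$ on $\cV\times\{|u|<R\}$, then, since $|U_r|<R$ almost surely, $|X_M^{(i)}(\theta,\mathbf U)|\le\frac1M\sum_r C=C$ almost surely. Thus \eqref{eq:combined-main-boundedness} holds with $C_X=C$.
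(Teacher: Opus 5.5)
Your proposal is correct and follows the paper's proof essentially step for step. The paper likewise conditions on $U_2,\dots,U_M$ to reduce to a shifted window of width $2s_0$ for $\varphi_{\theta,i}(U_1)$, bounds its probability by $2B\omega_m C_m R^{m-1}\sqrt{s_0/\mu}\le\frac12$ using \cref{le:strongly-convex-anti-concentration} together with the same estimate $\delta^m+(R+2\delta)^m-(R+\delta)^m\le C_mR^{m-1}\delta$, and then applies \cref{le:regularity-small-ball} with $a=s_0/M$ and $p=\frac12$. Your use of the strict event $\{|X_M^{(i)}|<s_0/M\}$ in place of the paper's non-strict one is an immaterial difference.
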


\begin{proof}
For every $s\in(0,s_0]$, set
$\delta_s=2\sqrt{\frac{s}{\mu}}$.
Equation~\eqref{eq:strongly-convex-window-threshold} then gives
$\delta_s\le R$. Hence
\bas{\label{eq:strongly-convex-boundary-simplification}
\delta_s^m
+(R+2\delta_s)^m-(R+\delta_s)^m
&\le R^{m-1}\delta_s
+m(R+2\delta_s)^{m-1}\delta_s\\
&\le C_mR^{m-1}\delta_s.
}
By \eqref{eq:strongly-convex-proposition-density-support},
\cref{le:strongly-convex-anti-concentration} applies to every
$\varphi_{\theta,i}$. Equations
\eqref{eq:strongly-convex-anti-concentration},
\eqref{eq:strongly-convex-window-threshold}, and
\eqref{eq:strongly-convex-boundary-simplification} give
\bas{\label{eq:strongly-convex-coordinate-anti-concentration}
\sup_{\substack{\theta\in\cV,\ 1\le i\le d\\a\in\R}}
\P\bigl(
|\varphi_{\theta,i}(U)-a|\le s_0
\bigr)
&\le B\omega_m C_mR^{m-1}\delta_{s_0}=2B\omega_m C_mR^{m-1}\sqrt{\frac{s_0}{\mu}}\le\frac12.
}

Fix $\theta\in\cV$ and $i\in\{1,\dots,d\}$. By
\eqref{eq:strongly-convex-proposition-minibatch-innovation} and
\eqref{eq:strongly-convex-coordinate-gradient}, conditional on
$U_2,\dots,U_M$, the event
\bas{\label{eq:strongly-convex-minibatch-small-event}
\left\{
\left|X_M^{(i)}(\theta,\mathbf U)\right|
\le\frac{s_0}{M}
\right\}
}
is an interval event of width $2s_0$ for
$\varphi_{\theta,i}(U_1)$, whose center may depend on
$U_2,\dots,U_M$. Therefore,
\eqref{eq:strongly-convex-coordinate-anti-concentration} and
\eqref{eq:strongly-convex-minibatch-small-event} imply
\bas{\label{eq:strongly-convex-minibatch-small-ball}
\P\left(
\left|X_M^{(i)}(\theta,\mathbf U)\right|
\ge\frac{s_0}{M}
\right)
\ge\frac12.
}
\cref{le:regularity-small-ball}, applied with
$a=s_0/M$ and $p=1/2$, proves the asserted regularity constants. The
boundedness assertion follows directly from
\eqref{eq:strongly-convex-proposition-minibatch-innovation}.
\end{proof}

The preceding criterion applies naturally to risk-sensitive objectives in
which the parameter changes the exponential weighting of a random energy.
Unlike feature-mean estimation, the resulting optimizer depends on the
Laplace transform of the energy and hence on more than one fixed moment.

\begin{exa}[Risk-sensitive exponential tilting]
\label{exa:risk-sensitive-exponential-tilting}
Let $\bar\theta,\mu,B,R\in(0,\infty)$, assume
\eqref{eq:strongly-convex-data-density-support}, and let
$\cV\subseteq[0,\bar\theta]^d$ be nonempty and compact. Let
$ G \colon \R^d \to \R $ be continuously differentiable. For every
$i\in\{1,\dots,d\}$, let $\varphi_i:\R^m\to[0,\infty)$ be twice
continuously differentiable and $\mu$-strongly convex. Consider the
risk-sensitive loss
\bas{\label{eq:risk-sensitive-loss}
\ell_{\mathrm{risk}}(\theta,u)
=G(\theta)+\sum_{i=1}^d e^{\theta_i\varphi_i(u)}.
}
For every $\theta\in\cV$, $i\in\{1,\dots,d\}$, $u\in\R^m$, one has
\bas{\label{eq:risk-sensitive-coordinate-gradient}
\frac{\partial\ell_{\mathrm{risk}}}{\partial\theta_i}(\theta,u)
=\frac{\partial G}{\partial\theta_i}(\theta)
+\varphi_i(u)e^{\theta_i\varphi_i(u)}.
}
For symmetric matrices $A,B\in\R^{m\times m}$, write $A\succeq B$ if
$A-B$ is positive semidefinite.
Moreover, for every $\theta\in\cV$, $i\in\{1,\dots,d\}$, $ u \in \R^m $, one has
\bas{\label{eq:risk-sensitive-data-hessian}
\Hess_u\!\left(\varphi_i(u)e^{\theta_i\varphi_i(u)}\right)
&=e^{\theta_i\varphi_i(u)}
\bigl(1+\theta_i\varphi_i(u)\bigr)\Hess\varphi_i(u)\\
&\quad+
\theta_i e^{\theta_i\varphi_i(u)}
\bigl(2+\theta_i\varphi_i(u)\bigr)
\grad\varphi_i(u)\grad\varphi_i(u)^\mathsf{T}\\
&\succeq \mu I_m.
}
Indeed, $\theta_i,\varphi_i(u)\ge0$ and
$\Hess\varphi_i(u)\succeq\mu I_m$. Equations
\eqref{eq:risk-sensitive-coordinate-gradient} and
\eqref{eq:risk-sensitive-data-hessian} show that every coordinate gradient
of \eqref{eq:risk-sensitive-loss} is $\mu$-strongly convex as a function of
$u$. Therefore, \cref{prop:strongly-convex-coordinate-gradients}
applies. In particular, with $s_0$ defined in
\eqref{eq:strongly-convex-window-threshold}, the minibatch innovation driven
by \eqref{eq:risk-sensitive-loss} is $(c_M,q)$-regular on $\cV$, where
$c_M$ and $q$ are given in
\eqref{eq:strongly-convex-minibatch-regularity-constants}.

The continuity assumptions and the compactness of
$\cV\times B(0,R)$ also imply that the coordinate gradients in
\eqref{eq:risk-sensitive-coordinate-gradient} are uniformly bounded on
this set. Hence assumption~\eqref{eq:combined-main-boundedness} in
\cref{thm:combined-early-late-numerical-time} is satisfied by the resulting
minibatch innovation.

For every $\theta\in\cV$, the associated population objective is given by
\bas{\label{eq:risk-sensitive-population-objective}
F_{\mathrm{risk}}(\theta)
=G(\theta)+\sum_{i=1}^d
\E\!\left[e^{\theta_i\varphi_i(U)}\right].
}
The gradient of $F_{\mathrm{risk}}$ is the negative of the mean innovation.
For example, if
$G(\theta)=-\sum_{i=1}^d b_i\theta_i$ for constants
$b_1,\dots,b_d\in\R$, every interior critical point satisfies, for every
$i\in\{1,\dots,d\}$,
\bas{\label{eq:risk-sensitive-tilting-equation}
\E\!\left[
\varphi_i(U)e^{\theta_i\varphi_i(U)}
\right]=b_i.
}
Thus the optimized parameters are exponential-tilting parameters determined
by the Laplace transforms of the random energies $\varphi_i(U)$.
\end{exa}

\subsection{Nonlinear regression}
\label{subsec:nonlinear-regression}

We finally consider nonlinear regression with vector-valued responses. Let
$k\in\N$, let $\mathcal Z\subseteq\R^k$ be a Borel set, let $U=(Z,Y)$ take
values in $\mathcal Z\times\R^m$, and let
$(\Gamma^\theta \colon \theta\in\R^d)$ be a family of measurable maps from
$\mathcal Z$ to $\R^m$ that is differentiable with respect to $\theta$.
For every $\theta\in\R^d$, $z\in\mathcal Z$, $y\in\R^m$, and
$i\in\{1,\dots,d\}$, the quadratic loss and its associated coordinate
innovation are given by
\bas{\label{eq:nonlinear-regression-loss-innovation}
\ell_{\mathrm{reg}}(\theta,(z,y))
&=\frac12\left|\Gamma^\theta(z)-y\right|^2,\\
X^{(i)}(\theta,(z,y))
&=-\left\langle
\frac{\partial\Gamma^\theta}{\partial\theta_i}(z),
\Gamma^\theta(z)-y
\right\rangle.
}
The following result first bounds negative moments of the scalar residual in
the direction selected by the derivative of the regression function. A
reverse H\"older inequality turns this into a uniform positive lower bound
for the first absolute moment. Compactness provides a uniform
upper bound for the innovation, and the two moment bounds then yield
regularity.

\begin{prop}[Compactly supported nonlinear regression]
\label{prop:nonlinear-regression-regularity}
Let $\cV\subseteq\R^d$ be nonempty and compact,
let $ r \in ( 0, \nicefrac{ 1 }{ 2 } ) $, $ \delta \in (0, \infty ) $,
and suppose that there exist nonempty compact sets
$ K_Z \subseteq \mathcal Z $ and $ K_Y\subseteq \R^m $ and a constant
$ B \in (0,\infty) $ with the following properties:
\begin{enumerate}[label=(\roman*)]

\item
The random variables are compactly supported in the sense that
\bas{\label{eq:nonlinear-regression-compact-support}
  \P\bigl( Z \in K_Z , \, Y \in K_Y \bigr) = 1 .
}

\item
For $ \P_Z $-almost every $ z \in K_Z $, the conditional law of $ Y $
given $ Z = z $ has a Lebesgue density~$ p_z $ satisfying
\bas{
\label{eq:nonlinear-regression-conditional-density}
  \| p_z \|_{ L^\infty } \le B .
}

\item
The map $ ( \theta, z ) \mapsto \Gamma^{ \theta }( z ) $ and, for every
$ i \in \{ 1, \dots, d \} $,
the map
$
  ( \theta, z ) \mapsto\frac{\partial\Gamma^\theta}{\partial\theta_i}( z )
$
are continuous on $ \cV \times K_Z $.

\item
The coordinate derivatives satisfy the uniform moment lower bound
\bas{
\label{eq:nonlinear-regression-derivative-moment}
D_r
=
\inf_{ \substack{ \theta \in \cV } }
\inf_{
  1 \le i \le d
}
\E\biggl[
\Bigl|
\frac{\partial\Gamma^\theta}{\partial\theta_i}(Z)
\Bigr|^{r}
\biggr]
\ge \delta.
}
\end{enumerate}
Choose $R\in(0,\infty)$ such that $K_Y\subseteq B(0,R)$, let
$\omega_{m-1}$ denote the volume of the unit ball in $\R^{m-1}$, use the
convention $\omega_0=1$, and set
\bas{\label{eq:nonlinear-regression-regularity-data-constants}
\mu_r
&=\frac{\delta^{1/r}}{2B\omega_{m-1}R^{m-1}}
\left(\frac{1-2r}{1-r}\right)^{(1-r)/r},\\
C_X
&=\max_{\substack{\theta\in\cV,\ z\in K_Z,\ y\in K_Y\\1\le i\le d}}
\left|
\left\langle
\frac{\partial\Gamma^\theta}{\partial\theta_i}(z),
\Gamma^\theta(z)-y
\right\rangle
\right|.
}
Then $\mu_r,C_X\in(0,\infty)$. The innovation $(X,U)$ defined in
\eqref{eq:nonlinear-regression-loss-innovation} is $(c_r,q_r)$-regular on
$\cV$ with
\bas{\label{eq:nonlinear-regression-regularity-constants}
c_r
=\frac{(1-e^{-1})\mu_r^3}{8C_X},
\qquad
q_r=\frac{(1-e^{-1})\mu_r}{2C_X}.
}
\end{prop}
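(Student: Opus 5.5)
The plan is to reduce everything to the small-ball criterion \cref{le:regularity-small-ball}. The target constants are reproduced exactly if, uniformly in $\theta\in\cV$ and $i\in\{1,\dots,d\}$, one has $|X^{(i)}(\theta,U)|\le C_X$ almost surely and $\E[|X^{(i)}(\theta,U)|]\ge\mu_r$. Indeed, with $W=|X^{(i)}(\theta,U)|\in[0,C_X]$ and $\E[W]\ge\mu_r$, a Paley--Zygmund/Markov-type bound gives
\[
\E[W]\le \tfrac{\mu_r}{2}+C_X\,\P(W\ge \mu_r/2),
\qquad\text{hence}\qquad
\P(W\ge\mu_r/2)\ge \tfrac{\mu_r}{2C_X}.
\]
Applying \cref{le:regularity-small-ball} with $a=\mu_r/2$ and $p=\mu_r/(2C_X)\le 1$ then yields $c=(1-e^{-1})\mu_r^3/(8C_X)$ and $q=(1-e^{-1})\mu_r/(2C_X)$, as claimed.

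The finiteness of $C_X$ is immediate. By (i) and (iii), the map $(\theta,z,y,i)\mapsto\langle\partial_{\theta_i}\Gamma^\theta(z),\Gamma^\theta(z)-y\rangle$ is continuous on the compact set $\cV\times K_Z\times K_Y\times\{1,\dots,d\}$, so the maximum exists. Positivity of $C_X$ and of $\mu_r$ follows once the lower bound $\E[W]\ge\mu_r>0$ is established, using also $\delta>0$ and $r<\tfrac12$.

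The core step is the lower bound on $\E[W]$. I will do it in two conditional stages. Fix $\theta,i$ and write $a(z)=\partial_{\theta_i}\Gamma^\theta(z)$. Conditionally on $Z=z$ with $a(z)\ne0$, the scalar $\langle a(z),\Gamma^\theta(z)-Y\rangle$ is an affine image of the one-dimensional projection of $Y$ onto $a(z)/|a(z)|$. Since $p_z\le B$ and $Y\in B(0,R)$, integrating over the $(m-1)$-dimensional slices shows that this projection has density at most $B\omega_{m-1}R^{m-1}$. Hence $W$ has conditional density at most $b(z)=B\omega_{m-1}R^{m-1}/|a(z)|$. A density bounded by $b$ puts the most mass near $0$, so by a rearrangement argument
\[
\E[W^{-r}\mid Z=z]\le 2\int_0^{1/(2b(z))} b(z)\,t^{-r}\,\dd t=\frac{(2b(z))^r}{1-r}.
\]
Equivalently, $\E[\,|a(Z)|^r W^{-r}\mid Z]\le K^r/(1-r)$ with $K=2B\omega_{m-1}R^{m-1}$. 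This is the negative-moment bound for the residual.

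Second, I will convert this negative moment into a lower bound for $\E[W]$ by a reverse H\"older argument. Write $|a|^{\varrho}=(|a|^rW^{-r})^{s}\,W^{rs}$ and apply H\"older with exponents $1/s$ and $1/(1-s)$, choosing $s$ so that $rs/(1-s)=1$ (or, more carefully, so that the exponent of $|a|$ matches $r$ after a Jensen step). This gives an inequality of the form $\E[|a|^{\varrho}]\le (K^r/(1-r))^{s}\,\E[W]^{1-s}$. Combining it with $\E[|a(Z)|^{r}]\ge\delta$ from (iv), together with Jensen/Lyapunov to compare different powers of $|a|$, yields $\E[W]\ge \text{const}\cdot\delta^{1/r}/(B\omega_{m-1}R^{m-1})$.

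The main obstacle is arranging the exponents so that the resulting constant is exactly $\mu_r=\frac{\delta^{1/r}}{2B\omega_{m-1}R^{m-1}}\bigl(\frac{1-2r}{1-r}\bigr)^{(1-r)/r}$. The factor $(1-2r)/(1-r)$ and the restriction $r<\tfrac12$ suggest a concrete route. One applies the negative-moment bound with exponent $r/(1-r)$ or with $2r$ in place of $r$; the integrability condition $2r<1$ is precisely what keeps that negative moment finite. One then optimizes the H\"older split. I would first try H\"older on $\E[|a|^r]=\E[(|a|W^{-1})^{r}W^{r}]$ with exponents $(1-r)/r$ and $1/r$ conjugated appropriately, and then adjust until the constant matches. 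A cruder fallback is available if the exact constant proves elusive: a bounded conditional density $b$ gives directly $\E[W\mid Z]\ge 1/(4b(Z))$, hence $\E[W]\ge\E|a(Z)|/(2K)\ge\delta^{1/r}/(2K)$ by Jensen. This already proves regularity with a constant of the same form.
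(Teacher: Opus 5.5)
Your proof is correct, but the step that actually carries it is the one you call a ``cruder fallback'', and it is not cruder. Set $K=2B\omega_{m-1}R^{m-1}$ and $a(z)=\frac{\partial\Gamma^\theta}{\partial\theta_i}(z)$. For $a(z)\neq0$, the slice bound gives $\P(|X^{(i)}(\theta,U)|\le h\mid Z=z)\le Kh/|a(z)|$. Hence $\E[|X^{(i)}(\theta,U)|\mid Z=z]\ge\int_0^{|a(z)|/K}(1-Kh/|a(z)|)\,\dd h=|a(z)|/(2K)$, and this holds trivially if $a(z)=0$. Jensen for $t\mapsto t^r$ then gives $\E[|X^{(i)}(\theta,U)|]\ge\E[|a(Z)|]/(2K)\ge\delta^{1/r}/(2K)$.

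Now write $x=r/(1-r)\in(0,1)$. Then $\bigl(\frac{1-2r}{1-r}\bigr)^{(1-r)/r}=(1-x)^{1/x}<e^{-1}<\frac12$, so $\delta^{1/r}/(2K)>\mu_r$. Your Markov-type step only needs $\E[|X^{(i)}(\theta,U)|]\ge\mu_r$, so this already yields exactly the claimed $(c_r,q_r)$, not merely constants ``of the same form''. One small slip: it is the signed residual $\langle a(z),\Gamma^\theta(z)-Y\rangle$ whose conditional density is at most $b(z)$. The variable $W$ itself has density at most $2b(z)$, which is what your formulas in fact use.

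The paper takes the route your main line was aiming at. Let $\widetilde W=|\langle a(Z)/|a(Z)|,\Gamma^\theta(Z)-Y\rangle|$ be the unit-direction residual. The paper bounds its negative moment of order $r/(1-r)$ by $\frac{1-r}{1-2r}K^{r/(1-r)}$, which is finite precisely because $r<\frac12$. It then applies H\"older with the conjugate pair $1/r$, $1/(1-r)$ to $|a|^r=(|a|\widetilde W)^r\,\widetilde W^{-r}$, and this produces $\mu_r$ exactly.

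Your second split $\E[(|a|W^{-1})^rW^r]$ is this same argument, because $|a|W^{-1}=\widetilde W^{-1}$. The exponents must be $1/(1-r)$ and $1/r$, though, not $(1-r)/r$ and $1/r$, which are not conjugate. Your first split with $s=1/(1+r)$ would not work. It leaves $\E[|a|^{r/(1+r)}]$, and Lyapunov's inequality bounds that from above, not from below, in terms of $\E[|a|^r]$.

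Comparing the two: the paper's reverse-H\"older argument explains where the particular form of $\mu_r$ and the restriction $r<\frac12$ come from. Your direct first-moment argument is shorter, needs no negative moments, and works for every $r\in(0,1]$. It also gives the strictly larger lower bound $\delta^{1/r}/(4B\omega_{m-1}R^{m-1})$ for $\E[|X^{(i)}(\theta,U)|]$, and hence better regularity constants if one wants them.
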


\begin{proof}
Since $r\in(0,1/2)$, one has $r/(1-r)\in(0,1)$. We first prove the uniform
first-moment lower bound that drives the argument. Fix $\theta\in\cV$ and
$i\in\{1,\dots,d\}$. In every normalized derivative below, the quotient is
understood as an arbitrary fixed unit vector in $\R^m$ whenever
$\partial\Gamma^\theta/\partial\theta_i$ vanishes. With this convention the
normalized derivative is measurable, has unit norm, and its product with
$|\partial\Gamma^\theta/\partial\theta_i|$ remains the original derivative.

By
\eqref{eq:nonlinear-regression-compact-support}, the conditional density
$p_z$ vanishes almost everywhere outside $K_Y$ for $\P_Z$-almost every
$z\in K_Z$. Fix such a $z$. After an orthogonal change of coordinates,
Fubini's theorem shows that the conditional density of the residual
projected onto the normalized direction specified above is bounded by
$B\omega_{m-1}R^{m-1}$: the density
$p_z$ is bounded by $B$ according to
\eqref{eq:nonlinear-regression-conditional-density}, and every hyperplane
slice of $K_Y\subseteq B(0,R)$ has $(m-1)$-dimensional Hausdorff measure
at most $\omega_{m-1}R^{m-1}$. Consequently, for every
$h\in(0,\infty)$,
\bas{\label{eq:nonlinear-regression-projected-small-ball}
\P\!\left(
\left|
\left\langle
\frac{\partial\Gamma^\theta/\partial\theta_i(z)}
{|\partial\Gamma^\theta/\partial\theta_i(z)|},
\Gamma^\theta(z)-Y
\right\rangle
\right|\le h
\,\middle|\,Z=z
\right)
\le 2B\omega_{m-1}R^{m-1}h.
}
Letting $h$ decrease to zero in
\eqref{eq:nonlinear-regression-projected-small-ball} shows that the
projected residual is conditionally nonzero almost surely. For the fixed
$\theta$, $i$, and $z$, denote its absolute value by $W$ and set
$C_{\mathrm{pr}}=2B\omega_{m-1}R^{m-1}$. We get  with \eqref{eq:nonlinear-regression-projected-small-ball} that
\bas{\label{eq:nonlinear-regression-projected-negative-moment}
\E\!&\left[
\left|
\left\langle
\frac{\partial\Gamma^\theta/\partial\theta_i(z)}
{|\partial\Gamma^\theta/\partial\theta_i(z)|},
\Gamma^\theta(z)-Y
\right\rangle
\right|^{-r/(1-r)}
\,\middle|\,Z=z
\right]
=\int_0^\infty
\P\bigl(W^{-r/(1-r)}\ge t\,\big|\,Z=z\bigr)\,\dd t\\
&=\int_0^\infty
\P\bigl(W\le t^{-(1-r)/r}\,\big|\,Z=z\bigr)\,\dd t
\le \int_0^{C_{\mathrm{pr}}^{r/(1-r)}}1\,\dd t
+C_{\mathrm{pr}}\int_{C_{\mathrm{pr}}^{r/(1-r)}}^\infty
t^{-(1-r)/r}\,\dd t\\
&=C_{\mathrm{pr}}^{r/(1-r)}
+\frac{r}{1-2r}C_{\mathrm{pr}}^{r/(1-r)}=\frac{1-r}{1-2r}
\left(2B\omega_{m-1}R^{m-1}\right)^{r/(1-r)}.
}
Taking expectations in
\eqref{eq:nonlinear-regression-projected-negative-moment} and using
\eqref{eq:nonlinear-regression-compact-support} give
\bas{\label{eq:nonlinear-regression-projected-negative-moment-unconditional}
\E\!\left[
\left|
\left\langle
\frac{\partial\Gamma^\theta/\partial\theta_i(Z)}
{|\partial\Gamma^\theta/\partial\theta_i(Z)|},
\Gamma^\theta(Z)-Y
\right\rangle
\right|^{-r/(1-r)}
\right]
\le
\frac{1-r}{1-2r}
\left(2B\omega_{m-1}R^{m-1}\right)^{r/(1-r)}.
}

The reverse H\"older inequality used below follows from the ordinary H\"older
inequality with conjugate exponents $1/r$ and $1/(1-r)$. Equations
\eqref{eq:nonlinear-regression-loss-innovation},
\eqref{eq:nonlinear-regression-derivative-moment}, and
\eqref{eq:nonlinear-regression-projected-negative-moment-unconditional}
therefore yield
\bas{\label{eq:nonlinear-regression-first-moment-lower-bound}
\E\!\left[|X^{(i)}(\theta,U)|\right]
&=\E\!\left[
\left|
\frac{\partial\Gamma^\theta}{\partial\theta_i}(Z)
\right|
\left|
\left\langle
\frac{\partial\Gamma^\theta/\partial\theta_i(Z)}
{|\partial\Gamma^\theta/\partial\theta_i(Z)|},
\Gamma^\theta(Z)-Y
\right\rangle
\right|
\right]\\
&\ge
\E\!\left[
\left|
\frac{\partial\Gamma^\theta}{\partial\theta_i}(Z)
\right|^r
\right]^{1/r}
\E\!\left[
\left|
\left\langle
\frac{\partial\Gamma^\theta/\partial\theta_i(Z)}
{|\partial\Gamma^\theta/\partial\theta_i(Z)|},
\Gamma^\theta(Z)-Y
\right\rangle
\right|^{-r/(1-r)}
\right]^{-(1-r)/r}\\
&\ge
\delta^{1/r}
\left[
\frac{1-r}{1-2r}
\left(2B\omega_{m-1}R^{m-1}\right)^{r/(1-r)}
\right]^{-(1-r)/r}
=\mu_r,
}
where the last identity follows from
\eqref{eq:nonlinear-regression-regularity-data-constants}. Since $\theta$
and $i$ were arbitrary, \eqref{eq:nonlinear-regression-first-moment-lower-bound}
holds uniformly in $\theta\in\cV$ and $i\in\{1,\dots,d\}$.

The continuity assumptions and the compactness of
$\cV\times K_Z\times K_Y$ imply that the maximum defining $C_X$ in
\eqref{eq:nonlinear-regression-regularity-data-constants} exists and is
finite. Moreover, by \eqref{eq:nonlinear-regression-compact-support} and
\eqref{eq:nonlinear-regression-loss-innovation}, for every $\theta\in\cV$
and $i\in\{1,\dots,d\}$ one has, almost surely,
\bas{\label{eq:nonlinear-regression-uniform-upper-bound}
\left|X^{(i)}(\theta,U)\right|\le C_X.
}

Equations \eqref{eq:nonlinear-regression-uniform-upper-bound} and
\eqref{eq:nonlinear-regression-first-moment-lower-bound} first show that
$C_X\ge\mu_r>0$. They also imply that, for every $\theta\in\cV$ and
$i\in\{1,\dots,d\}$,
\bas{\label{eq:nonlinear-regression-small-ball}
\mu_r
&\le\E\!\left[|X^{(i)}(\theta,U)|\right]\le\frac{\mu_r}{2}
+C_X\P\!\left(
|X^{(i)}(\theta,U)|\ge\frac{\mu_r}{2}
\right).
}
It follows from \eqref{eq:nonlinear-regression-small-ball} that, for every
$\theta\in\cV$ and $i\in\{1,\dots,d\}$,
\bas{\label{eq:nonlinear-regression-small-ball-probability}
\P\!\left(
|X^{(i)}(\theta,U)|\ge\frac{\mu_r}{2}
\right)
\ge\frac{\mu_r}{2C_X}.
}
Since $\mu_r/2\in(0,\infty)$ and $\mu_r/(2C_X)\in(0,1]$, equation
\eqref{eq:nonlinear-regression-small-ball-probability} and
\cref{le:regularity-small-ball} apply with threshold $\mu_r/2$ and
probability $\mu_r/(2C_X)$. The constants in
\eqref{eq:regularity-small-ball-constants} are then
\bas{
\frac{\mu_r}{2C_X}(1-e^{-1})\left(\frac{\mu_r}{2}\right)^2
&=\frac{(1-e^{-1})\mu_r^3}{8C_X}=c_r \text{ \ \ \ and \ \ \ }\frac{\mu_r}{2C_X}(1-e^{-1})
=\frac{(1-e^{-1})\mu_r}{2C_X}=q_r,
}
where the last identities follow from
\eqref{eq:nonlinear-regression-regularity-constants}. This proves the claim.
\end{proof}

\begin{proof}[Proof of \cref{thm:linear-least-squares}]
\smallskip
\noindent\emph{Minibatch innovation and regularity.}
For every $n\in\N$, set
$\mathbf Y_n=(Y_{n,1},\dots,Y_{n,M})$ and, for every
$i\in\{1,\dots,d\}$, set
\bas{\label{eq:linear-least-squares-minibatch-gradient}
G_n^{(i)}
&=\frac1M\sum_{r=1}^M
(\nabla_{\theta_i}L)(\theta_{n-1},Y_{n,r}).
}
For
$\mathbf y=(y_1,\dots,y_M)\in(\R^m)^M$, set
$\overline y_M=M^{-1}\sum_{r=1}^M y_r$ and define the innovation
\bas{\label{eq:linear-least-squares-minibatch-innovation}
X_M(\theta,\mathbf y)
&=-\frac1M\sum_{r=1}^M
\grad_\theta L(\theta,y_r)
=2A^\mathsf{T}(\overline y_M-A\theta).
}
If $v_n=w_n/(1-\beta^n)$ for $n\in\N$ and $v_0=0$, then
\eqref{eq:linear-least-squares-rmsprop} is precisely the \RMSprop\ algorithm
driven by $(X_M,\mathbf Y_1)$ with parameter tuple
$(\beta,\eps,(\gamma_n)_{n\in\N},\theta_0)$.

Let $p$ denote a bounded Lebesgue density of $Y_{1,1}$.
The average
$\overline Y_{1,M}=M^{-1}\sum_{r=1}^M Y_{1,r}$ is supported by
$\operatorname{conv}(K)$. Its density is
\bas{
p_M(y)=M^m p^{*M}(My).
}
Young's convolution inequality gives
$\|p_M\|_{L^\infty}\le M^m\|p\|_{L^\infty}<\infty$. Fix
$r\in(0,1/2)$ and set
\bas{\label{eq:linear-least-squares-regularity-parameters}
V_\kappa&=\{\theta\in\R^d:|\theta|\le\kappa\},
\qquad K_Y=\operatorname{conv}(K),
\qquad B=M^m\|p\|_{L^\infty},\\
\delta&=\min_{1\le i\le d}|Ae_i|^r>0.
}
Apply \cref{prop:nonlinear-regression-regularity} with
$\mathcal Z=K_Z=\{0\}\subseteq\R$, $Z=0$,
$\Gamma^\theta(0)=A\theta$, response $\overline Y_{1,M}$, and the parameters
in \eqref{eq:linear-least-squares-regularity-parameters}. Its derivative
condition holds because
\bas{\label{eq:linear-least-squares-derivative-nondegeneracy}
\inf_{\substack{\theta\in V_\kappa\\1\le i\le d}}
\E\!\left[
\left|
\frac{\partial\Gamma^\theta}{\partial\theta_i}(Z)
\right|^r
\right]
=\delta.
}
\cref{prop:nonlinear-regression-regularity} and
\eqref{eq:linear-least-squares-derivative-nondegeneracy} establish regularity
and boundedness for the innovation in
\eqref{eq:linear-least-squares-minibatch-innovation} without the factor $2$.
Scaling by $2$ changes regularity parameters $(c,q)$ into $(4c,q)$ and doubles
the uniform bound. Hence there exist
$c_{\mathrm{lin}},q_{\mathrm{lin}},C_X\in(0,\infty)$ such that
$(X_M,\mathbf Y_1)$ is $(c_{\mathrm{lin}},q_{\mathrm{lin}})$-regular on
$V_\kappa$ and, for every $\theta\in V_\kappa$ and
$i\in\{1,\dots,d\}$,
\bas{\label{eq:linear-least-squares-uniform-bound}
|X_M^{(i)}(\theta,\mathbf Y_1)|\le C_X
\quad\text{almost surely}.
}
Since $A$ has no zero column, it is nonzero. Define
$\lambda_*$ as the smallest positive eigenvalue of
$A^\mathsf{T}A$.
Since an upper bound remains valid when its constant is increased, we may and
do assume that
\bas{\label{eq:linear-least-squares-enlarged-bound}
C_X\ge4\lambda_*\Gamma.
}
Choose
$\beta_0\in(\max\{1/2,e^{-q_{\mathrm{lin}}/3}\},1)$ and set
$\rho=q_{\mathrm{lin}}+3\ln\beta_0>0$.

\smallskip
\noindent\emph{Objective geometry and uniform constants.}
Let $\theta_*\in\R^d$ solve the normal equation
$A^\mathsf{T}A\theta_*=A^\mathsf{T}\E[Y_{1,1}]$. Such a solution exists
because
$\operatorname{range}(A^\mathsf{T})
=\operatorname{range}(A^\mathsf{T}A)$, and it minimizes the population risk.
Then
\bas{\label{eq:linear-least-squares-objective}
F(\theta)
&=|A(\theta-\theta_*)|^2.
}
Equations
\eqref{eq:linear-least-squares-minibatch-innovation} and
\eqref{eq:linear-least-squares-objective} show that $f_{X_M}=-\grad F$ and,
by the spectral theorem, that, for every $\theta\in\R^d$,
\bas{\label{eq:linear-least-squares-lojasiewicz}
F(\theta)
\le\frac1{4\lambda_*}|f_{X_M}(\theta)|^2.
}
Moreover, $\grad F$ is globally
$2\|A^\mathsf{T}A\|$-Lipschitz continuous. Set
\bas{
C_{\mathrm{Loj}}=\frac1{4\lambda_*},
\qquad
\eta=\frac1{2C_{\mathrm{Loj}}(C_X+1)}.
}
Thus $\eta$ depends only on the fixed problem data. Now fix
$\mathfrak C\in(0,\infty)$, set
$C_{\mathrm{dec}}=\mathfrak C$, and let
$C_0,C_1\in(0,\infty)$ be the constants furnished by
\cref{thm:combined-early-late-numerical-time} for
\bas{
c=c_{\mathrm{lin}},\qquad q=q_{\mathrm{lin}},\qquad
L_f=2\|A^\mathsf{T}A\|,
}
and the fixed values of
$\rho,C_X,C_{\mathrm{Loj}},C_{\mathrm{dec}}$.
In particular, $C_0$ and $C_1$ are independent of $\theta_0$, the particular
step-size sequence, $\beta$, and $\eps$.

Fix such a step-size sequence, $\beta\in[1/2,1)$, $\eps\in[0,1]$, and
$\theta_0\in\R^d$, and let $(\theta_n)_{n\in\N_0}$ be the process in the
statement. Set $\tau=\inf\{j\in\N_0:|\theta_j|>\kappa\}$ and, for every
$n\in\N_0$, set
\bas{\label{eq:linear-least-squares-stopping-events}
\mathcal I_n=\{\tau\ge n\}
=\bigcap_{j=0}^{n-1}\{|\theta_j|\le\kappa\}.
}
Here $\mathcal I_0=\Omega$ by the empty-intersection convention.

The case $n=1$ is immediate. Indeed, by
\eqref{eq:linear-least-squares-minibatch-gradient},
$w_1^{(i)}=(1-\beta)|G_1^{(i)}|^2$ and
\eqref{eq:zero-normalization-convention} give
$|\theta_1-\theta_0|\le\sqrt d\,\gamma_1$. Thus
\eqref{eq:linear-least-squares-objective} and the triangle inequality give
\bas{\label{eq:linear-least-squares-first-step}
F(\theta_1)^{1/2}
&\le F(\theta_0)^{1/2}
+\|A^\mathsf{T}A\|^{1/2}|\theta_1-\theta_0|
\le F(\theta_0)^{1/2}
+\sqrt{d\|A^\mathsf{T}A\|}\,\gamma_1.
}
Consequently, \eqref{eq:linear-least-squares-first-step} yields
\bas{\label{eq:linear-least-squares-initial-indices}
\E\bigl[F(\theta_1)\1_{\mathcal I_1}\bigr]
&\le
\left(
F(\theta_0)^{1/2}
+\sqrt{d\|A^\mathsf{T}A\|}\,\gamma_1
\right)^2.
}

\smallskip
\noindent\emph{The regime $\beta\in[\beta_0,1)$.}
For every $n\ge2$, the range of the step sizes and
\eqref{eq:linear-least-squares-enlarged-bound} give
\bas{\label{eq:linear-least-squares-main-step-smallness}
\gamma_n
\le\Gamma
\le C_{\mathrm{Loj}}C_X
\le C_{\mathrm{Loj}}(C_X+\eps).
}
Moreover, the assumed step-size comparison implies, for every
$j,n\in\N$ with $j\le n$, that
\bas{
\gamma_j
&\le C_{\mathrm{dec}}\gamma_n
\exp\left(\frac\eta2\sum_{k=j+1}^n\gamma_k\right).
}
If $\beta\in[\beta_0,1)$, the assumptions of
\cref{thm:combined-early-late-numerical-time} follow from
\eqref{eq:linear-least-squares-uniform-bound},
\eqref{eq:linear-least-squares-lojasiewicz},
the preceding step-size bounds, and
\eqref{eq:linear-least-squares-main-step-smallness}; moreover,
$q_{\mathrm{lin}}+3\ln\beta\ge\rho$. Thus
\cref{thm:combined-early-late-numerical-time} and
\eqref{eq:linear-least-squares-stopping-events} show that, for every $n\ge2$,
\bas{\label{eq:linear-least-squares-large-beta-bound}
\E\bigl[F(\theta_n)\1_{\mathcal I_n}\bigr]
&\le
\left(
F(\theta_0)^{1/2}
+\sqrt{d\|A^\mathsf{T}A\|}\,\gamma_1
\right)^2
\exp\left(-\eta\sum_{k=2}^n\gamma_k\right)
+C_0\gamma_n+C_1(1-\beta)^2.
}

\smallskip
\noindent\emph{The regime $\beta\in[1/2,\beta_0]$.}
The recursion for $w_n^{(i)}$ and
\eqref{eq:linear-least-squares-minibatch-gradient} imply
$w_n^{(i)}\ge(1-\beta)|G_n^{(i)}|^2$. Hence, for every $n\in\N$ and
$i\in\{1,\dots,d\}$,
\bas{\label{eq:linear-least-squares-self-normalization}
\left|
\bigl[\eps+(1-\beta^n)^{-1/2}(w_n^{(i)})^{1/2}\bigr]^{-1}
G_n^{(i)}
\right|
&\le\sqrt{\frac{1-\beta^n}{1-\beta}}
\le\frac1{\sqrt{1-\beta_0}}.
}
The first inequality in
\eqref{eq:linear-least-squares-self-normalization} also holds when its
denominator vanishes, by the convention in
\eqref{eq:zero-normalization-convention}.

Set
\bas{
R=\kappa+\frac{\sqrt d\,\mathfrak C}{\sqrt{1-\beta_0}},
\qquad
C_F=\sup_{|\theta|\le R}F(\theta)<\infty.
}
For every $n\ge2$, one has $|\theta_{n-1}|\le\kappa$ on $\mathcal I_n$.
Consequently, the bound $\gamma_n\le\mathfrak C$ and
\eqref{eq:linear-least-squares-self-normalization} imply
\bas{\label{eq:linear-least-squares-small-beta-bound}
|\theta_n|
&\le
|\theta_{n-1}|
+\frac{\sqrt d\,\gamma_n}{\sqrt{1-\beta_0}}
\le R
\qquad\text{on }\mathcal I_n.
}
Since $(1-\beta)^2\ge(1-\beta_0)^2$, it follows from
\eqref{eq:linear-least-squares-small-beta-bound} that, for every $n\ge2$,
\bas{\label{eq:linear-least-squares-small-beta-expectation}
\E\bigl[F(\theta_n)\1_{\mathcal I_n}\bigr]
\le C_F
\le\frac{C_F}{(1-\beta_0)^2}(1-\beta)^2.
}

\smallskip
\noindent\emph{Conclusion.}
The first term on the right-hand side of
\eqref{eq:linear-least-squares-rate} is nonnegative. Therefore,
\eqref{eq:linear-least-squares-initial-indices},
\eqref{eq:linear-least-squares-large-beta-bound}, and
\eqref{eq:linear-least-squares-small-beta-expectation} prove
\eqref{eq:linear-least-squares-rate} for every $n\in\N$ and
$\beta\in[1/2,1)$ with
\bas{
c=\max\left\{
C_0,
C_1,
\frac{C_F}{(1-\beta_0)^2}
\right\}.
}
This constant may depend on $\mathfrak C$, but is
independent of $\theta_0$, the particular step-size sequence, $\beta$, and
$\eps$, as required.
\end{proof}

\subsubsection*{Acknowledgements}

This work has been partially funded by the National Science Foundation of China (NSFC) under grant number W2531010.
This work has also been partially funded by the Deutsche Forschungsgemeinschaft (DFG, German Research Foundation)
under Germany's Excellence Strategy EXC 2044-390685587, Mathematics M\"{u}nster: Dynamics-Geometry-Structure.

\bibliographystyle{plain}
\bibliography{bibfile}

@article {MR4949951,
    AUTHOR = {Jentzen, Arnulf and Riekert, Adrian},
     TITLE = {Non-convergence to global minimizers for {A}dam and stochastic
              gradient descent optimization and constructions of local
              minimizers in the training of artificial neural networks},
   JOURNAL = {SIAM/ASA J. Uncertain. Quantif.},
  FJOURNAL = {SIAM/ASA Journal on Uncertainty Quantification},
    VOLUME = {13},
      YEAR = {2025},
    NUMBER = {3},
     PAGES = {1294--1333},
      ISSN = {2166-2525},
   MRCLASS = {68T07 (60G99 65K10 90C15)},
  MRNUMBER = {4949951},
       DOI = {10.1137/24M1639464},
       URL = {https://doi.org/10.1137/24M1639464},
}

@article {MR5104936,
    AUTHOR = {Do, Thang and Hannibal, Sonja and Jentzen, Arnulf},
     TITLE = {Non-convergence to global minimizers in data driven supervised
              deep learning: {A}dam and stochastic gradient descent
              optimization provably fail to converge to global minimizers in
              the training of deep neural networks with {R}e{LU} activation},
   JOURNAL = {J. Math. Anal. Appl.},
  FJOURNAL = {Journal of Mathematical Analysis and Applications},
    VOLUME = {564},
      YEAR = {2026},
    NUMBER = {2},
     PAGES = {Paper No. 130724},
      ISSN = {0022-247X,1096-0813},
   MRCLASS = {68T07 (90C52)},
  MRNUMBER = {5104936},
       DOI = {10.1016/j.jmaa.2026.130724},
       URL = {https://doi.org/10.1016/j.jmaa.2026.130724},
}

@article{Defossez2022,
      title = {{A Simple Convergence Proof of Adam and Adagrad}},
     author = {D{\'e}fossez, Alexandre and Bottou, Leon and Bach, Francis and Usunier, Nicolas},
    journal = {Transactions on Machine Learning Research},
       issn = {2835-8856},
       year = {2022},
        url = {http://leon.bottou.org/papers/defossez-2022}
}

@article{MaWuE2020_arXiv_qualitative_behavior_RMSprop,
         title = {{A Qualitative Study of the Dynamic Behavior for Adaptive Gradient Algorithms}},
        author = {Chao Ma and Lei Wu and Weinan E},
          year = {2020},
        eprint = {arXiv:2009.06125},
 archivePrefix = {arXiv},
  primaryClass = {math.OC},
       journal = {\href{https://arxiv.org/abs/2009.06125}{arXiv:2009.06125}}
}

@article{YuChenFeng2026_arXiv_Adam-SHANG,
         title = {{Adam-SHANG: A Convergent Adam-Type Method for Stochastic Smooth Convex Optimization}},
        author = {Yaxin Yu and Long Chen and Minfu Feng},
          year = {2026},
        eprint = {arXiv:2605.12878},
 archivePrefix = {arXiv},
  primaryClass = {math.OC},
       journal = {\href{https://arxiv.org/abs/2605.12878}{arXiv:2605.12878}}
}

@article{ZouShen2019,
         title = {{A Sufficient Condition for Convergences of Adam and RMSProp}},
        author = {Zou, Fangyu and Shen, Li and Jie, Zequn and Zhang, Weizhong and Liu, Wei},
          year = {2018},
        eprint = {arXiv:1811.09358},
 archivePrefix = {arXiv},
  primaryClass = {cs.LG},
       journal = {\href{https://arxiv.org/abs/1811.09358}{arXiv:1811.09358}}
}

@article{GodichonBaggioni2023,
  title           = {{Non asymptotic analysis of Adaptive stochastic gradient algorithms and applications}},
  author        = {Godichon-Baggioni, Antoine and Tarrago, Pierre},
  eprint        = {arXiv:2303.01370},
  archivePrefix = {arXiv},
  primaryClass  = {math.OC},
  journal       = {\href{https://arxiv.org/abs/2303.01370}{arXiv:2303.01370}},
  year          = {2023}
}

@inproceedings{ShiLiHongSun2021,
         title = {{RMSprop Converges with Proper Hyper-Parameter}},
        author = {Naichen Shi and Dawei Li and Mingyi Hong and Ruoyu Sun},
          year = {2021},
     booktitle = {International Conference on Learning Representations (ICLR)},
           url = {https://openreview.net/forum?id=3UDSdyIcBDA}
}

@article{LiuXuZhangMandic2024,
         title = {{On hyper-parameter selection for guaranteed convergence of RMSProp}},
        author = {Jinlan Liu and Dongpo Xu and Huisheng Zhang and Danilo Mandic},
          year = {2024},
       journal = {Cognitive Neurodynamics},
        volume = {18},
        number = {6},
         pages = {3227--3237},
           doi = {10.1007/s11571-022-09845-8},
           url = {https://doi.org/10.1007/s11571-022-09845-8}
}

@article{ZhangZhouZou2025,
         title = {{Convergence Guarantees for RMSProp and Adam in Generalized-smooth Non-convex Optimization with Affine Noise Variance}},
        author = {Qi Zhang and Yi Zhou and Shaofeng Zou},
          year = {2025},
       journal = {Transactions on Machine Learning Research},
           url = {https://openreview.net/forum?id=ZdMIXltJzK}
}

@article{XuZhangZhangMandic2021,
         title = {{Convergence of the RMSProp deep learning method with penalty for nonconvex optimization}},
        author = {Dongpo Xu and Shengdong Zhang and Huisheng Zhang and Danilo P. Mandic},
          year = {2021},
       journal = {Neural Networks},
        volume = {139},
         pages = {17--23},
           doi = {10.1016/j.neunet.2021.02.011},
           url = {https://doi.org/10.1016/j.neunet.2021.02.011}
}

@article{BensaidPoetteTurpault2024_arXiv,
         title = {{Convergence of the Iterates for Momentum and RMSProp for Local Smooth Functions: Adaptation is the Key}},
        author = {Bilel Bensaid and Ga{\"e}l Po{\"e}tte and Rodolphe Turpault},
          year = {2024},
        eprint = {arXiv:2407.15471},
 archivePrefix = {arXiv},
  primaryClass = {math.OC},
       journal = {\href{https://arxiv.org/abs/2407.15471}{arXiv:2407.15471}}
}

@article{DimitrieskiHoneckerSchererEbenbauer2026_arXiv,
         title = {{Global Stability and Step Size Robustness of RMSProp}},
        author = {Naum Dimitrieski and Maria Christine Honecker and Carsten Scherer and Christian Ebenbauer},
          year = {2026},
        eprint = {arXiv:2603.15823},
 archivePrefix = {arXiv},
  primaryClass = {math.OC},
       journal = {\href{https://arxiv.org/abs/2603.15823}{arXiv:2603.15823}}
}

@article{DeMukherjeeUllah2018_arXiv,
         title = {{Convergence guarantees for RMSProp and ADAM in non-convex optimization and an empirical comparison to Nesterov acceleration}},
        author = {Soham De and Anirbit Mukherjee and Enayat Ullah},
          year = {2018},
        eprint = {arXiv:1807.06766},
 archivePrefix = {arXiv},
  primaryClass = {cs.LG},
       journal = {\href{https://arxiv.org/abs/1807.06766}{arXiv:1807.06766}}
}

@article{Dereichetal2025_Adam_symmetry_theorem_arXiv,
         title = {{Adam symmetry theorem: characterization of the convergence of the stochastic Adam optimizer}},
        author = {Steffen Dereich and Thang Do and Arnulf Jentzen and Philippe von Wurstemberger},
          year = {2025},
        eprint = {arXiv:2511.06675},
 archivePrefix = {arXiv},
  primaryClass = {math.OC},
       journal = {\href{https://arxiv.org/abs/2511.06675}{arXiv:2511.06675}}
}

@article{DereichGraeberJentzen2024arXiv_non_convergence,
         title = {{Non-convergence of Adam and other adaptive stochastic gradient descent optimization methods for non-vanishing learning rates}},
        author = {Steffen Dereich and Robin Graeber and Arnulf Jentzen},
          year = {2024},
        eprint = {arXiv:2407.08100},
 archivePrefix = {arXiv},
  primaryClass = {cs.LG},
       journal = {\href{https://arxiv.org/abs/2407.08100}{arXiv:2407.08100}}
}

@article{MukkamalaHein2017_arXiv,
         title = {{Variants of RMSProp and Adagrad with Logarithmic Regret Bounds}},
        author = {Mahesh Chandra Mukkamala and Matthias Hein},
          year = {2017},
        eprint = {arXiv:1706.05507},
 archivePrefix = {arXiv},
  primaryClass = {cs.LG},
       journal = {\href{https://arxiv.org/abs/1706.05507}{arXiv:1706.05507}}
}

@article{HeilmanMohanty2026_arXiv_non_convergence_Adam,
         title = {{On the Convergence of Adam, Revisited}},
        author = {Steven Heilman and Sampad Mohanty},
          year = {2026},
        eprint = {arXiv:2607.03519},
 archivePrefix = {arXiv},
  primaryClass = {cs.LG},
       journal = {\href{https://arxiv.org/abs/2607.03519}{arXiv:2607.03519}}
}

@article{DoJentzenRiekert2025_arXiv,
         title = {{Non-convergence to the optimal risk for Adam and stochastic gradient descent optimization in the training of deep neural networks}},
        author = {Thang Do and Arnulf Jentzen and Adrian Riekert},
          year = {2025},
        eprint = {arXiv:2503.01660},
 archivePrefix = {arXiv},
  primaryClass = {math.OC},
       journal = {\href{https://arxiv.org/abs/2503.01660}{arXiv:2503.01660}}
}

@article{IbragimovJentzen2026arXiv,
         title = {{Unified convergence analysis for gradient descent optimization methods in the training of deep neural networks}},
        author = {Shokhrukh Ibragimov and Arnulf Jentzen},
          year = {2026},
        eprint = {arXiv:2607.04233},
 archivePrefix = {arXiv},
  primaryClass = {math.OC},
       journal = {\href{https://arxiv.org/abs/2607.04233}{arXiv:2607.04233}}
}

@article{DereichAdamconvergence2024,
         title = {{Convergence rates for the Adam optimizer}},
        author = {Steffen Dereich and Arnulf Jentzen},
          year = {2024},
        eprint = {arXiv:2407.21078},
 archivePrefix = {arXiv},
  primaryClass = {math.OC},
       journal = {\href{https://arxiv.org/abs/2407.21078}{arXiv:2407.21078}}
}

@article{Ruder2016arXiv,
         title = {{An overview of gradient descent optimization algorithms}},
        author = {Sebastian Ruder},
          year = {2016},
        eprint = {arXiv:1609.04747},
 archivePrefix = {arXiv},
  primaryClass = {cs.LG},
       journal = {\href{https://arxiv.org/abs/1609.04747}{arXiv:1609.04747}}
}

@article{DereichGraeberJentzenRiekert2026_arXiv,
         title = {{Asymptotic stability properties and a priori bounds for Adam and other gradient descent optimization methods}},
        author = {Steffen Dereich and Robin Graeber and Arnulf Jentzen and Adrian Riekert},
          year = {2026},
        eprint = {arXiv:2509.10476},
 archivePrefix = {arXiv},
  primaryClass = {math.OC},
       journal = {\href{https://arxiv.org/abs/2509.10476}{arXiv:2509.10476}}
}

@article{DereichDoJentzen2026arXiv,
         title = {{Uniform a priori bounds and error analysis for the Adam stochastic gradient descent optimization method}},
        author = {Steffen Dereich and Thang Do and Arnulf Jentzen},
          year = {2026},
        eprint = {arXiv:2603.18899},
 archivePrefix = {arXiv},
  primaryClass = {cs.LG},
       journal = {\href{https://arxiv.org/abs/2603.18899}{arXiv:2603.18899}}
}

@article{KingmaBa2014_Adam,
         title = {{Adam: A Method for Stochastic Optimization}},
        author = {Diederik P. Kingma and Jimmy Ba},
          year = {2014},
        eprint = {arXiv:1412.6980},
 archivePrefix = {arXiv},
  primaryClass = {cs.LG},
       journal = {\href{https://arxiv.org/abs/1412.6980}{arXiv:1412.6980}}
}

@article{LoshchilovHutter2017_arXiv,
         title = {{Decoupled Weight Decay Regularization}},
        author = {Ilya Loshchilov and Frank Hutter},
          year = {2017},
        eprint = {arXiv:1711.05101},
 archivePrefix = {arXiv},
  primaryClass = {cs.LG},
       journal = {\href{https://arxiv.org/abs/1711.05101}{arXiv:1711.05101}}
}

@article {MR4055054,
    AUTHOR = {Jentzen, Arnulf and von Wurstemberger, Philippe},
     TITLE = {{Lower error bounds for the stochastic gradient descent optimization algorithm: sharp convergence rates for slowly and fast decaying learning rates}},
   JOURNAL = {J. Complexity},
  FJOURNAL = {Journal of Complexity},
    VOLUME = {57},
      YEAR = {2020},
     PAGES = {101438, 16},
      ISSN = {0885-064X,1090-2708},
   MRCLASS = {68T05 (68Q17)},
  MRNUMBER = {4055054},
       DOI = {10.1016/j.jco.2019.101438},
       URL = {https://doi.org/10.1016/j.jco.2019.101438},
}

@article{Wangetal2023_arXiv,
         title = {{Closing the Gap Between the Upper Bound and the Lower Bound of Adam's Iteration Complexity}},
        author = {Bohan Wang and Jingwen Fu and Huishuai Zhang and Nanning Zheng and Wei Chen},
          year = {2023},
        eprint = {arXiv:2310.17998},
 archivePrefix = {arXiv},
  primaryClass = {cs.LG},
       journal = {\href{https://arxiv.org/abs/2310.17998}{arXiv:2310.17998}},
}

@article{JentzenBookDeepLearning2023,
         title = {{Mathematical Introduction to Deep Learning: Methods, Implementations, and Theory}},
        author = {Jentzen, Arnulf and Kuckuck, Benno and von Wurstemberger, Philippe},
          year = {2023},
        eprint = {arXiv:2310.20360},
 archivePrefix = {arXiv},
  primaryClass = {cs.LG},
       journal = {\href{https://arxiv.org/abs/2310.20360}{arXiv:2310.20360}},
}

@article{Zhangetal2022_arXiv,
         title = {{Adam Can Converge Without Any Modification On Update Rules}},
        author = {Yushun Zhang and Congliang Chen and Naichen Shi and Ruoyu Sun and Zhi-Quan Luo},
          year = {2022},
        eprint = {arXiv:2208.09632},
 archivePrefix = {arXiv},
  primaryClass = {cs.LG},
       journal = {\href{https://arxiv.org/abs/2208.09632}{arXiv:2208.09632}},
}

@article {BarakatBianchi2021_MR4199255,
    AUTHOR = {Barakat, Anas and Bianchi, Pascal},
     TITLE = {Convergence and dynamical behavior of the {A}dam algorithm for
              nonconvex stochastic optimization},
   JOURNAL = {SIAM J. Optim.},
  FJOURNAL = {SIAM Journal on Optimization},
    VOLUME = {31},
      YEAR = {2021},
    NUMBER = {1},
     PAGES = {244--274},
      ISSN = {1052-6234,1095-7189},
   MRCLASS = {90C15 (34A12 34D05 37C60 62L20 65K05)},
  MRNUMBER = {4199255},
MRREVIEWER = {Shih-Kang\ Chao},
       DOI = {10.1137/19M1263443},
       URL = {https://doi.org/10.1137/19M1263443},
}

@article{TielemanHinton2012,
         title = {{Lecture 6.5---RMSProp: Divide the gradient by a running average of its recent magnitude}},
        author = {Tijmen Tieleman and Geoffrey Hinton},
          year = {2012},
       journal = {COURSERA: Neural Networks for Machine Learning},
        volume = {4},
        number = {2},
         pages = {26--31},
           url = {https://www.cs.toronto.edu/~tijmen/csc321/slides/lecture_slides_lec6.pdf}
}

@article{GadatGavra2022_MR4577667,
    AUTHOR = {Gadat, S\'ebastien and Gavra, Ioana},
     TITLE = {Asymptotic study of stochastic adaptive algorithms in
              non-convex landscape},
   JOURNAL = {J. Mach. Learn. Res.},
  FJOURNAL = {Journal of Machine Learning Research (JMLR)},
    VOLUME = {23},
      YEAR = {2022},
     PAGES = {Paper No. [228], 54},
      ISSN = {1532-4435,1533-7928},
   MRCLASS = {90C15 (65C20 65K10)},
  MRNUMBER = {4577667},
}

@article{ZhouChenCaoYangGu2024,
         title = {{On the Convergence of Adaptive Gradient Methods for Nonconvex Optimization}},
        author = {Dongruo Zhou and Jinghui Chen and Yuan Cao and Ziyan Yang and Quanquan Gu},
          year = {2024},
       journal = {Transactions on Machine Learning Research},
           url = {https://openreview.net/forum?id=Gh0cxhbz3c}
}

@inproceedings{ZaheerReddiSachanKaleKumar2018,
         title = {{Adaptive Methods for Nonconvex Optimization}},
        author = {Manzil Zaheer and Sashank Reddi and Devendra Sachan and Satyen Kale and Sanjiv Kumar},
          year = {2018},
     booktitle = {Advances in Neural Information Processing Systems},
        volume = {31},
         pages = {},
           url = {https://proceedings.neurips.cc/paper/2018/hash/90365351ccc7437a1309dc64e4db32a3-Abstract.html}
}

@article{JinWang2026_arXiv,
         title = {{Asymptotic Convergence and Stability of Adaptive Gradient Methods in Smooth Non-convex Optimization}},
        author = {Ruinan Jin and Xiaoyu Wang},
          year = {2026},
        eprint = {arXiv:2601.01853},
 archivePrefix = {arXiv},
  primaryClass = {math.OC},
       journal = {\href{https://arxiv.org/abs/2601.01853}{arXiv:2601.01853}}
}

@article{MalladiLyuPanigrahiArora2022_arXiv,
         title = {{On the SDEs and Scaling Rules for Adaptive Gradient Algorithms}},
        author = {Sadhika Malladi and Kaifeng Lyu and Abhishek Panigrahi and Sanjeev Arora},
          year = {2022},
        eprint = {arXiv:2205.10287},
 archivePrefix = {arXiv},
  primaryClass = {cs.LG},
       journal = {\href{https://arxiv.org/abs/2205.10287}{arXiv:2205.10287}}
}

@article {MR2825422,
    AUTHOR = {Duchi, John and Hazan, Elad and Singer, Yoram},
     TITLE = {Adaptive subgradient methods for online learning and
              stochastic optimization},
   JOURNAL = {J. Mach. Learn. Res.},
  FJOURNAL = {Journal of Machine Learning Research (JMLR)},
    VOLUME = {12},
      YEAR = {2011},
     PAGES = {2121--2159},
      ISSN = {1532-4435,1533-7928},
   MRCLASS = {68T05 (62G08 90C15 90C25)},
  MRNUMBER = {2825422},
}

@article{LiRakhlinJadbabaie2024_arXiv,
  title={{Convergence of Adam Under Relaxed Assumptions}},
  author={Haochuan Li and Alexander Rakhlin and Ali Jadbabaie},
  eprint = {arXiv:2304.13972},
  archivePrefix = {arXiv},
  primaryClass = {math.OC},
  journal={\href{https://arxiv.org/abs/2304.13972}{arXiv:2304.13972}},
  year={2024}
}

@article{ReddiKale2019,
  title={{On the Convergence of Adam and Beyond}},
  author={Reddi, Sashank J. and Kale, Satyen and Kumar, Sanjiv},
  eprint = {arXiv:1904.09237},
  archivePrefix = {arXiv},
  primaryClass = {cs.LG},
  journal={\href{https://arxiv.org/abs/1904.09237}{arXiv:1904.09237}},
  year={2019}
}

@inproceedings{ChenLiuSunHong2019,
         title = {{On the Convergence of A Class of Adam-Type Algorithms for Non-Convex Optimization}},
        author = {Xiangyi Chen and Sijia Liu and Ruoyu Sun and Mingyi Hong},
          year = {2019},
     booktitle = {International Conference on Learning Representations (ICLR)},
           url = {https://openreview.net/forum?id=H1x-x309tm}
}

@inproceedings{HongLin2024_Adam,
         title = {{On Convergence of Adam for Stochastic Optimization under Relaxed Assumptions}},
        author = {Yusu Hong and Junhong Lin},
          year = {2024},
     booktitle = {Advances in Neural Information Processing Systems},
        volume = {37},
         pages = {10827--10877},
           doi = {10.52202/079017-0346},
           url = {https://proceedings.neurips.cc/paper_files/paper/2024/hash/14bb27f680bee45d83bc769738e7f9b5-Abstract-Conference.html}
}

@article{DereichJentzenRiekert2025_arXiv,
         title = {{Sharp higher order convergence rates for the Adam optimizer}},
        author = {Steffen Dereich and Arnulf Jentzen and Adrian Riekert},
          year = {2025},
        eprint = {arXiv:2504.19426},
 archivePrefix = {arXiv},
  primaryClass = {math.OC},
       journal = {\href{https://arxiv.org/abs/2504.19426}{arXiv:2504.19426}}
}
\end{document}